\documentclass[letterpaper]{article} % DO NOT CHANGE THIS
\usepackage[preprint]{aaai2027}  % DO NOT CHANGE THIS
\usepackage[hyphens]{url}  % DO NOT CHANGE THIS
\usepackage{graphicx} % DO NOT CHANGE THIS
\usepackage{natbib}  % DO NOT CHANGE THIS AND DO NOT ADD ANY OPTIONS TO IT
\usepackage{caption} % DO NOT CHANGE THIS AND DO NOT ADD ANY OPTIONS TO IT
\usepackage{subcaption}
\usepackage{algorithm}
\usepackage{algorithmic}

\usepackage{booktabs} % for professional tables
\usepackage{mathtools}
\usepackage{xparse}
\usepackage{amsfonts,amsthm,amssymb}
\usepackage{dsfont}
\usepackage{enumitem}
\newcommand{\Cq}{C_\textnormal{quad}}

\usepackage[capitalize,noabbrev]{cleveref}

\allowdisplaybreaks

\newtheorem{theorem}{Theorem}[section]

\newtheorem{lemma}[theorem]{Lemma}
\newtheorem{corollary}[theorem]{Corollary}
\newtheorem{remark}[theorem]{Remark}

\newtheorem{proposition}[theorem]{Proposition}

\newtheorem{assumption}{Assumption}

\newcommand{\epsp}{\epsilon'}

\newcommand{\VL}{V_\lambda}
\newcommand{\gammaf}{\gamma_\textnormal{eff}}

\newcommand{\diag}{\textnormal{diag}}
\newcommand{\SA}{|\cS||\cA|}

\newcommand{\px}{\pi_x}

\newcommand{\bF}{\bar{F}}
\newcommand{\bFlin}{\bar{F}_\textnormal{lin}}

\newcommand{\bI}{\mathbb{I}}

\newcommand{\cC}{\mathcal{C}}

\newcommand{\bDelta}{\bar{\Delta}}

\newcommand{\TV}{\textnormal{TV}}
\newcommand{\cO}{\mathcal{O}}
\newcommand{\wcO}{\widetilde{\cO}}

\newcommand{\p}[2]{#1 \oslash #2}

\newcommand{\bP}{\mathbb{P}}
\newcommand{\tF}{\widetilde{F}}
\newcommand{\tG}{\widetilde{G}}

\newcommand{\nstr}{n_*}
\newcommand{\taun}{\tau_n}

\newcommand{\bDeltainit}{\bar{\Delta}^\textnormal{init}}
\newcommand{\bDeltatrans}{\bar{\Delta}^\textnormal{trans}}
\newcommand{\bDeltalin}{\bar{\Delta}^\textnormal{noise}}
\newcommand{\bDeltanlin}{\bar{\Delta}^\textnormal{nonlin}}

\newcommand{\e}{\mathrm{e}}

\newcommand{\ceil}[1]{\lceil #1 \rceil}
\newcommand{\floor}[1]{\lfloor #1 \rfloor}

\NewDocumentCommand{\dx}{m g}{%
  \IfNoValueTF{#2}
    {d(#1)}
    {d(#1,#2)}
}
\NewDocumentCommand{\dxp}{m g}{%
  \IfNoValueTF{#2}
    {\|#1\|_{\times, \,p}}
    {\|#1 - #2\|_{\times, \,p}}}

\newcommand{\bE}{\mathbb{E}}
\newcommand{\bR}{\mathbb{R}}

\newcommand{\bRp}{\bR_{++}}
\newcommand{\Cm}{C_{\max}}

\newcommand{\Xstr}{X^*}
\newcommand{\xstr}{x^*}
\newcommand{\Qstr}{Q^*}

\newcommand{\cA}{\mathcal{A}}

\newcommand{\cF}{\mathcal{F}}
\newcommand{\cK}{\mathcal{K}}
\newcommand{\cM}{\mathcal{M}}
\newcommand{\cP}{\mathcal{P}}

\newcommand{\cS}{\mathcal{S}}

\newcommand{\Rm}{R_\textnormal{max}}

\newcommand{\hF}{\widehat{F}}
\newcommand{\hG}{\widehat{G}}

\newcommand{\xp}{x_\pi}
\newcommand{\piS}{\pi^*}

\newcommand{\Gbeta}{\Gamma^{(\beta)}}
\newcommand{\Galpha}{\Gamma^{(\alpha)}}

\newcommand{\ones}{\mathds{1}}

\newcommand{\SMPol}{\Pi_{\rm SM}}
\newcommand{\MPol}{\Pi_{\rm M}}

\DeclareMathOperator*{\argmin}{arg\,min}

\usepackage{newfloat}
\usepackage{listings}
\DeclareCaptionStyle{ruled}{labelfont=normalfont,labelsep=colon,strut=off} % DO NOT CHANGE THIS
\floatstyle{ruled}
\newfloat{listing}{tb}{lst}{}
\floatname{listing}{Listing}

\usepackage{booktabs}

\title{Finite-Time Analysis of Discounted Exponential-Utility Reinforcement Learning}
\author{
    Ankur Naskar\textsuperscript{\rm 1}, Vivek T A\textsuperscript{\rm 1}, Aditya Kumar\textsuperscript{\rm 1}, Gugan Thoppe\textsuperscript{\rm 1}, Prashanth L. A.\textsuperscript{\rm2},\\
}
\affiliations{
    \textsuperscript{\rm 1}Indian Institute of Science, Bengaluru, India\\
    \textsuperscript{\rm 2}Indian Institute of Technology Madras, Chennai, India\\
    \texttt{ankurnaskar@iisc.ac.in,    vivek.ta04@gmail.com, aditya3@iisc.ac.in, \\gthoppe@iisc.ac.in, prashla@cse.iitm.ac.in}
}

\begin{document}

\maketitle

\begin{abstract}
Discounted exponential utility provides a principled criterion for risk-sensitive sequential decision-making, but its nonlinear structure complicates reinforcement learning. A recent work \citep{thoppe2026reinforcement} addressed this difficulty by introducing a Bellman-compatible surrogate and two model-free fixed-point algorithms for optimizing it over stationary policies. However, their main convergence results are asymptotic in nature. In this work, we establish finite-time rates of $\wcO(1/\sqrt{n})$ for the aforementioned two algorithms under asynchronous Markovian sampling, where $n$ is the iteration index and $\wcO$ hides logarithmic expressions. Importantly, we employ parameter-free choices for the stepsize parameter to derive these rate results.  For the algorithmically simpler one-timescale method, the main challenge is that its update equation is not directly aligned with the contraction geometry of its underlying power-law operator. We overcome this mismatch by exploiting the boundedness of the iterates, the monotonicity, and the homogeneity of the operator to get a local psuedo-contraction property of the relative-error dynamics. We then use a Moreau-envelope-based Lyapunov function and Polyak--Ruppert averaging to obtain the stated  convergence rate with parameter-free stepsizes. For the two-timescale method, the main challenge is to control a tracking error on the faster timescale. These results provide the first finite-time guarantees for model-free discounted exponential-utility reinforcement learning.
\end{abstract}
%

% Uncomment the following to link to your code, datasets, an extended version or similar.
% You must keep this block between (not within) the abstract and the main body of the paper.
% Make sure that you do not de-anonymize yourself with these links.
% \begin{links}
%     \link{Code}{https://aaai.org/example/code}
%     \link{Datasets}{https://aaai.org/example/datasets}
%     \link{Extended version}{https://aaai.org/example/extended-version}
% \end{links}

\section{Introduction}
\label{s: introduction}
Exponential utility is a classical criterion for risk-sensitive sequential decision-making, accounting for uncertainty and unfavorable outcomes beyond the standard expected-reward objective \cite{Howard1972}. While reinforcement-learning (RL) methods for exponential utility have been studied extensively in the average-reward setting, see \cite{borkar2010learning,biswas_ergodic_2023} for survey articles, the discounted case is structurally more difficult and has received much less attention in the literature. A direct treatment of the discounted objective yields a dynamic-programming recursion in which the effective risk-sensitivity parameter varies across stages, see \cite[Theorem 4]{chung1987discounted}. To recover a standard time-homogeneous Bellman formulation, \citet{porteus1975optimality} studied an alternative inter-stage-consistent recursive criterion that keeps the risk-sensitivity parameter fixed across stages. However, the relationship between this criterion and the direct discounted exponential-utility objective was not quantified, and no model-free algorithms were available for learning its solution from sampled transitions.

A recent work \citep{thoppe2026reinforcement} addressed both these issues. First, it showed that the direct and Porteus's alternative criteria can lead to different optimal policies in general, but there is only a $\cO(1 -\gamma)$ bound between the optimal value functions when the MDP has an absorbing state and every stationary policy is proper. Second, it proposed one- and two-timescale model-free algorithms for optimizing Porteus's criterion over stationary policies and established their almost-sure convergence. That work, though, provides finite-time guarantees for the two timescale algorithm only in IID or generative sampling scenario, and for the one timescale algorithm only in the scalar parameter case. 

Our work addresses the above research gaps by first extending the one-timescale analysis to the general vector case and then providing bounds under Markovian sampling for both algorithms. Our main contributions are as follows. 
\begin{enumerate}
    \item \textbf{One-timescale algorithm}: We establish a $\wcO(1/\sqrt{n})$  finite-time rate for the vector-version of the one-timescale algorithm under both synchronous and asynchronous Markovian sampling, where $n$ is the number of iterations. Importantly, we derive this rate using only parameter-free stepsizes. The key challenge is that the algorithm's additive update is not aligned with the logarithmic contraction geometry of the underlying power-law operator, preventing standard fixed-point arguments. We overcome this by using boundedness, monotonicity, and homogeneity to derive a local pseudo-contraction, and then combine a Moreau-envelope Lyapunov analysis with Polyak--Ruppert averaging to obtain the stated rate.   

    \item \textbf{Two-timescale algorithm}. Under asynchronous Markovian sampling, we  derive a $\wcO(1/\sqrt{n})$ finite-time rate using parameter-free stepsizes, with a logarithmic correction on the faster timescale. The key challenges here include deriving a nonasymptotic tracking bound for the faster recursion, propagating this error into the slower recursion, and jointly controlling the resulting tracking, mixing, and contraction errors. 
\end{enumerate}

\paragraph{Related work.}
Risk-sensitive RL studies a broad range of criteria, including mean-variance \citep{markowitz1952portfolio}, quantiles, exponential utility \citep{whittle1990risk}, Conditional Value-at-Risk (CVaR) \citep{rockafellar2000optimization}, and cumulative prospect theory \citep{tversky1992advances}. This topic has attracted substantial attention; see, e.g., 
\citep{prashanth2015cumulative,huang2020stochastic,kose2021risk,agrawal24policy,thoppe24risk,borkar2001sensitivity,markowitz2023risk,borkar2010learning,prashanth2016variance,Mihatsch02RS,tamar2012policy,mihatsch_risk-senitive_2002,bauerle_markov_2024}. 

Within this literature, exponential utility is well studied in the average-reward setting. When the MDP dynamics are known, planning methods have been developed in \citep{Howard1972,whittle1990risk}. The model-free literature includes Q-learning \citep{borkar2002q}, policy evaluation with linear features \citep{Basu08LA}, and policy-gradient and actor--critic algorithms \citep{borkar2001sensitivity,moharrami2022policy,guin2026actorcritic}. In comparison, the discounted setting remains much less developed. Planning with discounted exponential utility has been studied in \citep{porteus1975optimality,jaquette1976utility,chung1987discounted}, whereas provably convergent RL algorithms were proposed only recently in \citep{thoppe2026reinforcement}. That work, however, does not provide general finite-time guarantees, leaving the convergence speed of its algorithms unresolved. Our results address this gap.

\section{Review: Discounted Exponential Utility}
\label{s: background}

Consider a finite Markov Decision Process (MDP)
$\cM = (\cS,\cA,\cP,r,\gamma)$, where $\cS$ and $\cA$ are the state and action spaces with $|\cS| = S$ and $|\cA| = A$, $\cP(\cdot \hspace{-0.5ex} \mid \hspace{-0.5ex} s,a)$ is the transition kernel, $r:\cS\times\cA\to\bR$ is the reward function, and $\gamma\in(0,1)$ is the discount factor. Let $\Pi$ be the class of all admissible, possibly randomized and
history-dependent policies, and  $\MPol \subseteq\Pi$ the class of possibly nonstationary Markov
policies. For $\pi \in \Pi$ and risk-sensitivity parameter $\theta>0$, the direct discounted exponential-utility state-action value is given by
\begin{equation}\label{e:exp.utility}
    \hspace{-1.25ex} X_\pi(s,a;\theta) \hspace{-0.25ex}  := \hspace{-0.25ex} \bE_\pi\bigg[e^{-\frac{\theta}{\gamma}\sum\limits_{t=0}^{\infty}\gamma^t r(s_t,a_t)} \hspace{-0.5ex} \biggm| \hspace{-0.5ex} (s_0, a_0) = (s, a)\bigg].
\end{equation}
Following \citet{chung1987discounted}, a finite-horizon
dynamic-programming argument, followed by passage to the
infinite-horizon limit, shows that 
\[
    \Xstr(s,a;\theta)
    :=
    \inf_{\pi\in\Pi}X_\pi(s,a;\theta)
    = \!
    \inf_{\pi\in \MPol}X_\pi(s,a;\theta)\,  \forall (s,a).
\]
That is, restricting $\Pi$ to $\MPol$ does not change the infimum. The same argument also yields the Bellman recursion
\begin{multline} 
\label{e:Xstr.Bellman.recursion} 
    \hspace{-0.5em} \Xstr(s,a;\theta) \\ \hspace{-0.5em}  = \exp\left(-\frac{\theta}{\gamma}r(s,a)\right) \sum_{s'}\cP(s'|s,a) \min_{a'}\Xstr(s',a';\gamma\theta). \hspace{-0.5em} 
\end{multline}
Since the $\Xstr$ term is evaluated at $\gamma \theta$ on the right, the effective risk-sensitivity parameter changes across stages. Hence, this Bellman recursion is not time-homogeneous, and an
optimal Markov policy need not be stationary.

To get a time-homogeneous formulation,
\citet{porteus1975optimality}---in simpler terms---considered \eqref{e:Xstr.Bellman.recursion} under log-transformation and replaced $\gamma\theta$ by $\theta$. This gives the alternative recursion 
\begin{multline}
\label{e:Qstr.Bellman.recursion}
    \Qstr(s,a; \theta) = r(s,a) \\
    - \frac{\gamma}{\theta}
    \ln\!\left[
    \sum_{s'}\cP(s'|s,a)
    \exp\!\left(
    -\theta\max_{a'}\Qstr(s',a'; \theta)
    \right)
    \right].
\end{multline}    
Now, by using the inverse log transformation
$
    \xstr(s, a; \theta)
    :=
    \exp\!\left(
        -\frac{\theta}{\gamma}
        \Qstr(s, a;\theta)
    \right), (s, a) \in \cS \times \cA,
$
\eqref{e:Qstr.Bellman.recursion} reads as
\begin{multline}
\label{e:xstr.Bellman.equation}
    \hspace{-1em} \xstr(s, a; \theta) \hspace{-0.25em}  = \hspace{-0.25em}  e^{-\frac{\theta}{\gamma}r(s,a)} \sum_{s'}\cP(s'|s,a) \hspace{-0.25em} \left[\min_{a'} \xstr(s',a'; \theta)\right]^\gamma \hspace{-0.5em}. \hspace{-0.75em}
\end{multline}
This equation is an inter-stage-consistent variant of \eqref{e:Xstr.Bellman.recursion}.

Let $\bRp^{SA}$ be the strictly positve orthant of $\bR^{SA}.$ Motivated by \eqref{e:xstr.Bellman.equation},
\citet{thoppe2026reinforcement} introduced the Bellman optimality
operator \(F:\bR_{++}^{SA}\to\bR_{++}^{SA}\), defined by
\begin{multline}
\label{e: operator.F}
    \hspace{-0.5em} F(x)(s,a) \\
    := 
    \exp\left(-\frac{\theta}{\gamma}r(s,a)\right) \sum_{s'}\cP(s'|s,a)\Big[\min_{a'}x(s',a')\Big]^\gamma. \hspace{-0.5em}
\end{multline}
They showed that \(F\) is a \(\gamma\)-contraction under the log-sup,
or Thompson, metric
\begin{equation}
    \label{e: log.sup.metric}
    d(x_1,x_2) := \|\ln x_1-\ln x_2\|_\infty,
\end{equation}
and therefore has a unique fixed point, namely the vector \(\xstr\). 

The inter-stage consistency of \eqref{e:xstr.Bellman.equation} motivates considering the set of stationary Markov policies, denoted by $\SMPol,$ and replacing the minimization
in \(F\) by a fixed policy. Accordingly, for each
\(\pi\in\SMPol\), \citet{thoppe2026reinforcement} considered the
policy-evaluation operator
\begin{equation}
\label{e:F.pi.operator}
    \hspace{-0.5em} F_\pi(x)(s,a)
    :=
    e^{-\frac{\theta}{\gamma}r(s,a)}
    \sum_{s',a'}
    \cP_\pi(s',a'|s,a)\,
    x^\gamma(s',a'),
\end{equation}
where $\cP_{\pi}(s', a'|s, a) = \cP(s'|s, a) \pi(a'|s').$ The operator \(F_\pi\) is also a \(\gamma\)-contraction in the Thompson
metric and therefore has a unique fixed point, denoted by
\(\bar{x}_\pi\). Moreover,
\[
    \xstr
    =
    \bar{x}_{\piS}
    =
    \inf_{\pi\in\SMPol}\bar{x}_\pi,
\]
where $\piS$ is a stationary policy greedy with respect to $\xstr.$ Thus, $\piS$ minimizes $\bar{x}_\pi$ componentwise over $\SMPol.$

Since stationary policies are easier to characterize,
implement, and learn, it is natural to ask
whether \(\piS\) also minimizes the direct criterion \(X_\pi\) over
\(\SMPol\). \citet{thoppe2026reinforcement} showed that this need not
hold in general: \(X_\pi\) and \(\bar{x}_\pi\) can rank stationary
policies differently. Nevertheless, for MDPs with absorbing states and
nonnegative rewards in which every stationary policy is proper, they
established that
\[
    0
    \leq
    \xstr(s,a)
    -
    \inf_{\pi\in\SMPol}X_\pi(s,a)
    \leq
    K(1-\gamma),
\]
where \(K\) uniformly bounds the expected absorption time over all
\(\pi\in\SMPol\) and all initial state-action pairs \((s,a)\).

Thus, for certain MDPs, $\bar{x}_{\pi}$ provides
a tractable surrogate for optimizing the direct exponential utility criterion $X_{\pi}.$

\begin{figure*}
    \centering
    
    % --- Row 1 ---
    \begin{subfigure}[b]{0.48\linewidth}
        \centering
        \includegraphics[width=\linewidth]{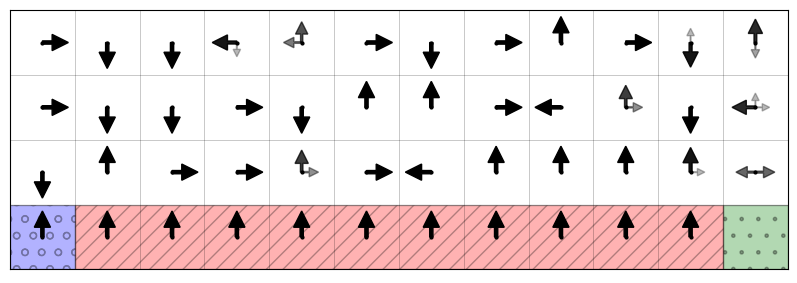}
        \caption{Risk-neutral Q-learning}
        \label{subfig:rn1}
    \end{subfigure}
    \hfill
    \begin{subfigure}[b]{0.48\linewidth}
        \centering
        \includegraphics[width=\linewidth]{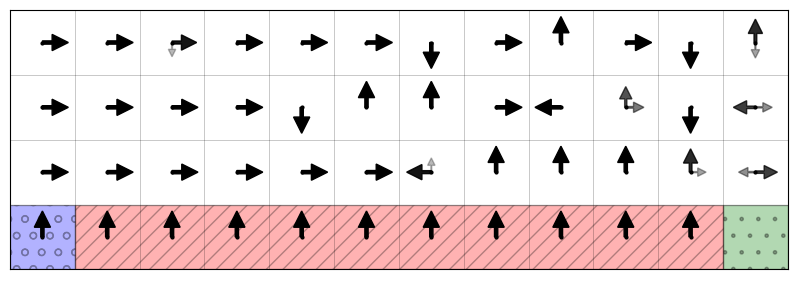}
        \caption{$\theta=10^{-4}$ (One-timescale)}
        \label{subfig:low1}
    \end{subfigure}
    
    \vspace{1em} % Adds a little vertical breathing room between rows
    
    % --- Row 2 ---
    \begin{subfigure}[b]{0.48\linewidth}
        \centering
        \includegraphics[width=\linewidth]{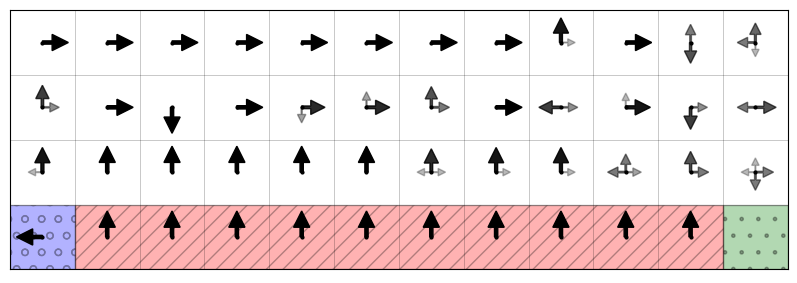}
        \caption{$\theta=10$ (Two-timescale)}
        \label{subfig:mid1}
    \end{subfigure}
    \hfill
    \begin{subfigure}[b]{0.48\linewidth}
        \centering
        \includegraphics[width=\linewidth]{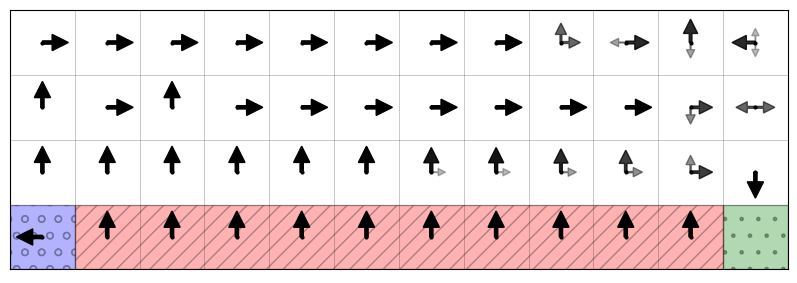}
        \caption{$\theta=10$ (One-timescale)}
        \label{subfig:large1}
    \end{subfigure}
       
    % --- Legend ---
    \begin{subfigure}[b]{0.48\linewidth}
        \centering
        \includegraphics[width=\linewidth]{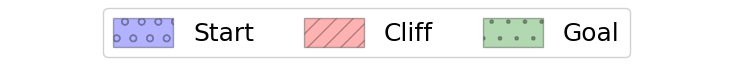}
        \label{subfig:legend}
    \end{subfigure}

    \caption{Visual representation of the Learned policies for risk-neutral Q-learning and the one- and two-timescale algorithms, given in \eqref{e: 1TS} and \eqref{e: 2TS}, respectively. The results are evaluated over \(10\) independent seeds. Arrow opacity indicates agreement across seeds on the greedy action at each state: darker arrows represent stronger consensus, while multiple lighter arrows indicate greater inter-seed variability. Overall, the learned policies become increasingly risk-averse as \(\theta\) increases.}
    \label{fig:cliffwalk-results}
\end{figure*}

\section{RL Algorithms for Exponential Utility}

The previous discussion motivates learning the tractable surrogate $\xstr$ from sampled transitions and examining whether its greedy policy exhibits meaningful risk-sensitive behavior.

\citet{thoppe2026reinforcement} gave two such algorithms: a
one-timescale algorithm for approximating \(\xstr\) directly and a two-timescale one for approximating its log-transformed counterpart
\(\Qstr\). To describe the latter, define 
\(T:\bR^{SA}\to\bR^{SA}\) by
\begin{multline}\label{e:operator.T}
    T(Q)(s,a)
    :=
    r(s,a)
    \\
    - \frac{\gamma}{\theta}
    \ln\!\left[
        \sum_{s'} \cP(s'|s,a)
        \exp\!\left(-\theta\max_{a'}Q(s',a')\right)
    \right].
\end{multline}
The operator \(T\) is a \(\gamma\)-contraction under
\(\|\cdot\|_\infty\) and therefore has a unique fixed point, namely
\(\Qstr\).

\subsection{One- and Two-timescale Update Rules}

We now briefly describe the two algorithms.

\paragraph{One-timescale algorithm.}
Since \(F(x)(s,a)\) is an expectation over the next state, it admits an
unbiased estimate from a single next-state sample. This motivates the
following synchronous update, in which all state-action components are
updated at every iteration:
\begin{equation}
\label{e:x.synchronous.update}
    x_{n+1}
    =
    x_n+\alpha_n[\widehat F_{n+1}-x_n],
\end{equation}
where
\[
    \widehat F_{n+1}(s,a)
    =
    \exp\!\left(-\frac{\theta}{\gamma}r(s,a)\right)
    \left[\min_{a'}x_n(s'_{s,a},a')\right]^\gamma,
\]
and \(s'_{s,a}\sim\cP(\cdot\mid s,a)\) is sampled for each
\((s,a)\in\cS\times\cA\).

Next, we consider the asynchronous variant, which learns from a sample path
and updates $x_n$ only on the currently visited state-action component. Specifically, let
$((s_n,a_n))_{n \geq 0}$ be generated from an arbitrary initial pair
\((s_0,a_0)\) as per $s_{n+1}\sim\cP(\cdot\mid s_n,a_n)$ and $a_{n+1}\sim\mu(\cdot\mid s_{n+1}),$ where \(\mu\) is a fixed behavior policy. Then the asynchronous update for $x_n$ at iteration $n \geq 0$ is
\begin{multline}\label{e: 1TS}
    x_{n+1} =
    x_n
    \\+ \alpha_n e_{s_n,a_n}\left[ \widehat{F}(x_n,s_n,a_n,s_{n+1})- x_n(s_n,a_n)\right],
\end{multline}
where \(e_{s,a}\) is the corresponding standard basis vector and
\[
    \widehat{F}(x,s,a,s')
    :=
    \exp\!\left(-\frac{\theta}{\gamma}r(s,a)\right)
    \left[\min_{a'}x(s',a')\right]^\gamma.
\]

\paragraph{Two-timescale algorithm.}
Since the expectation in \(T\) lies inside the logarithm, it cannot be
estimated unbiasedly from a single transition. The two-timescale method addresses this by using two coupled updates:
\begin{equation}\label{e: 2TS}
    \begin{aligned}
        & Q_{n+1} = Q_n + \beta_n\left[ -\frac{\gamma}{\theta}\ln g_n-Q_n \right],
        \\
        & g_{n+1} = g_n + \alpha_n e_{s_n,a_n}
        \bigg[
            \widehat{G}(Q_n,s_n,a_n,s_{n+1}) \\
        & \hspace{3em} - g_n(s_n,a_n) \bigg],
    \end{aligned}
\end{equation}

where
\[
    \widehat G(Q,s,a,s')
    :=
    \exp\!\left[
        -\frac{\theta}{\gamma}r(s,a)
        -
        \theta\max_{a'}Q(s',a')
    \right].
\]
With \(\beta_n/\alpha_n\to0\), \(g_n\) evolves on the faster timescale
and tracks the conditional expectation in \(T(Q_n)\). Hence,
\(-\frac{\gamma}{\theta}\ln g_n\) approximates \(T(Q_n)\), and the
\(Q_n\)-recursion resembles $Q_{n + 1} \approx Q_n + \beta_n [T (Q_n) - Q_n].$

\subsection{Evaluating Risk-Sensitive Behavior}
Since the two algorithms optimize the tractable surrogate \(\bar{x}_\pi\)
rather than \(X_\pi\) directly, it is not immediate that the induced
policies exhibit meaningful risk-sensitive behavior. As this question
was not examined in \citep{thoppe2026reinforcement}, we first conduct a numerical experiment to investigate it. 

\paragraph{Cliff walk: experimental setup.}
We consider a \(4\times 12\) cliff walk model with the start state at the bottom-left corner and the goal at the bottom-right corner; see Figure~\ref{fig:cliffwalk-results}. Each step incurs a $-1$ reward, while falling off the cliff results in a large negative reward of $-30$. We use a slippery variant in which the agent may fail to move in the intended direction. Specifically, the slip probability is \(0.02\) in the row nearest the cliff, \(0.8\) in the middle row, and \(0\) in the top row. Consequently, the row adjacent to the cliff provides the shortest but riskiest route, whereas the top row
provides the safest route. We set the discount factor $\gamma$ to $0.9$.

\paragraph{Training details.}
We compare risk-neutral Q-learning with the one- and two-timescale
methods, given in \eqref{e: 1TS} and \eqref{e: 2TS}, respectively, using \(\theta\in\{10^{-4},10\}\). We train all methods under
a fixed stochastic behavior policy over \(10\) independent seeds. Each
seed consists of \(500\) episodes, with every episode starting from the
same initial state and terminating upon reaching the goal or after
\(1000\) steps. In all of our runs, we set \(Q_0(s, a) =0\) and \(x_0(s, a) =1\) for all $(s, a).$ The step size for the one-timescale has been chosen to be $\alpha_n = (n+1)^{-0.51}$ and those for the two-timescale have been chosen to be $\alpha_n = (n+1)^{-0.51}$ and $\beta_n = (n+1)^{-0.52}.$

\paragraph{Experimental findings.}
Figure~\ref{fig:cliffwalk-results} shows the learned greedy policies.
Arrow opacity indicates agreement across the \(10\) seeds, with darker
arrows corresponding to stronger consensus. Risk-neutral Q-learning
favors the shortest route near the cliff. For \(\theta=10^{-4}\), the
one-timescale method behaves similarly to the risk-neutral method. For
\(\theta=10\), both surrogate-based methods favor safer routes farther
from the cliff, with the one-timescale method exhibiting less
variability across seeds than the two-timescale method. Thus, the
learned policies become increasingly risk-averse as \(\theta\)
increases.

\section{Main Results}
\label{s:main.results}
The asymptotic convergence of the algorithms in \eqref{e: 1TS} and \eqref{e: 2TS} was shown in \citep{thoppe2026reinforcement}. Here, we derive finite-time guarantees for the one-timescale algorithm under synchronous and asynchronous Markovian sampling, and for the two-timescale algorithm under asynchronous Markovian sampling. In both asynchronous cases, we assume that the samples are generated by a fixed behavior policy $\mu$. 

\subsection{Convergence rates: One-timescale }

We first highlight the main challenge in analyzing the one-timescale method. Its update resembles a standard stochastic fixed-point iteration, as is most transparent in the synchronous case. Specifically, \eqref{e:x.synchronous.update} can be rewritten as
\[
    x_{n+1} = x_n + \alpha_n[ F(x_n) - x_n + \zeta_{n+1}],
\]
where ${\zeta_{n+1}}$ is a martingale-difference sequence with $\zeta_{n + 1} = \hat{F}_{n + 1} - F(x_n).$ If $F$ were sup-norm contractive, as in risk-neutral Q-learning, one could smooth $\|x_n-\xstr\|_\infty^2$ using a suitable Moreau envelope, substitute the stochastic update, exploit the contraction of $F$, and unroll the resulting drift recursion. Here, however, $F$ is contractive only in the Thompson metric (see \eqref{e: log.sup.metric}). So, the above idea would be naturally compatible with a multiplicative update such as 
\[
    x_{n + 1} = x_n^{(1 - \alpha_n)}(\widehat{F}_{n + 1})^{\alpha_n},
\]
with all operations done compontentwise. However, taking a $\log$ to exploit the contraction, also places the sampling noise inside the $\log.$ The resulting error is no longer a martingale difference and, hence, introduces a systematic bias. The asymptotic analysis in \citep{thoppe2026reinforcement} established convergence through the relative error $x_n(s, a)/x^*(s, a) - 1$, making it the natural quantity for our finite-time analysis. The main difficulty now is that $F$'s contraction does not directly translate into contraction of this relative error, and our proof surmounts this difficulty in a novel fashion, see details below.

Let $x_0$ be the initial iterate and $\cK := [C_\ell, C_u]^{SA},$ where
\begin{equation}\label{e: Cl.Cu.defn}
\begin{aligned}
    C_\ell & :=  \min\left\{\exp\left(-\frac{\theta \Rm}{\gamma(1-\gamma)}\right), \min_{s,a} x_0(s,a) \right\}
    \\
    C_u & := \max\left\{ \exp\left(\frac{\theta \Rm}{\gamma(1-\gamma)}\right), \max_{s,a} x_0(s,a) \right\},
\end{aligned}
\end{equation}
and $\Rm := \max_{s, a} |r(s, a)| < \infty.$
Separately, let 
\begin{equation}\label{e: def.gammaf}
    \gammaf := \sup_{y\in \left[\frac{C_\ell}{C_u}, \frac{C_u}{C_\ell}\right]\setminus\{1\}}\frac{|y^\gamma-1|}{|y-1|}.
\end{equation}
It is easy to see that $\gammaf \in (0, 1).$ Finally, define the element-wise ratio
\[
    \p{u}{v}:=\diag(v)^{-1}u,
    \qquad u,v\in\bRp^{SA},
\]
and let $\ones \in \bRp^{SA}$ denote the vector of all ones.

Our first proposition establishes a key property of the relative error $\p{F(x)}{\xstr} - \ones$ that underpins our analysis.

\begin{proposition}[\textbf{One-point Contraction}]
\label{lem: local.con.F}
    For all $x\in\cK$,
    \[
        \left\|\p{F(x)}{\xstr}-\ones\right\|_\infty
        \leq
        \gammaf
        \left\|\p{x}{\xstr}-\ones\right\|_\infty,
    \]
    where $\gammaf$ is defined in~\eqref{e: def.gammaf}.
\end{proposition}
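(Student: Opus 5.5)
Let $\delta := \|\p{x}{\xstr}-\ones\|_\infty$, so that for every $(s',a')$ we have $(1-\delta)\xstr(s',a') \le x(s',a') \le (1+\delta)\xstr(s',a')$. The plan has three steps: transfer this componentwise sandwich through the minimum over actions, then through the power $\gamma$ using the definition of $\gammaf$, and finally through the positive linear averaging in $F$ together with the fixed-point identity $F(\xstr)=\xstr$.

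\emph{Step 1 (the minimum).} Define the ratio at the minimizing action,
\[
y(s') := \frac{\min_{a'} x(s',a')}{\min_{a'} \xstr(s',a')}.
\]
Taking minima in the componentwise inequalities is monotone and positively homogeneous, so $1-\delta \le y(s') \le 1+\delta$, that is, $|y(s')-1|\le \delta$. Moreover, both minima lie in $[C_\ell, C_u]$. This holds for $x$ because $x\in\cK$. It holds for $\xstr$ because its log-value $\Qstr$ is bounded by $\Rm/(1-\gamma)$ in absolute value, so $\xstr(s,a)\in[\exp(-\theta\Rm/(\gamma(1-\gamma))),\exp(\theta\Rm/(\gamma(1-\gamma)))]\subseteq[C_\ell,C_u]$. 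Consequently $y(s')\in[C_\ell/C_u, C_u/C_\ell]$, which is exactly the domain in~\eqref{e: def.gammaf}.

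\emph{Step 2 (the power).} By the definition of $\gammaf$, $|y(s')^\gamma - 1| \le \gammaf |y(s')-1| \le \gammaf\delta$; this also holds trivially when $y(s')=1$.

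\emph{Step 3 (the averaging).} Write $m^*(s') := \min_{a'}\xstr(s',a')$. Then
\[
F(x)(s,a) - \xstr(s,a) = e^{-\frac{\theta}{\gamma}r(s,a)}\sum_{s'}\cP(s'|s,a)\,[m^*(s')]^\gamma\bigl(y(s')^\gamma-1\bigr).
\]
Using the weights $e^{-\frac{\theta}{\gamma}r(s,a)}\cP(s'|s,a)[m^*(s')]^\gamma$, which are nonnegative and sum to $F(\xstr)(s,a)=\xstr(s,a)$, we get $|F(x)(s,a)-\xstr(s,a)| \le \gammaf\delta\,\xstr(s,a)$. Dividing by $\xstr(s,a)$ and taking the maximum over $(s,a)$ gives the claim.

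The main obstacle is the domain check in Step 1: we need $y(s')$ to lie in the interval over which $\gammaf$ is defined. This rests on the a priori bound $\xstr\in\cK$, which I would justify from the sup-norm bound on $\Qstr=-\frac{\gamma}{\theta}\ln\xstr$ obtained via the $\gamma$-contraction of $T$. That bound gives $\|\Qstr\|_\infty\le \Rm/(1-\gamma)$, and hence the exponent bounds $\pm\theta\Rm/(\gamma(1-\gamma))$ for $\ln\xstr$. Everything else is monotonicity and homogeneity.
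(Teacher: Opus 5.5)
Your proof is correct. It uses the same ingredients as the paper: monotonicity, degree-$\gamma$ homogeneity, $F(\xstr)=\xstr$, and the definition of $\gammaf$. It organizes them differently, though.

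The paper never opens up $F$. It lets $m_x$ and $M_x$ be the smallest and largest entries of $\p{x}{\xstr}$ and sandwiches $m_x\xstr\preceq x\preceq M_x\xstr$. Pushing this through $F$ gives $m_x^\gamma\xstr\preceq F(x)\preceq M_x^\gamma\xstr$, so the $\gammaf$ inequality is applied only to the two scalars $m_x,M_x$. Because that argument uses only abstract properties, it works verbatim for any monotone, degree-$\gamma$ homogeneous map with fixed point $\xstr$.

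You instead apply $\gammaf$ next state by next state, to the ratios of minima $y(s')$. You then use the explicit positive-averaging form of $F$, whose weights sum to $\xstr(s,a)$. This is more hands-on, and since $y(s')\in[m_x,M_x]$ it also recovers the paper's sandwich.

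Your explicit check that $\xstr\in\cK$, via $\|\Qstr\|_\infty\le\Rm/(1-\gamma)$, is a real plus. The paper's proof needs this so that $m_x,M_x$ lie in the domain of~\eqref{e: def.gammaf}, but it only asserts it elsewhere.

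One wording fix: deriving $y(s')\ge 1-\delta$ from positive homogeneity of the minimum requires $\delta\le 1$. When $\delta>1$, which can happen since the ratios range up to $C_u/C_\ell$, the bound holds trivially because $y(s')>0$.
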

%
% \begin{proof}
%     See Appendix~\ref{s:1TS.Sync.Proof}.
% \end{proof}

Since $F(\xstr)=\xstr$, Proposition~\ref{lem: local.con.F} gives a one-point or pseudo contraction of the relative error toward $\xstr.$

Our next result gives a finite-time guarantee for the synchronous one-timescale algorithm from \eqref{e:x.synchronous.update}. 

\begin{theorem}[\textbf{Finite-time Convergence: One-timescale (Synchronous Case)}]\label{thm: 1TS-sync}
    Let $(x_n)$ be generated according to \eqref{e:x.synchronous.update} with $x_0 \in \bRp^{SA}.$ Suppose $\lambda > 0$ is chosen so that
    \[
        c_\lambda^{(s)} := \left( 1 -\gammaf^2\frac{SA(1+\lambda)}{ (SA + \lambda) } \right) > 0.
    \]
    Then, for the stepsize  $\alpha_n=\frac{a}{n+1}$ with $a c_\lambda^{(s)} > 1,$ we have
    \[
        \bE\left\| \big( \p{x_N}{\xstr} \big) - \ones\right\|_\infty \leq \frac{C'_{x,s}}{\sqrt{N}}, \quad \text{for } N>0.
    \]
    The constant $C'_{x,s}$ is specified in Appendix \ref{appendix:constants}.
\end{theorem}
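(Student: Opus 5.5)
We work with the relative error $z_n := \p{x_n}{\xstr} - \ones$ and a Moreau-envelope Lyapunov function built from $\|\cdot\|_\infty^2$. The first step is to show that the iterates stay in $\cK$ almost surely. Each $\widehat F_{n+1}(s,a)$ equals $e^{-\theta r(s,a)/\gamma}$ times the $\gamma$-th power of some component of $x_n$. If $x_n\in\cK$, then this sample lies in $[e^{-\theta\Rm/\gamma}C_\ell^\gamma,\ e^{\theta\Rm/\gamma}C_u^\gamma]$. By the choice of $C_\ell,C_u$ in \eqref{e: Cl.Cu.defn}, namely $C_u\ge e^{\theta\Rm/(\gamma(1-\gamma))}$, we get $e^{\theta\Rm/\gamma}C_u^\gamma\le C_u$, and symmetrically for $C_\ell$. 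Since $\alpha_n\le 1$ eventually, $x_{n+1}$ is a convex combination of points of $\cK$. For the initial steps with $\alpha_n>1$, we would either assume $a\le 1$ or handle finitely many steps by a crude bound; this is a minor point we expect to dispose of quickly.

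Next, divide the update componentwise by $\xstr$:
\[
z_{n+1} = z_n + \alpha_n\big[(\p{F(x_n)}{\xstr}-\ones) - z_n + w_{n+1}\big],\qquad w_{n+1}:=\p{(\widehat F_{n+1}-F(x_n))}{\xstr}.
\]
Here $w_{n+1}$ is a martingale difference. Boundedness on $\cK$ gives $\|w_{n+1}\|_\infty\le C_u/C_\ell$, and this is the only place where $\cK$ enters the noise. By Proposition~\ref{lem: local.con.F}, the map $H(z):=\p{F(x)}{\xstr}-\ones$, with $x=\xstr\odot(\ones+z)$, satisfies $\|H(z)\|_\infty\le\gammaf\|z\|_\infty$ with $H(0)=0$. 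This is a one-point contraction toward the origin, and a one-point contraction is all the standard Lyapunov drift argument needs.

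For the Lyapunov function, let $M(z)=\min_u\{\tfrac12\|u\|_\infty^2+\tfrac{1}{2\lambda}\|z-u\|_2^2\}$, the Moreau envelope of $\frac12\|\cdot\|_\infty^2$ with respect to $\frac12\|\cdot\|_2^2$ (equivalently the generalized envelope of Chen et al.). It is $\frac{1}{\lambda}$-smooth with respect to $\|\cdot\|_2$. Using $\|\cdot\|_\infty\le\|\cdot\|_2\le\sqrt{SA}\|\cdot\|_\infty$, it is equivalent to $\frac12\|z\|_\infty^2$ up to factors $(1+\lambda/SA)^{-1}$ and $1$. The one-step drift is then
\[
\bE[M(z_{n+1})\mid\cF_n]\le M(z_n)+\alpha_n\langle\nabla M(z_n),H(z_n)-z_n\rangle+\tfrac{\alpha_n^2}{2\lambda}\bE\|H(z_n)-z_n+w_{n+1}\|_2^2 .
\]
The cross term is controlled by convexity and the norm equivalences. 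Bounding $\langle\nabla M(z),H(z)\rangle$ by $M$ evaluated at $\|H(z)\|\le\gammaf\|z\|$ gives the negative drift $-\alpha_n c_\lambda^{(s)} M(z_n)$, where $c_\lambda^{(s)}=1-\gammaf^2\frac{SA(1+\lambda)}{SA+\lambda}$ is exactly the constant appearing in the statement. The quadratic term is $\le C\alpha_n^2$ because everything is bounded on $\cK$.

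Taking expectations yields $\bE M(z_{n+1})\le(1-c\,\alpha_n)\bE M(z_n)+C\alpha_n^2$ with $\alpha_n=a/(n+1)$. The standard unrolling, using $\prod_{k=j}^{n}(1-ca/(k+1))\le((j+1)/(n+2))^{ca}$ together with $ca>1$, gives $\bE M(z_N)=\cO(1/N)$. Norm equivalence then gives $\bE\|z_N\|_\infty^2=\cO(1/N)$, and Jensen's inequality gives $\bE\|z_N\|_\infty\le C'_{x,s}/\sqrt N$.

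The main obstacle is the cross-term step, that is, converting the $\|\cdot\|_\infty$ one-point contraction of Proposition~\ref{lem: local.con.F} into the drift constant $c_\lambda^{(s)}$ through the envelope's norm-equivalence factors. The invariance of $\cK$, which the proposition requires, must also be verified carefully.
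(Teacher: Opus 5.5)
Your argument follows the paper's own proof: relative error, martingale-difference noise bounded through $\cK$, the Moreau envelope of $\frac12\|\cdot\|_\infty^2$, then convexity plus Proposition~\ref{lem: local.con.F} plus norm equivalence to get the drift, unrolling with $\alpha_n=a/(n+1)$, and Jensen. One slip needs fixing: the norm-equivalence factors you quote are wrong. The envelope satisfies $\frac{1}{1+\lambda}\|z\|_\infty^2\le 2M(z)\le\frac{1}{1+\lambda/SA}\|z\|_\infty^2$ (Lemma~\ref{lem: ceecee}). The lower bound is attained at $z=e_{s,a}$, so a lower factor $(1+\lambda/SA)^{-1}$ is false once $SA>1$. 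With your factors, the convexity step would give $1-\gammaf^2(1+\lambda/SA)$ instead of $c_\lambda^{(s)}$. Only the correct pair gives $c_\lambda^{(s)}$, because $\frac{1+\lambda}{1+\lambda/SA}=\frac{SA(1+\lambda)}{SA+\lambda}$.

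The real gap is the early steps with $\alpha_n>1$. This is not a minor point, and neither of your fixes works. Since $\gammaf>0$, we have $c_\lambda^{(s)}<1$, so $ac_\lambda^{(s)}>1$ forces $a>1$ and hence $\alpha_0>1$. Assuming $a\le 1$ is therefore not available. Nor can finitely many steps be absorbed by a crude bound. For $\alpha_n>1$ the update $x_{n+1}=(1-\alpha_n)x_n+\alpha_n\widehat F_{n+1}$ extrapolates and can leave $\bRp^{SA}$, after which $[\min_{a'}x_{n+1}(s',a')]^\gamma$ is undefined. For example, take one state, two actions, $r\equiv0$ and $x_0=(M,1)$. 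Then $x_1=((1-a)M+a,\,1)$, whose first entry is negative for every admissible $a$ once $M$ is large (note $a/(a-1)<1/(1-c_\lambda^{(s)})\le 1/\gammaf^2$, and $\gammaf\to1$ as $M\to\infty$).

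The paper's proof has the same hole. It uses $x_n\in\cK$ for all $n$ (via Lemma C.1 of \citet{thoppe2026reinforcement}) and bounds the iterate at time $\nstr$ by the size of $\cK$. A clean repair is to use $\alpha_n=a/(n+n_0)$ with $n_0\ge a$. Then $\alpha_n\le 1$ for all $n$, so your convex-combination induction keeps $x_n\in\cK$ from the start. Also $c_\lambda^{(s)}\alpha_n<1$, so the recursion unrolls from $n=0$ with product bound $((k+n_0)/(N+n_0))^{ac_\lambda^{(s)}}$, giving the same $\cO(1/N)$ conclusion.
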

%
% \begin{proof}
%     See Appendix~\ref{s:1TS.Sync.Proof}.
% \end{proof}

\begin{remark}
    The stepsize choice in the result above is universal: $C_\ell$, $C_u$, $\gammaf$, and hence a valid $c_\lambda^{(s)}>0$, are computable from $\gamma$, $x_0$, and the reward bound $\Rm$.
\end{remark}

\begin{remark}
    The $\cO(1/\sqrt{N})$ in the result above matches the folklore optimal rate, up to constant factors.
\end{remark}

We now provide our finite time guarantees for the asynchronous  algorithm in \eqref{e: 1TS} under Markovian sampling.

Let $\mu$ be a fixed behavior policy, and ${(s_n,a_n)}_{n\geq0}$ the resulting state-action trajectory, including any episodic resets. Further, let $(\cF_n)$ be the filtration of $\sigma$-algebras given by
\[
    \cF_n := \sigma\left(x_0, s_0, a_0, \ldots, s_n, a_n\right), \qquad n \geq 0.
\]

\begin{assumption}[\textbf{Geometrically Ergodic Sampling}]
\label{a: ergodicity}
The process $\{(s_n,a_n)\}_{n\geq0}$ admits a unique stationary
distribution $\eta_\mu\in(0,1)^{SA}$. Moreover, there exist constants
$C>0$ and $\rho\in(0,1)$ such that, for all $n\geq\tau\geq0$,
\[
    \left\|
    \bP\!\left((s_n,a_n)=\cdot\,\middle|\,\cF_{n-\tau}\right)
    -
    \eta_\mu
    \right\|_{\TV}
    \leq
    C\rho^\tau.
\]
\end{assumption}

Since $\cS\times\cA$ is finite, Assumption~\ref{a: ergodicity} holds whenever
the state-action transition matrix induced by $\mu$ and the reset mechanism
is irreducible and aperiodic.

Next, let $\delta:=\min_{s,a} \eta_\mu(s,a)$ and $\nu:= 1- \delta(1-\gammaf),$ where $\gammaf$ is as defined in \eqref{e: def.gammaf}. Due to Assumption~\ref{a: ergodicity}, it follows that $\delta > 0$ and, hence, $\nu \in (0, 1).$ Finally, let
\begin{equation}\label{e: def.c.lambda}
    c_\lambda := \left( 1 -\nu^2\frac{SA(1+\lambda)}{ (SA + \lambda) } \right),
\end{equation}
and, with respect to the stepsize $(\alpha_n)$ in \eqref{e: 1TS}, define 
\begin{equation}\label{e: def.taun} 
    \taun := \min\left\{ \tau>0 : C\rho^\tau < \alpha_n^2 \right\}, \qquad n \geq 0,
\end{equation}
and $\nstr:= \max\big\{ \nstr^{(1)}, \nstr^{(2)},\nstr^{(3)} \big\},$ where
\begin{equation}\label{e: def.nstar}
    \begin{aligned}
        \nstr^{(1)} &:= \min\left\{ n\geq 0: n>2\tau_n, \right\}
        \\
        \nstr^{(2)} & := \min\left\{ n\geq 0:  \alpha_n<\min(1/c_\lambda, \delta/6)\right\}
        \\
        \nstr^{(3)} &:= \min\left\{ n\geq 0: \alpha_{n-\taun} <2\alpha_n < 1/9 \right\},
    \end{aligned}
\end{equation}

The asynchronous analysis is more challenging because the mean drift of $x_n$ in \eqref{e: 1TS} is weighted by the stationary visitation distribution $\eta_\mu$. Specifically, define
\begin{equation}\label{e: bar.F}
    \bF(x):=(I-D_\mu)x+D_\mu F(x),
\end{equation}
where $D_\mu:=\diag(\eta_\mu)$ and $\eta_\mu$ is as defined in Assumption~\ref{a: ergodicity}.

Our first result here shows that the operator $\bF$ inherits the one-point contraction of $F$ from Lemma~\ref{lem: local.con.F}.
\begin{corollary}[\textbf{Mean Drift: One-point Contraction}]
\label{corol: local.con.bar.F}
    For all $x\in\cK$,  
    \[
          \left\|\p{\bF(x)}{\xstr}-\ones\right\|_\infty
        \leq
        \nu
        \left\|\p{x}{\xstr}-\ones\right\|_\infty.
    \]
\end{corollary}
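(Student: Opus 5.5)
The plan is to work componentwise, exploiting that $D_\mu$ is diagonal and that $\p{\cdot}{\xstr}$ acts linearly and entrywise. Write $e(x):=\p{x}{\xstr}-\ones$. Since $\bF(x)=(I-D_\mu)x+D_\mu F(x)$, for each $(s,a)$ we have
\[
    \frac{\bF(x)(s,a)}{\xstr(s,a)}-1
    =
    \big(1-\eta_\mu(s,a)\big)\,e(x)(s,a)
    +
    \eta_\mu(s,a)\Big(\frac{F(x)(s,a)}{\xstr(s,a)}-1\Big).
\]
This is a convex combination, because $\eta_\mu(s,a)\in(0,1)$ by Assumption~\ref{a: ergodicity}. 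No fixed-point identity is needed beyond the one already built into Proposition~\ref{lem: local.con.F}.

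Next we take absolute values and apply the triangle inequality. We bound $|e(x)(s,a)|$ by $\|e(x)\|_\infty$, and we bound the second term using Proposition~\ref{lem: local.con.F}, which applies because $x\in\cK$. This gives
\[
    \Big|\frac{\bF(x)(s,a)}{\xstr(s,a)}-1\Big|
    \le
    \big(1-\eta_\mu(s,a)(1-\gammaf)\big)\,\|e(x)\|_\infty .
\]

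Finally, since $\gammaf<1$ and $\eta_\mu(s,a)\ge\delta$, the coefficient satisfies $1-\eta_\mu(s,a)(1-\gammaf)\le 1-\delta(1-\gammaf)=\nu$. Taking the maximum over $(s,a)$ then yields the claim.

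There is no real obstacle here. The only points to check are that $1-\eta_\mu(s,a)\ge 0$, so the convex-combination bound is legitimate, and that $x\in\cK$ is exactly the hypothesis required to invoke Proposition~\ref{lem: local.con.F}.
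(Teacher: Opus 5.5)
Your proposal is correct and follows the paper's proof: both write each component of $\p{\bF(x)}{\xstr}-\ones$ as the $\eta_\mu(s,a)$-weighted combination of the relative errors of $F(x)$ and $x$, invoke Proposition~\ref{lem: local.con.F} on the $F(x)$ term, and then use $\eta_\mu(s,a)\ge\delta$. The paper's version differs only cosmetically: it divides by $\|\p{x}{\xstr}-\ones\|_\infty$ and bounds the signed quantity from above and below separately, whereas your triangle-inequality bound handles both signs at once and covers $x=\xstr$ without dividing by zero.
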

Unlike Proposition~\ref{lem: local.con.F}, note that the contraction factor $\nu$ depends on the unknown $\eta_{\mu}.$

\begin{theorem}[\textbf{Finite-time Convergence: One-timescale (Asynchronous Case)}]\label{thm: 1TS}
    Suppose Assumption~\ref{a: ergodicity} holds and let $\lambda > 0$ be such that $c_\lambda > 0$. Let $(x_n)$ be generated according to \eqref{e: 1TS} with $x_0 \in \bRp^{SA}$. For the stepsize choice $\alpha_n= (n+1)^{-\alpha}$ with $\alpha\in(1/2,1),$ we have
    \[
        \bE\left\|\p{x_N}{\xstr} - \ones \right\|_\infty \leq C^{(\alpha)}_x\frac{\ln N}{N^{\alpha/2}}, \qquad N>\nstr.
    \]
    Further, for the stepsize $\alpha_n = \frac{a}{n+1},$ where $a c_\lambda>1,$ we have
    \[
        \bE\left\|\p{x_N}{\xstr} - \ones \right\|_\infty \leq C_x\frac{\ln N}{\sqrt{N}}, \qquad N>\nstr.
    \]
    See Appendix \ref{appendix:constants} for the definition of $C^{(\alpha)}_x$ and $C_x.$ 
\end{theorem}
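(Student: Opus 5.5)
We will analyze the relative error $y_n := \p{x_n}{\xstr} - \ones$ and rewrite \eqref{e: 1TS} as a stochastic approximation driven by the mean operator $\bF$ from \eqref{e: bar.F}. Let $\bF_y(y) := \p{\bF(\xstr\circ(\ones+y))}{\xstr}-\ones$. Dividing \eqref{e: 1TS} componentwise by $\xstr$ gives
\[
y_{n+1} = y_n + \alpha_n\big[\bF_y(y_n) - y_n + w_n\big],
\]
where $w_n$ is the difference between the single-sample increment $\p{e_{s_n,a_n}[\widehat F(x_n,s_n,a_n,s_{n+1})-x_n(s_n,a_n)]}{\xstr}$ and its stationary mean $\p{D_\mu(F(x_n)-x_n)}{\xstr}$.

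The first step is to show that $x_n\in\cK$ for all $n$. Since $\alpha_n<1$ for $n\ge\nstr$, and in general the update is a convex combination, and since $\widehat F$ maps $\cK$ into $[e^{-\theta\Rm/\gamma}C_\ell^\gamma,\,e^{\theta\Rm/\gamma}C_u^\gamma]\subseteq[C_\ell,C_u]$ by the choice \eqref{e: Cl.Cu.defn}, induction gives $x_n\in\cK$ almost surely. Consequently $\|y_n\|_\infty$, $\|w_n\|_\infty$ and the Lipschitz constant $L$ of the map $x\mapsto \widehat F(x,s,a,s')/\xstr$ on $\cK$ are bounded by explicit constants depending on $C_\ell$ and $C_u$. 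Corollary~\ref{corol: local.con.bar.F} then gives $\|\bF_y(y)\|_\infty\le\nu\|y\|_\infty$ on this set.

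The second step is the Lyapunov drift. Following the Moreau-envelope approach for sup-norm contractive Q-learning, we take $M(y)=\min_u\{\frac12\|u\|_p^2+\frac1{2\lambda}\|y-u\|_2^2\}$ with $p$ of order $\ln(SA)$. This $M$ is smooth with respect to $\|\cdot\|_2$ with constant $1/\lambda$, and it is equivalent to $\frac12\|\cdot\|_\infty^2$ up to the factors $(1+\lambda)$ and $(SA+\lambda)/SA$ appearing in $c_\lambda$. Because the pseudo-contraction is toward the single point $0$, this is all that is needed: expanding $M(y_{n+1})$ and using Corollary~\ref{corol: local.con.bar.F} gives
\[
\bE M(y_{n+1})\le(1-c_\lambda\alpha_n)\bE M(y_n)+\alpha_n\bE\langle\nabla M(y_n),w_n\rangle+\cO(\alpha_n^2).
\]

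The main obstacle is the Markovian cross term $\bE\langle\nabla M(y_n),w_n\rangle$, which is not a martingale difference. To handle it, we condition on $\cF_{n-\taun}$ and replace $y_n$ by $y_{n-\taun}$. The gradient of $M$ and the noise are both Lipschitz, and boundedness gives $\|y_n-y_{n-\taun}\|_\infty\le C\sum_{k=n-\taun}^{n-1}\alpha_k\le 2C\taun\alpha_n$; here $\nstr^{(1)}$ and $\nstr^{(3)}$ guarantee $\taun<n$ and $\alpha_{n-\taun}<2\alpha_n$. After the replacement, Assumption~\ref{a: ergodicity} and \eqref{e: def.taun} bound the conditional bias by $C\rho^{\taun}<\alpha_n^2$. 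Since $\taun=\cO(\ln n)$, the cross term is $\cO(\taun\alpha_n^2)$, and the recursion becomes
\[
\bE M(y_{n+1})\le(1-c_\lambda\alpha_n)\bE M(y_n)+K\taun\alpha_n^2,
\]
valid for $n\ge\nstr$, where $\nstr^{(2)}$ ensures $c_\lambda\alpha_n<1$.

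The final step unrolls this recursion from $\nstr$ to $N$. For $\alpha_n=a/(n+1)$ with $ac_\lambda>1$, the product $\prod(1-c_\lambda\alpha_k)\lesssim(\nstr/N)^{ac_\lambda}$ decays faster than $1/N$, and the summed noise is $\cO(\ln N/N)$. For $\alpha_n=(n+1)^{-\alpha}$, the standard exponential-product bound yields $\cO(\ln N\,\alpha_N)=\cO(\ln N/N^{\alpha})$. Finally, Jensen's inequality $\bE\|y_N\|_\infty\le\sqrt{2c\,\bE M(y_N)}$ gives the stated $\ln N/N^{\alpha/2}$ and $\ln N/\sqrt N$ rates. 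The square root yields $\sqrt{\ln N}$, which is bounded by $\ln N$ for $N\ge3$. The explicit constants $C^{(\alpha)}_x$ and $C_x$ can be collected from these steps.
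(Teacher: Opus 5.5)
Your route is the paper's: divide by $\xstr$, keep $x_n$ in $\cK$, and turn the one-point contraction of Corollary~\ref{corol: local.con.bar.F} into a drift inequality for a Moreau envelope. That uses convexity, $\langle\nabla M(y),\bF_y(y)-y\rangle\le M(\bF_y(y))-M(y)$, plus a norm sandwich; then you decouple the Markov noise with the lag $\taun$, unroll, and apply Jensen.

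\textbf{The step that fails as written is your choice of envelope.} With $\frac12\|u\|_p^2$ ($2\le p<\infty$) as the main term and $\|\cdot\|_2$ smoothing, the sandwich is
\[
\frac{1}{1+\lambda}\|y\|_\infty^2\le 2M(y)\le\frac{SA}{(SA)^{1-2/p}+\lambda}\|y\|_\infty^2 ,
\]
not the one you quote. Convexity plus the corollary then only give $M(\bF_y(y))\le\nu^2\frac{SA(1+\lambda)}{(SA)^{1-2/p}+\lambda}M(y)$. The infimum of this factor over $\lambda>0$ is $(SA)^{2/p}\nu^2$. For $p=\ln(SA)\ge 2$ this equals $e^2\nu^2>1$, because $\nu\ge 1-\delta\ge 1-1/(SA)$, so no drift recursion results. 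The one-point contraction is available only in $\|\cdot\|_\infty$, with $\nu$ close to $1$, so any distortion between the Lyapunov norm and $\|\cdot\|_\infty$ destroys it. The main term of the envelope must therefore be $\frac12\|u\|_\infty^2$; in Chen et al.\ the $p$-norm is the \emph{smoothing} norm, not the main one. Taking $p=\infty$, i.e.\ the paper's $\VL$, gives exactly $u_V^2=1+\lambda$ and $\ell_V^2=1+\lambda/(SA)$, hence $c_\lambda$.

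Two smaller points. First, your Cauchy--Schwarz treatment of the lag residuals uses boundedness of $\nabla M$ and of the noise. It gives $\cO(\taun\alpha_n^2)$ rather than the paper's Young-inequality bound $\cO(\taun^2\alpha_n^2)$, so you get $\sqrt{\ln N}$ in place of $\ln N$. That is correct and slightly sharper.

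Second, the convex-combination argument for $x_n\in\cK$ needs $\alpha_n\le 1$ for all $n$, not only for $n\ge\nstr$. With $\alpha_n=a/(n+1)$ and $a>1/c_\lambda>1$, the value $x_1(s_0,a_0)=(1-a)x_0(s_0,a_0)+a\widehat F$ can be negative, after which $\widehat F$ is undefined. The paper's inductive argument has the same hole. Capping $\alpha_n$ at $1$ repairs it without changing anything for $n\ge\nstr$, where $\alpha_n<1/18$.
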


Clearly, the optimal convergence rate of $\widetilde{O}(1/\sqrt{N})$ needs  the stepsize $\alpha_n$ to be set using $c_\lambda$, which depends on the underlying transition dynamics through $\nu$. In a typical RL setting, $\nu$ is unknown. A standard trick to overcome such a problematic dependence is iterate averaging, proposed independently by \cite{polyak1992acceleration} and \cite{ruppert1991stochastic}. In this scheme, a larger, albeit universal, stepsize is used in conjunction with iterate averaging. The next theorem gives the resulting parameter-free guarantee.
\begin{theorem}[\textbf{Parameter-Free Optimal Convergence}]\label{thm: 1TS.PR}
    Suppose Assumption~\ref{a: ergodicity} holds. Additionally, let the greedy policy w.r.t $\xstr,$ defined as $\pi_{\xstr}(s):=\argmin_a \xstr(s,a),$ assign a unique action to every state $s.$
    Let $(x_n)$ be generated according to \eqref{e: 1TS} with $x_0 \in \bRp^{SA}$ and $\alpha_n =(n+1)^{-\alpha},$ where $\alpha\in (1/2,1).$ Then, the running average $\bar{x}_N:=\frac{1}{N}\sum_{n=0}^{N-1}x_n$ of $x_0,\ldots,x_{N-1}$ satisfies 
    \[
        \bE\left\|\p{\bar{x}_N}{\xstr} - \ones \right\|_\infty \leq \bar{C}\frac{\ln N}{\sqrt{N}}, \qquad N>\nstr,
    \]
    where $\bar{C}>0$ is defined in Appendix \ref{appendix:constants}.
\end{theorem}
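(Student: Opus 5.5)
The plan is the standard Polyak--Ruppert route: linearize the mean drift $\bF$ around $\xstr$, invert the linearization, and control the remainder using the last-iterate rate of Theorem~\ref{thm: 1TS}.

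\textbf{Step 1: the Jacobian.} Under the uniqueness assumption on $\pi_{\xstr}$, there is a neighborhood of $\xstr$ on which $\min_{a'}x(s',a')=x(s',\pi_{\xstr}(s'))$. On this neighborhood $F$ coincides with the smooth policy-evaluation map $F_{\pi_{\xstr}}$. Its Jacobian at $\xstr$ is
\[
J:=\gamma\,\diag(\xstr)\,\cdots,\qquad J(s,a),(s',a') = \gamma e^{-\frac{\theta}{\gamma}r(s,a)}\cP_{\pi_{\xstr}}(s',a'|s,a)\,\xstr(s',a')^{\gamma-1}.
\]
In relative coordinates $z=\p{x}{\xstr}-\ones$, the matrix $M:=\diag(\xstr)^{-1}J\diag(\xstr)$ is nonnegative. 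Using $F(\xstr)=\xstr$, its row sums are $\gamma$ times a convex combination, so $\|M\|_\infty\le\gamma<1$. Hence the linearized mean drift $H:=D_\mu(I-M)$ is invertible with $\|H^{-1}\|_\infty\le 1/(\delta(1-\gamma))$.

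Outside this neighborhood, $F$ is still globally Lipschitz on $\cK$ in relative coordinates. This gives a remainder bound $\|\p{\bF(x)}{\xstr}-\ones-(I-H)z\|_\infty\le L\|z\|_\infty^2$ for all $x\in\cK$, absorbing the non-smooth region into the quadratic constant via $\|z\|\ge r_0$. The iterates stay in $\cK$ by the boundedness argument already used for Theorem~\ref{thm: 1TS}.

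\textbf{Step 2: decomposition.} In relative coordinates the recursion reads
\[
z_{n+1}=z_n-\alpha_n H z_n+\alpha_n(\rho_n+\xi_{n+1}),
\]
where $\rho_n$ is the quadratic remainder and $\xi_{n+1}$ is the Markov noise. Rearranging gives $Hz_n=(z_n-z_{n+1})/\alpha_n+\rho_n+\xi_{n+1}$. Averaging over $n<N$ and applying $H^{-1}$ yields $\bar z_N$ as the sum of four terms:
\begin{itemize}
\item a summation-by-parts term $\frac1N\sum_n (z_n-z_{n+1})/\alpha_n$, bounded by $\frac1N\big(\|z_0\|/\alpha_0+\|z_N\|/\alpha_N+\sum_n\|z_n\|\,|1/\alpha_n-1/\alpha_{n-1}|\big)$;
\item the remainder average $\frac1N\sum_n\bE\|z_n\|^2$;
\item the noise average $\frac1N\sum_n\xi_{n+1}$.
\end{itemize}

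\textbf{Step 3: bounding each piece.}

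The first term uses Theorem~\ref{thm: 1TS}, which gives $\bE\|z_n\|\lesssim \ln n/n^{\alpha/2}$. With $1/\alpha_n=(n+1)^\alpha$, the term is of order $N^{\alpha/2-1}\ln N=o(N^{-1/2})$, since $\alpha<1$.

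The remainder term needs a second-moment bound $\bE\|z_n\|^2\lesssim \ln^2 n/n^\alpha$. This comes from the same Moreau-envelope Lyapunov recursion used in the proof of Theorem~\ref{thm: 1TS}, which in fact controls the squared envelope. It gives $\frac1N\sum n^{-\alpha}\ln^2 n\sim N^{-\alpha}\ln^2N=o(N^{-1/2})$, because $\alpha>1/2$.

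The noise term is the crux. Write $\xi_{n+1}=e_{s_na_n}[\cdot]-D_\mu[\cdot]$ evaluated at $x_n$. I would split it three ways:
\begin{itemize}
\item a martingale part $\hat F-F$, whose average has second moment $\cO(1/N)$;
\item a Markov part, handled by a Poisson-equation solution $\hat V$ for the chain (this exists under Assumption~\ref{a: ergodicity}), turning $\sum_n(e_{s_na_n}-\eta_\mu)g(x_n)$ into a martingale plus telescoping terms plus $\sum\|x_{n+1}-x_n\|\lesssim\sum\alpha_n$;
\item an alternative to the Poisson equation, the $\tau_n$-lag conditioning from Theorem~\ref{thm: 1TS}, whose leftover terms are $\frac1N\sum_n\tau_n\alpha_n\lesssim N^{-\alpha}\ln N$.
\end{itemize}
Together these give $\bE\|\frac1N\sum\xi_{n+1}\|\lesssim \ln N/\sqrt N$.

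Collecting the four bounds and multiplying by $\|H^{-1}\|_\infty$ yields $\bar C\ln N/\sqrt N$ for $N>\nstr$. The first $\nstr$ iterates are absorbed into the constant using the boundedness of $\cK$.

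\textbf{Main obstacle.} The hardest step is making the linearization rigorous with a genuinely quadratic remainder. This needs both the uniqueness assumption and the second-moment last-iterate bound, since Theorem~\ref{thm: 1TS} only states a first-moment bound. I would first try to extract $\bE\|z_n\|_\infty^2$ directly from the Moreau-envelope drift inequality before its square root is taken. If that fails, I would use $\|z_n\|\le C$ on $\cK$ with a high-probability argument instead.
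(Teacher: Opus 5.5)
Your proposal is correct. After the linearization, though, it follows a genuinely different Polyak--Ruppert route from the paper.

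The linearization itself matches the paper's. Your $M$ equals $\gamma\,\diag(\xstr)^{-1}A^{\xstr}\diag(\xstr)^{\gamma}$, so $H=D_\mu(I-M)=I-B^*$. Your quadratic remainder (the Taylor term of $u^\gamma$ plus the greedy-policy switch, absorbed outside a ball via $\|z\|_\infty\ge r_0$) is Lemma~\ref{lem: linearize}. You are also right that the nonlinear term needs the \emph{second}-moment bound $\bE\|z_n\|_\infty^2\lesssim \tau_n^2/n^{\alpha}$ from the Moreau-envelope recursion before Jensen; the paper uses exactly this.

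The paper then unrolls the recursion through the products $G_{k+1:n}$ and splits $\bDelta_N$ into initial, transient, noise and nonlinear parts. It exchanges the double sums so that $\tF(x_k,z_k)$ and $\xi_k$ carry uniformly bounded weights $S_{k:N}$, and bounds the weighted noise sum by squaring it and conditioning lagged cross terms. You instead use the Polyak--Juditsky identity $H\bar z_N=\frac1N\sum_n (z_n-z_{n+1})/\alpha_n+\frac1N\sum_n\rho_n+\frac1N\sum_n\xi_{n+1}$, and invert $H$ with $\|H^{-1}\|_\infty\le 1/(\delta(1-\gamma))$.

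Your route needs no estimates on $G$ or $S_{k:N}$, and the noise enters as an unweighted average. A martingale part plus a Poisson-equation solution for the $(s_n,a_n)$ chain then gives $\cO(N^{-1/2})$ directly. The dependence on $x_n$ costs only $\frac1N\sum_n\|x_{n+1}-x_n\|_\infty=\cO(N^{-\alpha})$. This sidesteps the paper's delicate lag bookkeeping, where cross terms such as $\langle S_{s:N}\tF(x_s,z_s),S_{t:N}(\tF(x_t,z_t)-\tF(x_{t-2\tau_t},z_t))\rangle$ need care: a plain Cauchy--Schwarz bound on them only yields $\cO(\tau_N N^{-\alpha})$ for the squared error.

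The price is that your telescoping term carries factors $1/\alpha_n\sim n^{\alpha}$. You therefore genuinely need the first-moment last-iterate rate $\bE\|z_n\|_\infty\lesssim \ln n/n^{\alpha/2}$ from Theorem~\ref{thm: 1TS}. Boundedness of $\cK$ alone would give only $\cO(N^{\alpha-1})$, which is too slow for $\alpha>1/2$, and this step is where you use $\alpha<1$. The paper's transient term instead decays through $G$ using boundedness only.

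Minor fixes:
\begin{itemize}
\item The third item in your noise split is an alternative to the Poisson step, not a third component.
\item The boundary term should be $\|z_N\|_\infty/\alpha_{N-1}$.
\item Your remainder symbol $\rho_n$ clashes with the mixing rate $\rho$ of Assumption~\ref{a: ergodicity}.
\end{itemize}
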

\begin{remark}[\textbf{Optimal Rate with a Universal Stepsize}]
    The Polyak--Ruppert averages $(\bar{x}_N)$ achieve the optimal rate
    $\widetilde{\cO}(1/\sqrt{N})$ using a stepsize that is independent of
    unknown model parameters.
\end{remark}

% \begin{remark}[\textbf{Synchronous Version}]
%     %
%     Analogous results hold for the synchronous updates given by~\eqref{e: 1TS.sync}. In particular, for $\alpha_n=\frac{1}{c(n+1)},$ for some problem-dependent $c$ $(x_n)$ generated by~\eqref{e: 1TS.sync} achieve a convergence rate of $\cO(1/\sqrt{N}).$ These rates can be achieved with parameter-free stepsizes via Polyak-Ruppert averaging.
%     %
% \end{remark}

\subsection{Convergence rates: Two-timescale}
For the two-timescale algorithm, although the Bellman operator \(T\) is contractive in the sup norm, this contraction can be exploited only after controlling how accurately the fast recursion makes $-\frac{\gamma}{\theta}\ln g_n$ track the moving target $TQ_n$. The resulting tracking error perturbs the slow recursion and must be controlled together with the mixing and contraction errors. 

We now give our key result for the two-timescale method.
\begin{theorem}[\textbf{Finite-time Convergence: Two-timescale}]\label{thm: 2TS}
    Suppose Assumption~\ref{a: ergodicity} holds. Let $(Q_n,g_n)$ be updated by~\eqref{e: 2TS}, with the stepsize choices $\beta_n=\frac{b}{(n+1)},$ such that $b(1-\gamma)>1,$ and $\alpha_n=\frac{\ln(n+1)}{n+1}.$ Then,  $    \bE\big\|
    g_N-e^{-\frac{\theta}{\gamma}TQ_N}
    \big\|_\infty
    \leq
    C_g\frac{\ln^{3/2} N}{\sqrt N},$ and $
    \bE\|Q_N-\Qstr\|_\infty
    \leq C_Q\frac{\ln^{3/2} N}{\sqrt N},$
    for all $N>\max(n^*,e^{1/\delta}).$ The constants $C_g$ and $C_Q$ are specified in Appendix \ref{appendix:constants}.
\end{theorem}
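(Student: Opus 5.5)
We analyze the fast and slow recursions separately and then couple them through a joint drift bound. Define the fast-timescale target $G(Q) := e^{-\frac{\theta}{\gamma}TQ}$. Componentwise,
\[
G(Q)(s,a) = e^{-\frac{\theta}{\gamma}r(s,a)}\sum_{s'}\cP(s'|s,a)e^{-\theta\max_{a'}Q(s',a')}.
\]
This is exactly the conditional mean of $\widehat G(Q,s,a,s')$ given $(s,a)$.

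\textbf{Step 1: boundedness.} Since every update is a convex combination, we first show by induction that $Q_n$ stays in a sup-norm ball of radius $B_Q := \max\{\|Q_0\|_\infty, \Rm/(1-\gamma)\}$ plus a constant. We also show that $g_n$ stays in an interval $[c_g, C_g']\subset(0,\infty)$ determined by $B_Q$, $\theta$, $\gamma$, $\Rm$ and $g_0$. The key point is that $\widehat G$ and $-\frac{\gamma}{\theta}\ln g$ map these boxes into themselves. On that interval, $\ln$ is Lipschitz with constant $1/c_g$, and $G$ is Lipschitz in $Q$ with constant $L_G := \theta e^{\theta\Rm/\gamma}e^{\theta B_Q}$.

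\textbf{Step 2: fast recursion.} Set $z_n := g_n - G(Q_n)$. Then
\[
z_{n+1} = z_n + \alpha_n\big[-D_\mu z_n + \text{(Markov noise)}\big] - \big(G(Q_{n+1})-G(Q_n)\big) + \text{(residual from replacing } e_{s_n,a_n}\text{ by } D_\mu\text{)}.
\]
The mean map $z\mapsto (I-\alpha_n D_\mu)z$ contracts in sup norm with factor $1-\alpha_n\delta$. The target drift satisfies $\|G(Q_{n+1})-G(Q_n)\|_\infty\le L_G\beta_n\cdot O(1)$, and since $\beta_n/\alpha_n = b/\ln(n+1)$ this term is small relative to $\alpha_n$. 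We use the Moreau envelope of $\frac12\|\cdot\|_\infty^2$ as a smooth Lyapunov function $M(z)$. The Markovian noise is handled by conditioning back $\tau_n$ steps, using Assumption~\ref{a: ergodicity} together with $C\rho^{\tau_n}<\alpha_n^2$; the resulting error is $O(\alpha_n\alpha_{n-\tau_n}\tau_n)$. This gives
\[
\bE M(z_{n+1}) \le (1-c\,\delta\alpha_n)\bE M(z_n) + O\big(\alpha_n^2\tau_n\big) + O\big(\beta_n\sqrt{\bE M(z_n)}\big).
\]
The last term is absorbed by Young's inequality into $\tfrac{c\delta}{2}\alpha_n\bE M(z_n) + O(\beta_n^2/\alpha_n)$. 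With $\alpha_n=\ln(n+1)/(n+1)$, the product $\prod_k(1-c\delta\alpha_k)$ decays like $\exp(-c\delta\ln^2 n/2)$, which is faster than any polynomial once $n>e^{1/\delta}$; this is the role of that threshold. Unrolling then gives $\bE\|z_n\|^2_\infty\lesssim \alpha_n\tau_n \lesssim \ln^2 n/n$. Taking square roots and using Jensen yields a bound of order $\ln^{3/2}N/\sqrt N$ (with $\tau_n=O(\ln n)$), which is the $C_g$ claim.

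\textbf{Step 3: slow recursion.} Write
\[
Q_{n+1} = Q_n + \beta_n[TQ_n - Q_n + \epsilon_n], \qquad \epsilon_n := -\tfrac{\gamma}{\theta}\big(\ln g_n - \ln G(Q_n)\big).
\]
The slow update has no sampling noise, since $g_n$ is deterministic given the past, so $\|\epsilon_n\|_\infty\le \frac{\gamma}{\theta c_g}\|z_n\|_\infty$. Because $T$ is a $\gamma$-contraction,
\[
\|Q_{n+1}-\Qstr\|_\infty \le (1-\beta_n(1-\gamma))\|Q_n-\Qstr\|_\infty + \beta_n\|\epsilon_n\|_\infty.
\]
Taking expectations, inserting the Step 2 bound, and unrolling with $\beta_n=b/(n+1)$ and $b(1-\gamma)>1$ gives $\prod_{k=j}^{N}(1-\beta_k(1-\gamma))\le (j/N)^{b(1-\gamma)}$. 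Hence
\[
\bE\|Q_N-\Qstr\|_\infty \lesssim N^{-b(1-\gamma)} + \sum_j \frac{b}{j}\Big(\frac jN\Big)^{b(1-\gamma)}\frac{\ln^{3/2}j}{\sqrt j}\lesssim \frac{\ln^{3/2}N}{\sqrt N},
\]
where the last step uses $b(1-\gamma)>1>1/2$.

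\textbf{Main obstacle.} The hardest part is Step 2: obtaining a clean Markovian-noise bound for the asynchronous fast recursion while the target $G(Q_n)$ is moving. The mixing argument requires comparing $z_n$ with $z_{n-\tau_n}$ and $Q_n$ with $Q_{n-\tau_n}$, and these increments must be controlled through the boundedness from Step 1. Tracking the logarithmic factors in $\tau_n$ and $\alpha_n$ through that comparison is what produces the $\ln^{3/2}$ exponent.
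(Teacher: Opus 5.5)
\textbf{The gap is in Step 2, at the moving-target term.} You bound $\|G(Q_{n+1})-G(Q_n)\|_\infty$ by $L_G\beta_n\cdot O(1)$. After Young's inequality this gives
\[
\bE M(z_{n+1})\le\big(1-\tfrac{c\delta}{2}\alpha_n\big)\bE M(z_n)+O(\alpha_n^2\tau_n)+O\big(\beta_n^2/\alpha_n\big).
\]
This recursion does not unroll to $O(\alpha_N\tau_N)$. The last additive term equals $\alpha_n(\beta_n/\alpha_n)^2$, so it contributes order $(\beta_N/\alpha_N)^2=b^2/\ln^2(N+1)$ to the unrolled sum. Equivalently, the fixed point of $u\mapsto u-c\delta\alpha_n u+K\beta_n\sqrt{u}$ is $u\asymp(\beta_n/\alpha_n)^2$. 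If the target drift is controlled only by $O(\beta_n)$, you can only show that the fast iterate lags by $O(\beta_n/\alpha_n)$. For $\alpha_n=\ln(n+1)/(n+1)$ and $\beta_n=b/(n+1)$, that decays like $1/\ln n$. So Step 2 gives only $\bE\|z_N\|_\infty=O(1/\ln N)$. Step 3, which is otherwise fine, then inherits only $\bE\|Q_N-\Qstr\|_\infty=O(1/\ln N)$.

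\textbf{The missing idea is that the target moves slowly because the slow iterate is itself converging.} In your notation, $Q_{n+1}-Q_n=\beta_n(TQ_n-Q_n+\epsilon_n)$, so
\[
\|G(Q_{n+1})-G(Q_n)\|_\infty\le L_G\beta_n\big((1+\gamma)\|Q_n-\Qstr\|_\infty+\tfrac{\gamma}{\theta c_g}\|z_n\|_\infty\big).
\]
The resulting $\beta_n\|z_n\|_\infty^2$ part of the cross term is absorbed, because $\beta_n/\alpha_n=b/\ln(n+1)\to0$. Young's inequality on the rest leaves only $O\big(\tfrac{\beta_n^2}{\alpha_n}\bE\|Q_n-\Qstr\|_\infty^2\big)$. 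You then need two further ingredients:
\begin{itemize}
\item A second-moment version of Step 3, $\bE\|Q_{n+1}-\Qstr\|_\infty^2\le(1-c\beta_n)\bE\|Q_n-\Qstr\|_\infty^2+C\beta_n\bE\|z_n\|_\infty^2$ with $bc>1$. This is available, since squaring yields a rate close to $2(1-\gamma)$.
\item A joint closure of the two recursions, e.g.\ via $W_n:=\bE\|Q_n-\Qstr\|_\infty^2+\kappa\,\bE M(z_n)$. For large $n$ this satisfies $W_{n+1}\le(1-c'\beta_n)W_n+O(\alpha_n^2\tau_n)$ with $bc'>1$, so $W_N=O(\tau_N\ln^2 N/N)$, which gives both claimed rates.
\end{itemize}

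Following the paper will not close this gap either. In its expansion of $\bE\|y_{n+1}\|_2^2$ (its $y_n$ is your $z_n$), consider the cross term between $(I-\alpha_nD_\mu)y_n$ and the target increment $e^{-\frac{\theta}{\gamma}TQ_n}-e^{-\frac{\theta}{\gamma}TQ_{n+1}}$, which the paper calls $\epsilon_n$. It is written with prefactor $2\alpha_n$. Squaring $y_{n+1}=(I-\alpha_nD_\mu)y_n+\alpha_n\tG(Q_n,g_n,s_n,a_n,s_{n+1})+\epsilon_n$ actually gives prefactor $2$. That extra $\alpha_n$ is what makes the term $O(\alpha_n\beta_n)$ and lets it be folded into $C_2\tau_n^2\alpha_n^2$. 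With the correct prefactor the term is $O(\beta_n)$, and the paper's argument runs into the same obstruction as yours.
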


\begin{remark}[\textbf{Parameter-Free Optimal Rate}]
    The two-timescale iterates achieve the optimal $\widetilde{\cO}(1/\sqrt{N})$ rate without using parameter-dependent stepsizes. 
\end{remark}

\begin{remark}
    Unlike the one-timescale method, where parameter-free optimal rates are
    obtained through Polyak--Ruppert averaging, the two-timescale method
    achieves them directly with non-averaged iterates. Extending averaging to
    this setting is more delicate because the fast recursion tracks a moving
    target determined by $Q_n$.
\end{remark}

\section{Proof Outlines}
\label{s: proof.sketch}
Because $T$ is contractive in the standard $\|\cdot\|_\infty$ norm, the two-timescale analysis follows largely standard arguments, apart from the additional control of the tracking error. We therefore defer those proofs to Appendix~\ref{appendix: 2.TS}. Here, we sketch the one-timescale analysis, which departs substantially from standard techniques. We directly discuss the more challenging asynchronous case.

\subsection{Proof of Theorem~\ref{thm: 1TS}}
\label{s: 1TS.rate}
As highlighted in Section~\ref{s:main.results}, the additive one-timescale update in \eqref{e: 1TS} is not aligned with the contraction geometry of $F$ under the Thompson metric~\eqref{e: log.sup.metric}. This mismatch prevents a direct application of standard finite-time analyses for stochastic fixed-point iterations. The key ideas underlying our novel analysis can be summarized as follows. 

First, it is easy to see from $C_\ell$ and $C_u$'s definition given in \eqref{e: Cl.Cu.defn} that $\cK:=[C_\ell,C_u]^{SA}$ is a compact subset of $\bRp^{SA}.$ Next, a simple inductive argument shows that the iterates $(x_n)$ remain in $\cK$. In the scalar setting, i.e., when $S=A=1$, it is not difficult to see that $F(x) = (x/\xstr)^\gamma \xstr,$ i.e., it has a power-law structure. \cite[Theorem C.5]{thoppe2026reinforcement} combine the iterate  boundedness with the power-law structure of the $F$-operator to show that, in this \emph{scalar} case, the ratio $x_n/\xstr$ converges to $1$ at the optimal $\widetilde{O}(N^{-1/2})$ rate. Motivated by this observation, we focus the relative-error sequence $\left(x_n\oslash\xstr-\ones\right).$ 

Define the relative iterate $r_n:=\p{x_n}{\xstr}$. Multiplying \eqref{e: 1TS} by $\diag(\xstr)^{-1}$ then yields
\begin{multline}\label{e: 1TS.ratio}
    r_{n+1}-\ones =  r_n-\ones+\alpha_n\big[\p{\bF(x_n)}{\xstr} - \ones -(r_n - \ones)\big]
    \\
     +\alpha_n \big[\tF(x_n, s_n,a_n,s_{n+1})\big],
\end{multline}
where
\begin{multline}\label{e: noise}
    \tF(x,s,a,s') :=
    \left(D_\mu-e_{s,a}e_{s,a}^{\top}\right) \left(\p{x}{\xstr}\right)
    \\
    +
    \widehat{F}(x,s,a,s')
    \left(\p{e_{s,a}}{\xstr}\right)
    -
    D_\mu\left(\p{F(x)}{\xstr}\right).
\end{multline}
Although relative-error analysis is already nonstandard in the RL literature, the main difficulty is that the transformed drift
$\p{\bF(x_n)}{\xstr}-\ones$
has no immediate contraction property. Our key insight is to use the one-point contraction from Corollary~\ref{corol: local.con.bar.F}, which allows us to bring \eqref{e: 1TS.ratio} within the scope of Lyapunov-based stochastic fixed-point analysis; see \cite[Equation~(4)]{chen2021lyapunov}. In particular, we use the corollary to control
\[
    \left\|\p{\bF(x_n)}{\xstr} - \ones -(r_n - \ones)\right\|_\infty,
\]
while the temporally dependent noise $\tF(x_n,s_n,a_n,s_{n+1})$ is handled using standard Markov-noise arguments.

We now briefly describe how the remainder of our analysis proceeds. Since $\|\cdot\|_\infty$ is non-smooth, we employ the Moreau-envelope construction, first applied in an RL context by~\cite{chen2021lyapunov}. Specifically, we let $\lambda > 0$ and consider the smooth Lyapunov function
\begin{align}\label{e: lyapunov.function}
    \VL(r) := \min_{y} \left\{ \frac{1}{2}\|y\|_\infty^2 + \frac{1}{2\lambda}\|r-y\|_2^2 \right\}.
\end{align}
\citep[Lemma 2.1]{chen2020finite} shows that $\VL$ is convex, and satisfies $L_\lambda$-smoothness with respect to $\|\cdot\|_\infty,$ where $L_\lambda:= (\frac{SA}{\lambda}),$  i.e.,
\begin{multline*}
    \VL(r') \leq \VL(r) 
    + \left\langle \nabla\VL(r), r'-r \right\rangle 
    + \Big(\frac{L_\lambda}{2}\Big)\|r'-r\|^2_\infty.
\end{multline*}
Moreover,
\begin{equation}\label{e: lyapunov.norm}
            \frac{1}{(1+\lambda)}\ \|r\|^2_\infty \leq 2\VL(r) \leq \frac{SA}{(SA+\lambda)}\ \|r\|^2_\infty.
\end{equation}
Applying the smoothness inequality to~\eqref{e: 1TS.ratio} gives
\begin{align}\label{e: V.decomp}
    &\bE\VL(r_{n+1}-\ones) \leq \bE\VL(r_n-\ones)  
    \nonumber\\
    & \qquad\quad  + \alpha_n\bE\big\langle \nabla \VL(r_n-\ones) , \left(\p{\bF(x_n)}{\xstr}\right)-r_n\big\rangle 
    \nonumber\\
    & \qquad\quad + \alpha_n\bE\big\langle \nabla \VL(r_n-\ones) , \tF(x_n,s_n,a_n,s_{n+1})\big\rangle 
    \nonumber\\
    & \qquad\quad + \Big(\frac{L_\lambda}{2}\Big)\bE\|r_{n+1}-r_n\|^2_\infty.
\end{align}
The boundedness of $(x_n)$ implies that the last term is of order $\cO(\alpha_n^2)$. To control the Markovian-noise term, we 
add and subtract
\begin{equation}\label{e: inner.prod}
    \alpha_n\bE
    \big\langle
        \nabla\VL(r_{n-\taun}-\ones),
        \tF(x_{n-\taun},s_n,a_n,s_{n+1})
    \big\rangle.
\end{equation}
The resulting residual terms are controlled using the smoothness of
$\VL$, the Lipschitz continuity of  $\widehat{F}$ on $\cK$, and Young's inequality. In particular, the boundedness implies that the total contribution of the residual terms is bounded by
$\cO(\taun^2\alpha_{n-\taun}^2)$. The term
in~\eqref{e: inner.prod} is controlled using
Assumption~\ref{a: ergodicity}. Finally, using convexity of $\VL,$ we bound the second term in~\eqref{e: V.decomp} by 
\[
    \alpha_n\VL(( \p{\bF(x_n)}{\xstr}) - \ones) - \alpha_n\VL(r_n - \ones).
\]
We then use~\eqref{e: lyapunov.norm} to translate the local contraction property in Corollary~\ref{corol: local.con.bar.F} to a contraction in $\VL.$ This gives us the following result.
\begin{lemma}\label{lem: recurision.lyapunov}
    For all $n\geq \nstr,$ we have 
    \begin{multline*}
        \bE\VL(r_{n+1} - \ones) 
        \leq (1- c_\lambda\alpha_n) \bE\VL(r_n-\ones) + C_1\taun^2\alpha^2_n,
    \end{multline*}
    where $c_\lambda>0$ is as defined in Theorem~\ref{thm:  1TS} and $C_1$ is defined in the appendix.
\end{lemma}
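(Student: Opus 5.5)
We follow the decomposition \eqref{e: V.decomp} and bound each of its three increment terms, with everything reduced to a drift term $-c_\lambda\alpha_n\bE\VL(r_n-\ones)$ plus an $\cO(\taun^2\alpha_n^2)$ remainder. First, an induction shows $x_n\in\cK$ for all $n$. Indeed, $\widehat F(x,s,a,s')\in[C_\ell,C_u]$ whenever $x\in\cK$, because $C_\ell\le e^{-\theta\Rm/(\gamma(1-\gamma))}$ and
\[
e^{\theta\Rm/\gamma}\big(e^{\theta\Rm/(\gamma(1-\gamma))}\big)^{\gamma}=e^{\theta\Rm/(\gamma(1-\gamma))},
\]
and the update is a convex combination for $\alpha_n\le1$. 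Also $\xstr\in\cK$. Hence $r_n$, $\p{\bF(x_n)}{\xstr}$ and $\tF(x_n,\cdot)$ are all uniformly bounded by constants depending only on $C_\ell,C_u$. As a consequence, the quadratic term $\frac{L_\lambda}{2}\bE\|r_{n+1}-r_n\|_\infty^2$ is at most $\frac{SA}{2\lambda}(C_u/C_\ell)^2\alpha_n^2$.

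\textbf{Drift term.} By convexity of $\VL$,
\[
\langle\nabla\VL(r_n-\ones),\,(\p{\bF(x_n)}{\xstr}-\ones)-(r_n-\ones)\rangle\le \VL(\p{\bF(x_n)}{\xstr}-\ones)-\VL(r_n-\ones).
\]
We bound the first term using the upper bound in \eqref{e: lyapunov.norm}, then Corollary~\ref{corol: local.con.bar.F}, then the lower bound in \eqref{e: lyapunov.norm}:
\[
\VL(\p{\bF(x_n)}{\xstr}-\ones)\le \tfrac{SA}{2(SA+\lambda)}\nu^2\|r_n-\ones\|_\infty^2\le \nu^2\tfrac{SA(1+\lambda)}{SA+\lambda}\VL(r_n-\ones).
\]
This gives the drift contribution $-c_\lambda\alpha_n\bE\VL(r_n-\ones)$, with $c_\lambda$ exactly as in \eqref{e: def.c.lambda}.

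\textbf{Markov noise term (main obstacle).} We split $\langle\nabla\VL(r_n-\ones),\tF(x_n,s_n,a_n,s_{n+1})\rangle$ by adding and subtracting \eqref{e: inner.prod}, which leaves three pieces.

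The first piece is the main term $\bE\langle\nabla\VL(r_{n-\taun}-\ones),\tF(x_{n-\taun},s_n,a_n,s_{n+1})\rangle$. We condition on $\cF_{n-\taun}$. If $(s_n,a_n)$ were distributed as $\eta_\mu$, then $\bE[\tF(x,s_n,a_n,s_{n+1})]=0$ for fixed $x$: averaging $e_{s,a}e_{s,a}^\top$ gives $D_\mu$, and averaging $\widehat F$ over $s'$ gives $D_\mu F(x)$. So the conditional expectation is bounded by $2\sup\|\tF\|_\infty\,\|\nabla\VL\|_1\,C\rho^{\taun}$. Here $\|\nabla\VL(u)\|_1$ is bounded, since by smoothness and \eqref{e: lyapunov.norm} it is $\cO(\|u\|_\infty)$. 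By the definition \eqref{e: def.taun}, $C\rho^{\taun}<\alpha_n^2$, so this piece is $\cO(\alpha_n^2)$ after multiplying by $\alpha_n$; in fact it is better than needed.

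The second and third pieces are the differences $\nabla\VL(r_n-\ones)-\nabla\VL(r_{n-\taun}-\ones)$ and $\tF(x_n,\cdot)-\tF(x_{n-\taun},\cdot)$. Smoothness gives $\|\nabla\VL(u)-\nabla\VL(v)\|_1\le L_\lambda\|u-v\|_\infty$. The map $x\mapsto\tF(x,s,a,s')$ is Lipschitz on $\cK$, because $y\mapsto y^\gamma$ is Lipschitz on $[C_\ell,C_u]$ and $\min$ is $1$-Lipschitz. Each step moves the iterate by $\cO(\alpha_k)$, so
\[
\|x_n-x_{n-\taun}\|_\infty\le C\sum_{k=n-\taun}^{n-1}\alpha_k\le C\taun\alpha_{n-\taun}\le 2C\taun\alpha_n,
\]
where the last step uses $n\ge\nstr^{(3)}$, and $n>2\taun$ from $\nstr^{(1)}$ keeps the indices valid. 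Hence both pieces are $\cO(\taun\alpha_n)$ times bounded quantities. After multiplying by $\alpha_n$ they contribute $\cO(\taun\alpha_n^2)\le\cO(\taun^2\alpha_n^2)$. Young's inequality is only needed if one prefers to absorb a cross term into the drift. The delicate bookkeeping is keeping every constant uniform, which is what the compactness of $\cK$ provides.

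\textbf{Collecting terms.} We collect all the $\alpha_n^2$-order pieces into $C_1\taun^2\alpha_n^2$, using $\taun\ge1$. The condition $\alpha_n<1/c_\lambda$ from $\nstr^{(2)}$ ensures $1-c_\lambda\alpha_n\in(0,1)$. This yields the stated recursion for all $n\ge\nstr$.
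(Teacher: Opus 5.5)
Your proposal is correct and follows essentially the paper's route. You use the smoothness expansion \eqref{e: V.decomp}, and for the drift you combine convexity with \eqref{e: lyapunov.norm} and Corollary~\ref{corol: local.con.bar.F}, which yields exactly $c_\lambda$. You then split the Markov-noise term three ways around the lagged quantity \eqref{e: inner.prod}: mixing at lag $\taun$ controls the main piece, and the $\cO(\taun\alpha_n)$ movement of the iterates over that lag controls the two residual pieces.

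The only difference is minor. You bound the inner products by $\ell_1$--$\ell_\infty$ duality, using $\|\nabla\VL(u)-\nabla\VL(v)\|_1\le L_\lambda\|u-v\|_\infty$, where the paper uses Young's inequality. This gives slightly sharper $\cO(\alpha_n^3)$ and $\cO(\taun\alpha_n^2)$ residuals, but your constant need not match the tabulated $C_1$ exactly.
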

Since $\taun=\cO(\ln n),$ the above lemma reduces the analysis to a standard one-step contractive recursion in which the additive term decays faster than the contractive term.~\cite[see][Equation 7]{chen2021lyapunov}. The desired convergence rates in terms of $\VL$ then follow from the argument in~\cite[Theorem~A.3]{chen2021lyapunov} for different stepsize choices, which then translate to bounds in $\|\cdot\|_\infty$ via~\eqref{e: lyapunov.norm}. \hfil\qed

\subsection{Proof of Theorem~\ref{thm: 1TS.PR}}
\label{s: proof.1TS.PR}

Polyak–Ruppert averaging is well understood for stochastic linear recursions~\cite{durmus2025finite}. It was subsequently extended to synchronous risk-neutral Q-learning by~\cite{li2023statistical} and to general asynchronous nonlinear fixed-point iterations by~\cite{naskar2026parameter}. Following the approach of~\cite{naskar2026parameter}, we begin by expressing the one-timescale update~\eqref{e: 1TS} as a perturbed linear recursion. 

Since the map $u\mapsto u^\gamma$ is increasing on $\bRp$, we can write
\[
    \bF(x)=(I-D_\mu)x + D_\mu A^x x^\gamma,
\]
where we let $\px$ denote the greedy policy with respect to $x\in\bRp^{SA},$ and define
\[
    A^{x}(s,a,s',a') 
    := e^{-\frac{\theta}{\gamma} r(s,a)}\ \cP(s'|s,a)\ \ones_{\{\px(s')=a'\}}.
\]
Thus, the
nonlinearity of $\bF$ arises from both the power-law map $x\mapsto x^\gamma$ and the dependence on the greedy policy $\pi_x.$ To isolate these two sources of nonlinearity, we first replace $A^{\pi_x}$ by $A^{\pi_{\xstr}}$ and then take the first-order Taylor approximation of $x^\gamma$ around $\xstr$.
This gives us the following decomposition:
\begin{align}\label{e: barF.linear.decomp}
    \bF(x) = \bFlin(x) + D_\mu R(x)+ D_\mu(A^{x} - A^{\xstr})x^\gamma,
\end{align}
where $R(x)$ arises from the second-order Taylor remainder associated with the power-law map, whereas the third term is the nonlinear perturbation induced by changes in the greedy
policy. Importantly, $\bFlin$ is admits the following affine form
\begin{align*}
        \left( \p{\bFlin(x)}{\xstr} - \ones \right) = B^*\left(\p{x}{\xstr} -\ones \right),
\end{align*}
where $B^*$ is a matrix satisfying $\|B^*\|_\infty \leq \big( 1- \delta( 1 -\gamma) \big).$

Using~\eqref{e: barF.linear.decomp}, we rewrite~\eqref{e: 1TS} as the following perturbed linear recursion:
\begin{align}\label{e: recursive.rel.err}
    r_{n+1}- \ones 
    & = \big[ I - \alpha_n (I - B^* )\big] \left( r_n - \ones \right)
    \nonumber\\
    & + \alpha_n\tF(x_n,s_n,a_n,s_{n+1}) + \alpha_n \xi_n,
\end{align}
where $\xi_n$ collects the nonlinear components in~\eqref{e: barF.linear.decomp}. We apply~\eqref{e: recursive.rel.err} recursively to obtain a closed form for $\Delta_n:=(r_n-\ones),$ followed by taking the average of $\Delta_0,\ldots,\Delta_{N-1}.$ This gives us the following decomposition of the averaged relative error $\bDelta_N:=\left( (\p{\bar{x}_n}{\xstr}) -\ones\right),$ for $N>\nstr:$
\begin{align*}
    \bDelta_N = \bDeltainit_N + \bDeltatrans_N + \bDeltalin_N + \bDeltanlin_N,
\end{align*}
where $G_{n_1:n_2} := \prod_{n=n_1}^{n_2-1} \left[ I - \alpha_n \big( I - B^* \big) \right],$ and 
\begin{align*}
    \bDeltainit_N & := \frac{1}{N}\sum_{n=0}^{n_*-1} \Delta_n, \\
    \bDeltatrans_N & := \frac{1}{N}\sum_{n=n_*}^{N-1} G_{n_*:n} \Delta_{n_*}
    \\
    \bDeltalin_N & := \frac{1}{N}\sum_{n=n_*}^{N-1}\sum_{k=n_*}^{n-1}\alpha_n G_{k+1:n}\tF(x_k,s_k,a_k,s_{k+1}),
    \\
    \bDeltanlin_N & := \frac{1}{N}\sum_{n=n_*}^{N-1}\sum_{k=n_*}^{n-1}\alpha_n G_{k+1:n}\xi_k.
\end{align*}

It now suffices to bound each of the above terms. We control the first term using the boundedness of the iterates $(x_n)$ as follows:
\begin{lemma}\label{lem: PR.init}
    For all $N>\nstr,$ $\bE\|\bDeltainit_N\|_\infty\leq \frac{\nstr}{N}\left(\frac{C_u}{C_\ell}\right).$
\end{lemma}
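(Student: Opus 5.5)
The plan is to bound each summand of $\bDeltainit_N$ deterministically, using only the invariance of $\cK$, and then use the triangle inequality.

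\textbf{Step 1: invariance of $\cK$.} First I would verify the invariance claimed in the proof outline: $x_n\in\cK=[C_\ell,C_u]^{SA}$ for every $n$, by induction. The base case $x_0\in\cK$ holds by the definition \eqref{e: Cl.Cu.defn}. For the inductive step, suppose $x_n\in\cK$. Because $C_\ell\le 1\le C_u$, we have $C_\ell^\gamma\ge C_\ell$ and $C_u^\gamma\le C_u$. Hence
\[
e^{-\theta\Rm/\gamma}C_u^\gamma
\le \exp\!\Big(\tfrac{\theta \Rm}{\gamma}+\tfrac{\gamma\theta \Rm}{\gamma(1-\gamma)}\Big)
= \exp\!\Big(\tfrac{\theta\Rm}{\gamma(1-\gamma)}\Big)\le C_u,
\]
and symmetrically $\widehat F(x_n,s_n,a_n,s_{n+1})\ge C_\ell$. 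So $\widehat F(x_n,s_n,a_n,s_{n+1})\in[C_\ell,C_u]$. For $\alpha_n\in(0,1]$, the updated coordinate is a convex combination of $x_n(s_n,a_n)$ and $\widehat F$, so it stays in $[C_\ell,C_u]$. The other coordinates are unchanged. (If some early $\alpha_n$ could exceed $1$ this step would need care, but $\alpha_n=(n+1)^{-\alpha}\le1$.)

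\textbf{Step 2: pointwise bound on $\|\Delta_n\|_\infty$.} Since $\xstr=F(\xstr)$ and $\xstr$ is the limit of $F^k(\ones)$, the same argument gives $\xstr\in[C_\ell,C_u]^{SA}$. Therefore each ratio $x_n(s,a)/\xstr(s,a)$ lies in $[C_\ell/C_u,\,C_u/C_\ell]$. It follows that
\[
|x_n(s,a)/\xstr(s,a)-1|\le \max\Big\{\tfrac{C_u}{C_\ell}-1,\ 1-\tfrac{C_\ell}{C_u}\Big\}\le \tfrac{C_u}{C_\ell}.
\]
Hence $\|\Delta_n\|_\infty\le C_u/C_\ell$ almost surely, for every $n$.

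\textbf{Step 3: triangle inequality.} Applying the triangle inequality to the definition of $\bDeltainit_N$,
\[
\|\bDeltainit_N\|_\infty\le \frac1N\sum_{n=0}^{\nstr-1}\|\Delta_n\|_\infty\le \frac{\nstr}{N}\,\frac{C_u}{C_\ell}
\]
almost surely. Taking expectations gives the claim for every $N>\nstr$.

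The only substantive step is the invariance of $\cK$ in Step 1, and in particular the bound $\widehat F\in[C_\ell,C_u]$. That bound relies on the exponent choice $\theta\Rm/(\gamma(1-\gamma))$ being a fixed point of the map $c\mapsto \theta\Rm/\gamma+\gamma c$. Everything after Step 1 is routine.
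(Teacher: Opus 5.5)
\textbf{Verdict: correct, same route as the paper, one repairable slip.} You bound $\|\Delta_n\|_\infty$ almost surely using $x_n,\xstr\in\cK$, then apply the triangle inequality over the $\nstr$ initial terms. This is the paper's route; the paper cites \cite[Lemma C.1]{thoppe2026reinforcement} for $\widehat F\in[C_\ell,C_u]$ and simply asserts $\xstr\in\cK$, whereas you reprove both.

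\textbf{Where your constant is better than the paper's.} The paper's written proof bounds each term by $2C_u/C_\ell$. It therefore only obtains $2\nstr C_u/(C_\ell N)$, which is the $C^{\textnormal{init}}$ of Table~\ref{tab:constants-1TS} and twice the constant in the statement. Your two-sided observation $x_n(s,a)/\xstr(s,a)\in[C_\ell/C_u,\,C_u/C_\ell]$ gives $|r-1|\le\max\{C_u/C_\ell-1,\,1-C_\ell/C_u\}\le C_u/C_\ell$. That is what actually yields the lemma as stated.

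\textbf{The step to repair.} The displayed chain in Step 1 is wrong in two ways.
\begin{itemize}
\item It uses $C_u^\gamma\le \exp\big(\gamma\theta\Rm/(\gamma(1-\gamma))\big)$. This holds only when $C_u=e^{\theta\Rm/(\gamma(1-\gamma))}$. If $\max_{s,a}x_0(s,a)$ exceeds that value, then $C_u=\max_{s,a}x_0(s,a)$ and the middle inequality fails. The facts $C_u^\gamma\le C_u$ and $C_\ell^\gamma\ge C_\ell$ that you invoke do not imply it.
\item The prefactor should be $e^{\theta\Rm/\gamma}$, not $e^{-\theta\Rm/\gamma}$, since that is the bound on $\exp(-\frac{\theta}{\gamma}r(s,a))$.
\end{itemize}

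\textbf{The fix.} Your closing fixed-point remark already points to it.
\begin{itemize}
\item Upper bound: $e^{\theta\Rm/\gamma}C_u^\gamma\le C_u$ is equivalent to $\ln C_u\ge \theta\Rm/(\gamma(1-\gamma))$, which holds by the definition of $C_u$. In log form, $c-(\theta\Rm/\gamma+\gamma c)=(1-\gamma)\big(c-\theta\Rm/(\gamma(1-\gamma))\big)\ge 0$.
\item Lower bound: $e^{-\theta\Rm/\gamma}C_\ell^\gamma\ge C_\ell$ is equivalent to $\ln C_\ell\le-\theta\Rm/(\gamma(1-\gamma))$, which holds by the definition of $C_\ell$.
\end{itemize}
With this correction the rest goes through: the convex-combination step for $\alpha_n\le 1$, and $\xstr\in\cK$ via $F(\cK)\subseteq\cK$, $\ones\in\cK$, $F^k(\ones)\to\xstr$, and closedness of $\cK$.
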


Next, since $\|I-\alpha_n(I-B^*)\|_\infty\leq \left(1-\alpha_n\delta(1-\gamma)\right),$ the iterated matrix product $\|G_{\nstr:n}\|$ admits an exponentially decaying upper bound. Taking sum over $n$ then gives us the following bound:
\begin{lemma}\label{lem: PR.trans}
    For all $N>\nstr,$ $\bE\|\bDeltatrans_N\|_\infty\leq \frac{ C^\textnormal{trans}}{N}.$
\end{lemma}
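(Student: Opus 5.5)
We bound $\bDeltatrans_N$ deterministically in norm and then take expectations. By the triangle inequality,
\[
    \|\bDeltatrans_N\|_\infty \leq \frac{1}{N}\sum_{n=\nstr}^{N-1}\|G_{\nstr:n}\|_\infty\,\|\Delta_{\nstr}\|_\infty ,
\]
so the task splits into bounding $\|\Delta_{\nstr}\|_\infty$ and bounding the matrix products.

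\textbf{Bounding the initial error.} Since the iterates remain in $\cK$ and $\xstr\in\cK$, every entry of $r_{\nstr}=\p{x_{\nstr}}{\xstr}$ lies in $[C_\ell/C_u,\,C_u/C_\ell]$. Hence $\|\Delta_{\nstr}\|_\infty\leq C_u/C_\ell$ almost surely, the same constant that appears in Lemma~\ref{lem: PR.init}.

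\textbf{Bounding the matrix products.} For $n\geq\nstr$ the condition $\alpha_n<1$ (from $\nstr^{(2)}$) makes $I-\alpha_n(I-B^*)=(1-\alpha_n)I+\alpha_nB^*$ a convex combination. Its sup-norm is therefore at most $1-\alpha_n+\alpha_n\|B^*\|_\infty\leq 1-\alpha_n\kappa$, where $\kappa:=\delta(1-\gamma)$. Submultiplicativity and $1-u\leq e^{-u}$ then give
\[
    \|G_{\nstr:n}\|_\infty\leq \exp\Big(-\kappa\sum_{k=\nstr}^{n-1}\alpha_k\Big).
\]
With $\alpha_k=(k+1)^{-\alpha}$ we have $\alpha_k\geq (n+1)^{-\alpha}$ for $k\leq n$. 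Consequently $\sum_{k=\nstr}^{n-1}\alpha_k\geq (n-\nstr)(n+1)^{-\alpha}$.

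\textbf{Summing over $n$.} The bound from the previous step decays only like a stretched exponential in $n$. To avoid a crude estimate, I would use the integral comparison $\sum_{k=\nstr}^{n-1}\alpha_k\geq\frac{(n+1)^{1-\alpha}-(\nstr+1)^{1-\alpha}}{1-\alpha}$. This gives
\[
    \sum_{n\geq\nstr}\exp\Big(-\tfrac{\kappa}{1-\alpha}\big[(n+1)^{1-\alpha}-(\nstr+1)^{1-\alpha}\big]\Big)
    \leq e^{\frac{\kappa}{1-\alpha}(\nstr+1)^{1-\alpha}}\int_0^\infty e^{-\frac{\kappa}{1-\alpha}t^{1-\alpha}}\,dt .
\]
The integral on the right is finite; after the substitution $u=t^{1-\alpha}$ it becomes a Gamma function, namely $\frac{1}{1-\alpha}\big(\frac{1-\alpha}{\kappa}\big)^{1/(1-\alpha)}\Gamma\big(\frac{1}{1-\alpha}\big)$. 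The resulting bound on the full sum is independent of $N$.

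\textbf{Conclusion.} Combining the three steps yields $\bE\|\bDeltatrans_N\|_\infty\leq C^{\textnormal{trans}}/N$, where
\[
    C^{\textnormal{trans}}=\frac{C_u}{C_\ell}\,e^{\frac{\kappa}{1-\alpha}(\nstr+1)^{1-\alpha}}\,\frac{1}{1-\alpha}\Big(\frac{1-\alpha}{\kappa}\Big)^{\frac{1}{1-\alpha}}\Gamma\Big(\frac{1}{1-\alpha}\Big),
\]
plus possibly one extra boundary term $C_u/C_\ell$ coming from $n=\nstr$. The only potential obstacle is making this sum bound explicit. The factor $e^{c\,\nstr^{1-\alpha}}$ is harmless because $\nstr$ is a fixed constant, though it could be large.
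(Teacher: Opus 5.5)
Your argument is correct and has the same skeleton as the paper's proof:
\begin{itemize}
\item the triangle inequality;
\item an almost-sure bound on $\|\Delta_{\nstr}\|_\infty$ from $x_{\nstr},\xstr\in\cK$ (your $C_u/C_\ell$ is valid, and the paper uses the looser $2C_u/C_\ell$);
\item the factorwise contraction $\|I-\alpha_n(I-B^*)\|_\infty\le 1-\alpha_n\delta(1-\gamma)$;
\item an $N$-independent bound on $\sum_n\|G_{\nstr:n}\|_\infty$.
\end{itemize}

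The real difference is in that last step. The paper takes the product bound from Lemma~\ref{lem: Gnn.bound} and enlarges $(\nstr+1)^{1-\alpha}$ to $2\nstr$. It then dominates each summand by a multiple of $n^{-2}$ using $\max_y y^2e^{-\frac{c}{1-\alpha}y^{1-\alpha}}$ with $c=\delta(1-\gamma)$, and sums $n^{-2}$. You instead compare the whole sum with $\int_0^\infty e^{-\frac{\kappa}{1-\alpha}t^{1-\alpha}}\,dt$ and obtain a Gamma function. The paper's route avoids special functions. Yours avoids the inflation factor $e^{2\nstr\delta(1-\gamma)/(1-\alpha)}$ and the lossy $n^{-2}$ domination, and your constant is computed correctly.

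The maximum the paper uses equals $\big(\frac{2}{ec}\big)^{2/(1-\alpha)}$, but the tabulated $C^{\textnormal{trans}}$ carries the power $\frac{\alpha}{1-\alpha}$ instead. So your constant differs from the table's but is the more reliable one, and the lemma's substance (a constant independent of $N$) is fully established.

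Your hedge about an extra boundary term at $n=\nstr$ is unnecessary. Since $t\mapsto e^{-\frac{\kappa}{1-\alpha}t^{1-\alpha}}$ is decreasing, $e^{-\frac{\kappa}{1-\alpha}(n+1)^{1-\alpha}}\le\int_n^{n+1}e^{-\frac{\kappa}{1-\alpha}t^{1-\alpha}}\,dt$ holds for every $n\ge\nstr$, including $n=\nstr$. The integral therefore already absorbs that term.
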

%
% \begin{equation}
%     %
%     \bE\|\bDeltatrans_N\|_\infty\leq \frac{ C^\textnormal{trans}}{N}.
%     %
% \end{equation}
%
To handle the remaining terms, we switch the order of the double summation and define $\upsilon_{k,N}:=\alpha_n\sum_{n=k+1}^{N-1} G_{k+1:n}.$ This gives us
\begin{align}
    \bDeltalin_N & := \frac{1}{N}\sum_{k=n_*}^{N-2}\upsilon_{k,N} \tF(x_k,s_k,a_k,s_{k+1}),\label{e: PR.noise}
    \\
    \bDeltanlin_N & := \frac{1}{N}\sum_{k=n_*}^{N-2}\upsilon_{k,N} \xi_k, \label{e: PR.nlin}
\end{align}
Using the contractive nature of $G_{n_1:n_2},$  we can show $\|\upsilon_{k,N}\|_\infty$ is uniformly bounded. Then, squaring both sides of~\eqref{e: PR.noise} and using Assumption~\ref{a: ergodicity} to handle inner products between noise terms, we get the following bound
\begin{lemma}\label{lem: PR.noise}
    For all $N>\nstr,$ $\bE\|\bDeltalin_N\|_\infty \leq \frac{\tau_N C^\textnormal{noise}}{\sqrt N}.$
\end{lemma}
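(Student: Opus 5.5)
We bound $\bE\|\bDeltalin_N\|_\infty$ through the second moment. By Jensen and $\|\cdot\|_\infty\le\|\cdot\|_2$, it suffices to show $\bE\|\bDeltalin_N\|_2^2\le \tau_N^2 (C^\textnormal{noise})^2/N$. Write $\tF_k:=\tF(x_k,s_k,a_k,s_{k+1})$. First we record two facts.

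\textbf{Uniform bounds.} Since $x_k\in\cK$ for all $k$, each $\tF_k$ is bounded: $\|\tF_k\|_\infty\le C_{\tF}$ with $C_{\tF}$ depending only on $C_u/C_\ell$, $\Rm$, $\theta$, $\gamma$. Moreover, $\|\upsilon_{k,N}\|_\infty\le C_\upsilon$ uniformly in $k,N$. This holds because $\|G_{k+1:n}\|_\infty\le\prod_{j=k+1}^{n-1}(1-\alpha_j\delta(1-\gamma))$, and the stepsize-weighted sum of these products is bounded by a constant times $1/(\delta(1-\gamma))$. This is the standard estimate $\sum_n\alpha_n\prod_{j<n}(1-c\alpha_j)\le 1/c$ up to the ratio $\alpha_k/\alpha_n$, which is controlled for $n\ge\nstr$ by the condition $\alpha_{n-\taun}<2\alpha_n$. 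The same manipulation gives $\|\upsilon_{k,N}-\upsilon_{k+1,N}\|_\infty\lesssim\alpha_k$, which will be needed below.

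\textbf{Expanding the square.} We expand
\[
N^2\|\bDeltalin_N\|_2^2=\sum_{k,j}\langle\upsilon_{k,N}\tF_k,\upsilon_{j,N}\tF_j\rangle .
\]
The diagonal-band terms with $|k-j|\le\tau_N$ number at most $(2\tau_N+1)N$, and each is bounded by $SA\,C_\upsilon^2C_{\tF}^2$. Their total is therefore $\cO(\tau_N N)$.

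For the off-band pairs $j\ge k+\tau_N$, we condition on $\cF_{j-\tau_N}$ after replacing $x_j$ by $x_{j-\tau_N}$ inside $\tF_j$. The key observation is that, for a fixed $x$, the stationary mean of $\tF(x,s,a,s')$ is zero. Indeed, averaging $(s,a)\sim\eta_\mu$ and $s'\sim\cP(\cdot|s,a)$ gives $\bE[e_{s,a}e_{s,a}^\top]=D_\mu$, and the term $\widehat F\,(\p{e_{s,a}}{\xstr})$ averages to $D_\mu(\p{F(x)}{\xstr})$, so every term in \eqref{e: noise} cancels. Assumption~\ref{a: ergodicity} then gives
\[
\|\bE[\tF(x_{j-\tau_N},s_j,a_j,s_{j+1})\mid\cF_{j-\tau_N}]\|_\infty\lesssim C\rho^{\tau_N}\le\alpha_N^2 .
\]
The replacement error $\|\tF(x_j,\cdot)-\tF(x_{j-\tau_N},\cdot)\|$ is bounded by the Lipschitz constant of $\widehat F$ on $\cK$ times $\|x_j-x_{j-\tau_N}\|_\infty$, which is $\cO(\tau_N\alpha_{j-\tau_N})$. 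The earlier factor $\upsilon_{k,N}\tF_k$ is $\cF_{j-\tau_N}$-measurable when $k+1\le j-\tau_N$, so it can be pulled out of the conditional expectation. Summing over pairs, the mixing part contributes at most $N^2\alpha_N^2=\cO(N)$, using $\alpha_N^2\lesssim 1/N$ since $\alpha>1/2$.

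\textbf{Main obstacle.} The drift part $\sum_{k}\sum_{j}\tau_N\alpha_{j-\tau_N}$ is of order $N\cdot\tau_N N^{1-\alpha}$, which exceeds $\tau_N^2N$. A crude absolute bound here would lose the rate, so this is the step I expect to be hardest. To handle it, I would not bound each pair separately. Instead I would first form the partial sums $M_j:=\sum_{k<j-\tau_N}\upsilon_{k,N}\tF_k$ and bound the cross term as
\[
\sum_j\bE\langle M_j,\upsilon_{j,N}(\tF_j-\tF_j^{\circ})\rangle ,
\]
where $\tF_j^{\circ}:=\tF(x_{j-\tau_N},s_j,a_j,s_{j+1})$. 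Then I would apply Cauchy--Schwarz using $\bE\|M_j\|^2\lesssim\tau_N j$, which is an inductive hypothesis of the same form as the target bound. This gives a total of $\sum_j\tau_N\alpha_{j-\tau_N}\sqrt{\tau_N j}\lesssim \tau_N^{3/2}N^{3/2-\alpha}$, which is $o(\tau_N^2 N)$ because $\alpha>1/2$.

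If this induction becomes delicate, the fallback is a Poisson-equation decomposition, $\tF_j=$ martingale difference $+\,(h(x_j,z_j)-h(x_j,z_{j+1}))$. After Abel summation, the telescoping part produces terms weighted by $\|\upsilon_{k,N}-\upsilon_{k+1,N}\|\lesssim\alpha_k$ and $\|x_{k+1}-x_k\|\lesssim\alpha_k$, both summable into $\cO(N^{1-\alpha})$. The martingale part has orthogonal increments, so its second moment is $\cO(N)$.

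Combining the three pieces gives $\bE\|\bDeltalin_N\|_2^2\lesssim\tau_N^2/N$. Taking square roots yields $\bE\|\bDeltalin_N\|_\infty\le\tau_N C^\textnormal{noise}/\sqrt N$.
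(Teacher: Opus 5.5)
Your proposal is correct, and it departs from the paper at exactly the step you call the main obstacle. The paper also squares $\bDeltalin_N$ and separates near pairs ($s\ge t-2\tau_t$) from far pairs. It then handles the far pairs purely pairwise: it writes $\tF(x_t,z_t)=\tF(x_{t-2\tau_t},z_t)+[\tF(x_t,z_t)-\tF(x_{t-2\tau_t},z_t)]$, does the same at time $s$, and bounds each resulting inner product by Young's inequality. To make the mixed piece $\langle S_{s:N}\tF(x_{s-2\tau_s},z_s),\,S_{t:N}(\tF(x_t,z_t)-\tF(x_{t-2\tau_t},z_t))\rangle$ small, it replaces the first factor by $\bE_{s-\tau_s}\tF(x_{s-2\tau_s},z_s)$. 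That replacement is not justified, because the second factor depends on the trajectory after time $s$ and is not $\cF_{s-\tau_s}$-measurable. Without it, this piece is $O(1)\cdot O(\tau_N\alpha_{t-2\tau_t})$ per pair and sums to $O(\tau_N N^{2-\alpha})$, which is exactly the loss you identified. Your remedy pairs the drift difference at time $j$ with the $\cF_{j-\tau_N}$-measurable partial sum $M_j$ and applies Cauchy--Schwarz. The accumulated noise then enters only through $\sqrt{\bE\|M_j\|_2^2}$ rather than the crude $O(j)$, which captures the square-root cancellation a pairwise argument cannot see. Your Poisson-equation fallback is also valid, and it actually gives a second moment of $O(N)$, removing the $\tau_N$ factor. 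The paper's route is simpler and yields the tabulated constant $C^{\textnormal{noise}}$, but only through the invalid conditioning step. Yours gives a sound argument, with a different constant.

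There are three points to tighten.

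\emph{Closing the induction.} Do this with a self-bounding inequality rather than an appeal to the target bound. For fixed $N$ the weights $\upsilon_{k,N}$ are deterministic, so set $A^*:=\max_{m\le N-1}\bE\|\sum_{\nstr\le k<m}\upsilon_{k,N}\tF_k\|_2^2$. Your band, mixing and drift estimates then give $A^*\le c_1\tau_N N+c_2\tau_N N^{1-\alpha}\sqrt{A^*}$, hence $A^*\le 2c_1\tau_N N+c_2^2\tau_N^2N^{2-2\alpha}=O(\tau_N^2N)$, which is all you need.

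\emph{Uniform bound on $\upsilon_{k,N}$.} Your reason for $\sup_{k,N}\|\upsilon_{k,N}\|_\infty<\infty$ is off. The condition $\alpha_{n-\tau_n}<2\alpha_n$ controls step-size ratios only over windows of length $\tau_n$, not $\alpha_k/\alpha_n$ for $n\gg k$. The bound holds because $\prod_{j=k+1}^{n-1}(1-\delta(1-\gamma)\alpha_j)\le\exp(-\delta(1-\gamma)\sum_{j=k+1}^{n-1}\alpha_j)$ decays faster than $((n+1)/(k+1))^{\alpha}$ grows; split the sum at $n=2k$ to see this.

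\emph{Off-band range.} It should be $j\ge k+\tau_N+1$, to match your definition of $M_j$ and the measurability requirement $k+1\le j-\tau_N$.
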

%
% \[
%     \bE\|\bDeltalin_N\|_\infty \leq \frac{\tau_N^2 C^\textnormal{noise}}{\sqrt N}.
% \]
%
Finally, we bound the term arising from the nonlinear perturbation term $\xi_n.$
Since $R(x)$ arises from the Taylor remainder associated with $x^\gamma,$ we have $\|R(n)\|\leq \left\| x_n-\xstr\right\|^2= \cO(\left\| r_n - \ones \right\|^2).$ Further, since $x\mapsto \xp$ is piece-wise constant on $\bRp^{SA}$ there exists some $c_*,$ such that $A^x=A^{\xstr}$ if and only if $\|x-\xstr\|_\infty<c_*.$ This gives us
\begin{multline}
    \big\| A^x - A^{\xstr} \big\|_\infty  
        \leq 2\cdot\ones_{\{\|x-\xstr\|_\infty > c_* \}}
        \\
        \leq (2/c^2_*)\ \| x- \xstr \|^2_\infty = \cO\left(\|r_n-\ones\|^2_\infty\right).
\end{multline}
Hence, Theorem~\ref{thm: 1TS} implies $\bE\|\xi_n\|$ $=\cO(\bE\|r_n-\ones\|^2)=\cO(\taun^2\alpha_n).$ Finally, we apply triangle inequality on~\eqref{e: PR.nlin} to obtain the following result:
\begin{lemma}\label{lem: PR.nlin}
    For all $N>\nstr,$ $\bE\|\bDeltanlin_N\|_\infty \leq \frac{\tau_N^2 C^\textnormal{nlin}}{N^{\alpha}}.$
\end{lemma}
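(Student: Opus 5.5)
The plan is to start from the representation \eqref{e: PR.nlin}, $\bDeltanlin_N=\frac{1}{N}\sum_{k=\nstr}^{N-2}\upsilon_{k,N}\xi_k$, and apply the triangle inequality:
\[
\bE\|\bDeltanlin_N\|_\infty\le \frac{1}{N}\sum_{k=\nstr}^{N-2}\|\upsilon_{k,N}\|_\infty\,\bE\|\xi_k\|_\infty .
\]
Only two ingredients are then needed: a uniform bound on $\|\upsilon_{k,N}\|_\infty$, and a bound on $\bE\|\xi_k\|_\infty$ in terms of the second moment of the relative error.

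\emph{Step 1 (uniform bound on $\upsilon_{k,N}$).} Since $\|I-\alpha_n(I-B^*)\|_\infty\le 1-\alpha_n\delta(1-\gamma)$, we have $\|G_{k+1:n}\|_\infty\le\exp\big(-\delta(1-\gamma)\sum_{j=k+1}^{n-1}\alpha_j\big)$. Hence $\|\upsilon_{k,N}\|_\infty\le\sum_{n>k}\alpha_n\exp(-\delta(1-\gamma)\sum_{j=k+1}^{n-1}\alpha_j)$. Comparing this with $\int_0^\infty e^{-\delta(1-\gamma)t}\,dt$ (a Riemann-sum argument, valid since $\alpha_n\le 1$ for $n\ge\nstr$) gives $\|\upsilon_{k,N}\|_\infty\le C_\upsilon:=\frac{e^{\delta(1-\gamma)}}{\delta(1-\gamma)}$, independent of $k$ and $N$.

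\emph{Step 2 (quadratic bound on $\xi_k$).} Here $\xi_k$ is $\p{D_\mu R(x_k)}{\xstr}+\p{D_\mu(A^{x_k}-A^{\xstr})x_k^\gamma}{\xstr}$.
\begin{itemize}
\item For the Taylor remainder: $u\mapsto u^\gamma$ has a bounded second derivative on $[C_\ell,C_u]$, and all iterates lie in $\cK$. So $|R(x)(s,a)|\le \frac{\gamma(1-\gamma)}{2}C_\ell^{\gamma-2}e^{\theta\Rm/\gamma}\|x-\xstr\|_\infty^2$, and $\|x-\xstr\|_\infty\le C_u\|\p{x}{\xstr}-\ones\|_\infty$.
\item For the policy-switching term: I use the uniqueness of the greedy action. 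It gives $c_*>0$ such that $A^x=A^{\xstr}$ whenever $\|x-\xstr\|_\infty<c_*$. Then $\|A^x-A^{\xstr}\|_\infty\le 2e^{\theta\Rm/\gamma}\ones_{\{\|x-\xstr\|_\infty\ge c_*\}}\le 2e^{\theta\Rm/\gamma}c_*^{-2}\|x-\xstr\|_\infty^2$, and $x^\gamma$ is bounded by $C_u^\gamma$.
\end{itemize}
Together these give $\|\xi_k\|_\infty\le C_\xi\|r_k-\ones\|_\infty^2$ deterministically, with $C_\xi$ depending on $C_\ell,C_u,c_*,\theta,\Rm,\gamma$.

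\emph{Step 3 (second moment of the relative error).} This is the main obstacle, because Theorem~\ref{thm: 1TS} gives a first-moment bound, whereas we need $\bE\|r_k-\ones\|_\infty^2$. I would not pass through Theorem~\ref{thm: 1TS}; instead I would go back to Lemma~\ref{lem: recurision.lyapunov}, which controls $\bE\VL(r_k-\ones)$ directly. With $\alpha_n=(n+1)^{-\alpha}$ and $\alpha<1$, the standard unrolling of $(1-c_\lambda\alpha_n)$-recursions with additive term $C_1\tau_n^2\alpha_n^2$ (as in \cite[Theorem~A.3]{chen2021lyapunov}) gives $\bE\VL(r_k-\ones)\le C'\tau_k^2\alpha_k$ for $k\ge\nstr$. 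The transient term $\prod(1-c_\lambda\alpha_j)\VL(r_{\nstr}-\ones)$ decays like $\exp(-c k^{1-\alpha})$, so it is absorbed into the constant. By \eqref{e: lyapunov.norm}, $\bE\|r_k-\ones\|_\infty^2\le 2(1+\lambda)C'\tau_k^2\alpha_k$. The constant $c_\lambda$ enters only this proof, not the stepsize, so the result stays parameter-free.

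\emph{Step 4 (summation).} Combining the steps,
\[
\bE\|\bDeltanlin_N\|_\infty\le \frac{C_\upsilon C_\xi 2(1+\lambda)C'}{N}\sum_{k=\nstr}^{N-2}\tau_k^2(k+1)^{-\alpha}.
\]
Since $\tau_k$ is nondecreasing in $k$ (because $\alpha_k$ decreases), we have $\tau_k\le\tau_N$. Also $\sum_{k\le N}(k+1)^{-\alpha}\le N^{1-\alpha}/(1-\alpha)$. The right-hand side is therefore at most $\tau_N^2C^{\textnormal{nlin}}N^{-\alpha}$ with $C^{\textnormal{nlin}}:=\frac{2(1+\lambda)C_\upsilon C_\xi C'}{1-\alpha}$, as claimed.
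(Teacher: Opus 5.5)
Your proposal is correct and follows essentially the same route as the paper: the triangle inequality with a uniform bound on $\upsilon_{k,N}$ (the paper's $K_G$), the deterministic quadratic bound on $\|\xi_k\|_\infty$ from Lemma~\ref{lem: linearize}, the second-moment bound $\bE\|r_k-\ones\|_\infty^2\le C\tau_k^2\alpha_k$, and then $\tau_k\le\tau_N$ with $\sum_k (k+1)^{-\alpha}\le N^{1-\alpha}/(1-\alpha)$; the paper attributes the second-moment bound to Theorem~\ref{thm: 1TS}, but it is really the pre-Jensen bound from that theorem's proof, which is exactly what you derive from Lemma~\ref{lem: recurision.lyapunov}. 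One small caveat is that correctly unrolling \eqref{e: perturbed.linear} puts $\alpha_k$, not $\alpha_n$, in front of $G_{k+1:n}$ (as in the appendix's $S_{k:N}$), so $\upsilon_{k,N}=\alpha_k\sum_{n>k}G_{k+1:n}$ and your Riemann-sum comparison with step $\alpha_n$ does not apply verbatim, although the uniform bound still holds after a routine adjustment such as splitting the sum at $n\approx 2k$.
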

%
% \begin{equation}\label{e: PR.nlin}
%     %
%     \bE\|\bDeltanlin_N\|_\infty \leq \frac{\tau_N^2 C^\textnormal{nlin}}{N^{\alpha}}.
%     %
% \end{equation}
%
Finally, we add the bounds in Lemmas~\ref{lem: PR.init},~\ref{lem: PR.trans},~\ref{lem: PR.noise}, and,~\ref{lem: PR.nlin}. Using $\tau_N\leq \max(\ln C,4\alpha \ln N)/\ln(1/\rho),$ and taking maximum over the constants finishes the proof. \hfil\qed

\section{Conclusions and Future Work}
\label{sec:conclusions}
We performed a finite-time analysis of two recently proposed algorithms for discounted exponential-utility RL algorithms that operate using one and two timescales, respectively. Our bounds are optimal up to logarithmic factors for both algorithms, and are derived for stepsizes that are universal. The proofs of the finite-time bounds for the one-timescale algorithm depart significantly from prior works, and the proof technique involving local contractivity for these bounds, may be of independent interest. 

As future work, it would be interesting to incorporate feature-based representations and function approximation into the two aforementioned algorithms and establish finite-time guarantees for the resulting schemes.

\bibliography{references}

\newpage

\appendix

\section{Constants in the finite-time bounds}
\label{appendix:constants}

\begin{table}[ht]
\centering
\small
\setlength{\tabcolsep}{3pt}
\begin{tabular}{ll}
\hline
\textbf{Constant} & \textbf{Expression} \\
\hline
$C_1^{(s)}$
& $\dfrac{2L_\lambda C_u^2}{C_\ell^2}$ \\

$c_\lambda^{(s)}$
& $\gammaf^2u_V^2/\ell_V^2$ \\

$\ell_V$
& $\left(1+\frac{\lambda}{SA}\right)^{1/2}$ \\

$u_V$ & $\left(1+\lambda\right)^{1/2}$
\\

$c_V$&$\displaystyle
\frac{4u_V^2C_u^2}{\ell_V^2C_\ell^2}$\\

$C'_{x,s}$
& $\displaystyle
\left[c_V
(n_*+1)^{ac_\lambda^{(s)}}
+
2u_V^2C_1^{(s)}
C_{\Gamma,1}(c_\lambda^{(s)},a)
\right]^{1/2}
$ \\

$C_\Gamma(u,v)$
& $\dfrac{v^2 2^{vu}}{vu-1}$ \\

$C_1$
& $\displaystyle
(68 SA+80\lambda)
\left(\frac{SAC_u}{\lambda C_\ell}\right)^2
$ \\

$T_{n_*}$ & $\frac{\ln(n_* + 1)\ln(\frac{C}{\rho a^2 n_*^2}) }{\ln^2(n_*)\ln(1/\rho)}$
\\

$C_x$
& $\displaystyle
\left[
\left(2n_*c_V
+
u_V^2C_1C_\Gamma(c_\lambda,a)
\right)
T_{n_*}^2
\right]^{1/2}
$ \\

$C_{\Gamma,1}(u,v)$
& $\displaystyle
\frac{2}{u}
\max\!\left\{
2^v,\;
\left[
\frac{4e^v}{u(1-v)}
\right]^{\frac{v}{1-v}}
\right\}
$ \\

$C_{\Gamma,n_*}(u,v)$
& $\displaystyle
e^{\frac{u}{1-v}(n_*+1)^{1-v}}
$ \\

$C_j$ & $\displaystyle
\frac{2C_{\Gamma,n_*}(c_\lambda,\alpha)C_u^2}
{\ell_V^2C_\ell^2}$\\

$C_\Gamma'$
& $\displaystyle C_j
\left(
\frac{\alpha}{c_\lambda e}
\right)^{\frac{\alpha}{1-\alpha}}
+
C_1C_{\Gamma,1}(c_\lambda,\alpha)
$ \\

$C_r'$
& $\displaystyle
C_j
\left[
\frac{2e\alpha e^{ac_\lambda(k+1)^{1-\alpha}}}
{c_\lambda(1-\alpha)}
\right]^{-\frac{\alpha}{1-\alpha}}
+
C_1C_{\Gamma,1}(c_\lambda,\alpha)
$ \\

$C_x^{(\alpha)}$
& $\displaystyle
\left[
2u_V^2C_\Gamma'\left(\frac{\ln(N)\ln(n_* + 1)[\ln (C/\rho) + \alpha]}{\ln^2(n_*)\ln(1/\rho)}\right)^2
\right]^\frac{1}{2}$ \\

$C_{\mathrm{quad}}$
& $\displaystyle
\max\left\{
\frac{2C_u^2}{c_*^2},
e^{\frac{\theta}{\gamma}R_{\max}}
\frac{\gamma(1-\gamma)}{2}
\frac{C_u^2}{C_\ell^{2-\gamma}}
\right\}
$ \\

$C^{\mathrm{init}}$ & $\frac{2 C_u \nstr}{C_\ell}$ 
\\
$C^{\mathrm{noise}}$
& $2\left(\frac{6SAK_GC_u}{ C_\ell}\right)^2 (C_\alpha+1)^2$ \\

$C^{\mathrm{nonlin}}$
& $\displaystyle
\frac{
2u_V^2C_\Gamma'
C_{\mathrm{quad}}
(1+C_u^\gamma)}
{C_\ell\delta(1-\gamma)(1-\alpha)}
$ \\

$C^{\mathrm{trans}}$
& $\left[\frac{2}{e\delta(1-\gamma)}\right]^{\frac{\alpha}{1-\alpha}} \left(\frac{\pi^2 C_u}{3C_\ell}\right)e^{\frac{2\nstr\delta(1-\gamma)}{1-\alpha}}
$ \\

$\overline{C}$
& $\displaystyle
\max\left\{
n_*\frac{C_u}{C_\ell}+C^{\mathrm{trans}},
C^{\mathrm{noise}},
\frac{16\alpha^2C^{\mathrm{nonlin}}}
{\ln^2(1/\rho)}
\right\}
$ \\
\hline
\end{tabular}
\caption{Constants in the bounds for the one-timescale algorithm.}
\label{tab:constants-1TS}
\end{table}

\begin{table}[ht]
\centering
\small
\setlength{\tabcolsep}{3pt}
\begin{tabular}{ll}
\hline
\textbf{Constant} & \textbf{Expression} \\
\hline

$\Cm$
&
$(1+b+\ln\nstr)\frac{\Rm}{1-\gamma}
$
\\

$c_\theta$ & $e^{\frac{2\theta}{\gamma}\Cm}$\\

$C_{2,1}$
&
$\displaystyle
(SA)^2
\left[
41c_\theta
\tau_n^2\alpha_n^2
+
\theta^2C_{\mathrm{max}}^2
\left(
e^{2\theta}
+
2c_\theta
\right)
\right]
$
\\

$C_2$
&
$\displaystyle
C_{2,1}
+
(4c_\theta SA    
(8+3C_{\mathrm{max}}^2)
$
\\

$C_y'$
&
$\displaystyle
2c_\theta
(1+\theta^2C_{\mathrm{max}}^2)
(n_*+1)
+
\frac{8C_2}{\delta}
$
\\

$C_g$
&
$\displaystyle
\frac{2\ln C+4}{\ln(1/\rho)}
(C_y')^{1/2}
$
\\

$C_3$
&
$\displaystyle
\frac{\gamma\sqrt{SA}}{\theta}
e^{\frac{\theta}{\gamma}C_{\mathrm{max}}}
C_g
$
\\

$C_{\Gamma}(1-\gamma,\beta)$
&
$\displaystyle
\frac{b^2\,2^{b(1-\gamma)}}
{b(1-\gamma)-1}
$
\\

$C_Q$
&
$\displaystyle2C_{\mathrm{max}}(2n_*)^{b(1-\gamma)}+C_3C_{\Gamma}(1-\gamma,\beta)$
\\
\hline
\end{tabular}
\caption{Constants in the bounds for the two-timescale algorithm.}
\label{tab:constants-2TS}
\end{table}

\section{Proof of Theorem~\ref{thm: 1TS-sync}}
\label{s:1TS.Sync.Proof}

In this section, we derive finite-time convergence rates for the synchronous one-timescale algorithm defined in~\eqref{e:x.synchronous.update}. Throughout this appendix, for all $n\geq 0$, we use $\bE_n[\cdot]$ to denote the conditional expectation $\bE[\cdot\mid\cF_n].$

Under the synchronous update rule~\eqref{e:x.synchronous.update}, the relative error $r_n:=\big(\p{x_n}{\xstr} - \ones \big)$ satisfies
\begin{align}\label{e: ratio.scyn}
    \big(r_{n+1}-\ones\big) & = \big(r_n-\ones\big) +\alpha_n\left[\p{F(x_n)}{\xstr}  -r_n \right] 
    \nonumber\\
    & \quad +\alpha_n\omega_{n+1},
\end{align}
where
\[
    \omega_{n+1} := \p{\left(\hF_{n+1} - F(x_n)\right)}{\xstr}.
\]
We claim that~\eqref{e: ratio.scyn} is a contractive update with a rapidly decaying perturbation. To establish this claim, we first prove the local contraction property of $F$ near its fixed point $\xstr,$ as stated in Proposition~\ref{lem: local.con.F}.

\begin{proof}[\textbf{Proof of Proposition~\ref{lem: local.con.F}}]
    For all $x\in \cK,$ define
    \begin{equation}
        m_x:= \min_{s,a}r_x(s,a),\quad  M_x:= \max_{s,a}r_x(s,a),
    \end{equation}
    where $r_x:= \p{x}{\xstr}.$ Then, for $x\in\cK,$
    \begin{multline}\label{e: F.upper}
        F(x) = F(r_x\xstr) \overset{(a)}{\preceq} F(M_x\xstr)
        \\
        \overset{(b)}{=} M_x^\gamma F(\xstr) \overset{(c)}{=} M_x^\gamma \xstr,
    \end{multline}
    where $(a)$ follows since $F$ is monotone, $(b)$ follows since $F$ is homogeneous with degree $\gamma$, and $(c)$ follows since $\xstr$ is the unique fixed point of $F.$ Similar arguments give
    \begin{multline}\label{e: F.lower}
        F(x) = F(r_x\xstr) \succeq F(m_x\xstr)
        \\
        = m_x^\gamma F(\xstr) = m_x^\gamma \xstr.
    \end{multline}
    Multiplying~\eqref{e: F.upper} and~\eqref{e: F.lower} by $\diag(\xstr)^{-1}$ gives us the following sandwiching relation:
    \begin{equation}\label{e: F.sandwich}
        m_x^\gamma \leq \big(\p{F(x)}{\xstr}\big) \leq M_x^\gamma, \quad \text{for all} \quad x\in\cK.
    \end{equation}
    Subtracting $\ones$ from both sides and taking norms, we obtain
    \begin{align*}
        \left\|\p{F(x)}{\xstr} - \ones \right\|_\infty 
        & \leq \max\left\{ |M^\gamma_x-1| , |m_x^\gamma-1|\right\}
        \\
        & \leq \gammaf\cdot\max\left\{ |M_x-1| , |m_x-1|\right\}
        \\
        & = \gammaf  \left\|r_x - \ones \right\|_\infty,
    \end{align*}
    where
    \[
        \gammaf := \sup_{y\in \left[\frac{C_\ell}{C_u}, \frac{C_u}{C_\ell}\right]\setminus\{1\}}\frac{|y^\gamma-1|}{|y-1|}<1.
    \]
    This completes the proof of Proposition~\ref{lem: local.con.F}.
\end{proof}

Having established the local contraction property of $F$ in Proposition~\ref{lem: local.con.F}, we now introduce an appropriate Lyapunov function. Since the $\|\cdot\|_\infty$ is a  non-smooth norm, we fix a parameter $\lambda>0$ and define $\VL:\bR^{SA}\to\bR$ by
\begin{align}\label{e: def.lyapunov}
    \VL(r) := \min_{y} \left\{ \frac{1}{2}\|y\|_\infty^2 + \frac{1}{2\lambda}\|r-y\|_2^2 \right\}.
\end{align}
Then, we have the following result from~\cite[Lemma 2.1]{chen2020finite}:
\begin{lemma}\label{lem: ceecee}
    The function $\VL$ satisfies the following:
    \begin{enumerate}
        \item $\VL$ is convex, and is $L_\lambda$-smooth with respect to $\|\cdot\|_\infty,$ where $L_\lambda:= \frac{SA}{\lambda},$ i.e., for all $r,r',$
        \begin{align}\label{e: lyapunov.smooth}
            \VL(r') &\leq \VL(r) 
            \nonumber\\
            & + \left\langle \nabla\VL(r), r'-r \right\rangle 
            + \frac{L_\lambda}{2}\|r'-r\|^2_\infty,
        \end{align}
        \item For all $r,$ we have
        \begin{equation}\label{e: lyapunov.equivalent}
            (1/u^2_V)\ \|r\|^2_\infty \leq 2\VL(r) \leq (1/\ell^2_V)\ \|r\|^2_\infty,
        \end{equation}
        where $\ell_V:= \sqrt{1+\frac{\lambda}{SA}}$ and $u_V:= \sqrt{1+\lambda}.$
    \end{enumerate}
\end{lemma}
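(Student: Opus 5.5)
The statement just given is Lemma~\ref{lem: ceecee}, the properties of the Moreau envelope $\VL$ of $\frac12\|\cdot\|_\infty^2$ with respect to $\frac{1}{2\lambda}\|\cdot\|_2^2$. My plan is to derive both parts from standard facts about infimal convolutions, essentially reproducing \cite[Lemma 2.1]{chen2020finite}.

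\textbf{Convexity.} Write $\VL = f \,\square\, g$, where $f(y)=\frac12\|y\|_\infty^2$ and $g(z)=\frac{1}{2\lambda}\|z\|_2^2$. The map $(r,y)\mapsto f(y)+g(r-y)$ is jointly convex, and partial minimization over $y$ preserves convexity, so $\VL$ is convex.

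\textbf{Smoothness.} The function $g$ is $\frac1\lambda$-smooth with respect to $\|\cdot\|_2$. Since $f$ is convex, the infimal convolution $f\,\square\, g$ inherits the smoothness of $g$. To see this, take the minimizer $y^*(r)$; then $\nabla\VL(r)=\nabla g(r-y^*(r))$. Evaluating the objective at $r'$ with $y=y^*(r)$ gives
\[
\VL(r')\le f(y^*)+g(r'-y^*)\le \VL(r)+\langle\nabla g(r-y^*),r'-r\rangle+\tfrac{1}{2\lambda}\|r'-r\|_2^2 .
\]
Finally, using $\|v\|_2^2\le SA\|v\|_\infty^2$ turns the last term into $\frac{SA}{2\lambda}\|r'-r\|_\infty^2$, which is exactly \eqref{e: lyapunov.smooth} with $L_\lambda=SA/\lambda$. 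The one subtle point is identifying the gradient. Since $g$ is differentiable, one can either use Danskin's theorem or verify directly that $\nabla g(r-y^*)$ is a subgradient of the convex function $\VL$; the displayed upper bound then shows it is the unique one.

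\textbf{Norm equivalence.} For the upper bound, compare $\|\cdot\|_\infty$ with $\|\cdot\|_2$. Since $\|y\|_\infty\le\|y\|_2$, we have $\VL(r)\le \min_y\{\frac12\|y\|_2^2+\frac1{2\lambda}\|r-y\|_2^2\}=\frac{1}{2(1+\lambda)}\|r\|_2^2$. The last equality is an explicit quadratic minimization, attained at $y=r/(1+\lambda)$. This yields an $\ell_2$-type bound, which I then convert to $\|\cdot\|_\infty$. The sharper constants stated in the lemma should come from restricting $y$ to multiples $y=tr$ and optimizing over $t$. This gives
\[
2\VL(r)\le \min_t\big\{t^2\|r\|_\infty^2+\tfrac{(1-t)^2}{\lambda}\|r\|_2^2\big\}
=\frac{\|r\|_\infty^2\|r\|_2^2}{\lambda\|r\|_\infty^2+\|r\|_2^2}.
\]
Using $\|r\|_2^2\le SA\|r\|_\infty^2$ then gives $2\VL(r)\le \frac{SA}{SA+\lambda}\|r\|_\infty^2=\ell_V^{-2}\|r\|_\infty^2$.

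For the lower bound, use $\|r-y\|_2\ge\|r-y\|_\infty$ and the triangle inequality. Writing $a=\|y\|_\infty$, we have $\|r-y\|_\infty\ge \|r\|_\infty-a$, so
\[
2\VL(r)\ge \min_{a}\big\{a^2+\tfrac1\lambda(\|r\|_\infty-a)^2\big\}=\frac{\|r\|_\infty^2}{1+\lambda}=u_V^{-2}\|r\|_\infty^2 .
\]

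I expect the main obstacle to be pinning down the constants in the upper bound: checking that the scalar minimization yields exactly $SA/(SA+\lambda)$ and that the constants are not swapped relative to $\ell_V$ and $u_V$. If this becomes delicate, I will simply cite \cite[Lemma 2.1]{chen2020finite}.
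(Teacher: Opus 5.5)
The paper gives no proof of this lemma; it imports it from \cite[Lemma 2.1]{chen2020finite}. Your proposal reconstructs the result from first principles, and the reconstruction is correct.

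\emph{Smoothness.} Your argument is the standard one: take the unique minimizer $y^*(r)$, whose existence and uniqueness follow from strong convexity in $y$. Plug it into the objective at $r'$ and expand the exact quadratic $\frac{1}{2\lambda}\|r'-y^*\|_2^2$. To identify $\nabla\VL(r)$ you need neither Danskin nor the subgradient check. Any $w\in\partial\VL(r)$ exists because $\VL$ is finite and convex, and your upper bound forces $\langle w-v,h\rangle\le\frac{1}{2\lambda}\|h\|_2^2$ for all $h$, hence $w=v$.

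\emph{Norm equivalence.} Chen et al.\ bound $\|r-y\|_2^2\le SA\|r-y\|_\infty^2$ for every $y$ and then solve the resulting pure $\ell_\infty$ envelope exactly. You first restrict to $y=tr$ and apply $\|r\|_2^2\le SA\|r\|_\infty^2$ afterwards. Both give $SA/(SA+\lambda)$. In their notation, $\ell_{cs}=1/\sqrt{SA}$ and $u_{cs}=1$, so $(1+\lambda\ell_{cs}^2)^{1/2}$ and $(1+\lambda u_{cs}^2)^{1/2}$ are exactly $\ell_V$ and $u_V$; nothing is swapped.

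\emph{Points to tighten.} The substitution $\|r\|_2^2\le SA\|r\|_\infty^2$ is legitimate because $B\mapsto AB/(A+B)$ is increasing; say so. In the lower bound, square the reverse triangle inequality in the form $\|r-y\|_\infty\ge\bigl|\|r\|_\infty-\|y\|_\infty\bigr|$. A one-sided bound $u\ge c$ does not give $u^2\ge c^2$ when $c<0$, which is the case whenever $\|y\|_\infty>\|r\|_\infty$. The preliminary bound $\VL(r)\le\frac{1}{2(1+\lambda)}\|r\|_2^2$ only yields the weaker constant $SA/(1+\lambda)$ and can be dropped.

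\emph{What each route buys.} The citation gives brevity and covers arbitrary norm pairs. Your route makes the constants transparent. Also, since $\VL$ is convex, your descent inequality is equivalent to $\|\nabla\VL(r)-\nabla\VL(r')\|_1\le L_\lambda\|r-r'\|_\infty$. That is the Lipschitz-gradient form the paper invokes later, so your self-contained proof covers those downstream uses as well.
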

Combining~\eqref{e: lyapunov.smooth} with the update~\eqref{e: ratio.scyn}, we get for all $n\geq 0,$
\begin{align}\label{e: ratio.sync.decomp}
    & \bE\VL(r_{n+1}-\ones) 
    \nonumber\\
    & \leq \bE\VL(r_n-\ones)  + \frac{L_\lambda}{2}\bE\|r_{n+1}-r_n\|^2_\infty 
    \nonumber\\
    & \quad + \bE\left\langle \nabla \VL(r_n-\ones) , r_{n+1}-r_n\right\rangle
    \nonumber\\
    & \overset{(a)}{=} \bE\VL(r_n-\ones) + \alpha^2_n\frac{L_\lambda}{2}\bE\|\p{\hF_{n+1}}{\xstr} - r_n\|^2_\infty
    \nonumber\\
    & \quad+ \alpha_n\bE\left\langle \nabla \VL(r_n-\ones) , \big(\p{F(x_n)}{\xstr}\big) - r_n\right\rangle 
    \nonumber\\
    & \quad + \alpha_n\bE\left\langle \nabla \VL(r_n-\ones) , \bE_n[\omega_{n+1}]\right\rangle
    \nonumber\\
    & \overset{(b)}{=} \bE\VL(r_n-\ones) + \alpha^2_n\frac{2 L_\lambda C^2_u}{C^2_\ell}
    \nonumber\\
    & \quad+ \alpha_n\bE\left\langle \nabla \VL(r_n-\ones) , \big(\p{F(x_n)}{\xstr}\big) - r_n\right\rangle
    \nonumber\\
    & \overset{(c)}{\leq } \bE\VL(r_n-\ones) + \alpha^2_n\frac{2 L_\lambda C^2_u}{C^2_\ell}
    \nonumber\\
    & \quad+ \alpha_n\bE\VL\big(\p{F(x_n)}{\xstr}\big) - \ones\big) - \alpha_n\bE\VL\big( r_n - \ones\big),
\end{align}
where $(a)$ follows from the update~\eqref{e: ratio.scyn} and the fact that $r_n$ is $\cF_n$-measurable, $(b)$ follows since
\begin{equation}
    \cF_n \omega_n = \p{\big(\bE_n\hF_{n+1} - F(x_n)\big)}{\xstr} = 0,
\end{equation}
and since $\|\hF_{n+1}\|_\infty \leq C_u/C_\ell$ by~\cite[Lemma C.1]{thoppe2026reinforcement}. Finally, $(c)$ uses convexity of $\VL.$ 

Further, note that
\begin{align}
    \VL\left( \p{F(x_n)}{\xstr} -\ones \right) 
    & \overset{(a)}{\leq} \frac{1}{2\ell^2_V}\left\| \p{F(x_n)}{\xstr}-\ones \right\|^2_\infty 
    \nonumber\\
    &  \overset{(b)}{\leq} \gammaf^2\frac{1}{2\ell^2_V}\left\| r_n-\ones \right\|^2_\infty 
    \nonumber\\
    &  \overset{(c)}{=} \gammaf^2\frac{u^2_V}{\ell^2_V}\ \VL( r_n-\ones ),
\end{align}
where $(a)$ and $(c)$ use~\eqref{e: lyapunov.equivalent}, and $(b)$ follows from Proposition~\ref{lem: local.con.F}.
Plugging above inequality into~\eqref{e: ratio.sync.decomp}, we have for $n\geq \nstr,$
\begin{multline}
    \bE\VL(r_{n+1} - \ones) 
    \\
    \leq \big(1-c^{(s)}_{\lambda}\alpha_n \big)\bE\VL(r_n-\ones) + C^{(s)}_1\alpha^2_n.
\end{multline}
This yields the desired one-step recursion in the standard form of~\cite[Equation~(7)]{chen2021lyapunov}. Unrolling this recursion, for any $N>\nstr$, we obtain
\begin{align}
    &\bE\VL(r_N - \ones) 
    \nonumber\\
    &\leq  \Gamma^{(s)}_{\nstr:N}\VL(r_{\nstr}-\ones) + C^{(s)}_1\sum_{n=\nstr}^{N-1}\alpha^2_n\ \Gamma^{(s)}_{n+1:N}
    \nonumber\\
    &\leq  \frac{1}{2\ell^2_V}\ \Gamma^{(s)}_{\nstr:N}\ \|r_0-\ones\|_\infty^2 + C^{(s)}_1\sum_{n=\nstr}^{N-1}\alpha^2_n\ \Gamma^{(s)}_{n+1:N}
    \nonumber\\
    &\leq  \left(\frac{2C^2_u}{\ell^2_VC^2_\ell}\right)\ \Gamma^{(s)}_{0:N} + C^{(s)}_1\sum_{n=\nstr}^{N-1}\alpha^2_n\ \Gamma^{(s)}_{n+1:N},
\end{align}
where we define the iterated products
\[
    \Gamma^{(s)}_{n_1:n_2} = \prod_{n = n_1}^{n_2 -1} \left(1-c^{(s)}_\lambda\alpha_n \right), \quad \text{for all }\nstr\leq n_1<n_2.
\]
Applying Lemma~\ref{lem: Gnn.bound} under the condition $ac_\lambda^{(s)}>1,$ we obtain
\begin{align}
    \bE\VL(r_N - \ones) & \leq \left(\frac{2C^2_u}{\ell^2_VC^2_\ell}\right)\left(\frac{\nstr+1}{N+1}\right)^{ac^{(s)}_\lambda}
    \nonumber\\
    & \quad + \frac{C_{1}^{(s)}C_{\Gamma,1}(c^{(s)}_\lambda,a)}{N+1}.
\end{align}
Finally, using~\eqref{e: lyapunov.equivalent}, we get
\begin{align*}
    \bE\|r_N - \ones\|^2_\infty & \leq \left(\frac{4u^2_VC^2_u}{\ell^2_VC^2_\ell}\right)\left(\frac{\nstr+1}{N+1}\right)^{ac^{(s)}_\lambda} 
    \\
    & \quad + \frac{2u^2_VC^{(s)}_1C_{\Gamma,1}(c^{(s)}_\lambda,a)}{N+1}
    \\
    & \overset{(a)}{ \leq } \frac{\left(C'_{x,s}\right)^2}{N+1},
\end{align*}
where, in $(a)$, we use $(N+1)^{-ac_\lambda^{(s)}}\leq 1/(N+1)$, which follows from $ac_\lambda^{(s)}>1,$ together with the definition of $C'_{x,s}$ in Table~\ref{tab:constants-1TS}. 

Theorem~\ref{thm: 1TS-sync} then follows from Jensen's inequality. \hfill\qed

\section{Proof of Theorem~\ref{thm: 1TS}}
\label{s:1TS.async.Proof}

In this section, we prove Theorem~\ref{thm: 1TS} following the approach outlined in Section~\ref{s: 1TS.rate}.

To derive the standard contractive one-step recursion in~\cite[Equation~(7)]{chen2021lyapunov}, we first establish the local contraction properties of the modified mean operator $\bF,$ as stated in Corollary~\ref{corol: local.con.bar.F}.

\begin{proof}[\textbf{Proof of Corollary~\ref{corol: local.con.bar.F}}]

Recall from~\eqref{e: bar.F}, that the modified mean operator $\bF$ is defined as
\[
    \bF(x) := D_\mu F(x) + (I-D_\mu)x,
\]
where $D_\mu:=\diag(\eta_\mu).$ To obtain Corollary~\ref{corol: local.con.bar.F}, note that for all $x\in\cK,$ and state-action pair $(s,a),$ we have
\begin{align*}
    & \frac{\p{\big(\bF(x)}{\xstr}\big)(s,a)-1}{\|\p{x}{\xstr}-\ones\|_\infty} 
    \\
    & = \eta_\mu(s,a)\frac{\big(\p{\big(F(x)}{\xstr}\big)(s,a)-1\big)}{\|\p{x}{\xstr}-\ones\|_\infty}  
    \\
    & \qquad + (1-\eta_\mu(s,a))\frac{\big(\p{x}{\xstr}\big)(s,a)-1}{\|\p{x}{\xstr}-\ones\|_\infty} 
    \\
    & \overset{(a)}{\leq} \eta_\mu(s,a)\frac{\|\p{\big(F(x)}{\xstr}\big)(s,a)-\ones\|_\infty}{\|\p{x}{\xstr}-\ones\|_\infty}  
    \\
    & \qquad + (1-\eta_\mu(s,a))\frac{\|\p{x}{\xstr}-\ones\|_\infty}{\|\p{x}{\xstr}-\ones\|_\infty} 
    \\
    & \overset{(b)}{\leq}  \eta_\mu(s,a)\gammaf  + (1-\eta_\mu(s,a))
    \\
    & = 1 - \eta_\mu(s,a)(1-\gammaf) \leq 1 - \delta(1-\gammaf),
\end{align*}
where $(a)$ uses since for any vector $x,$ $x(s,a)\leq |x(s,a)|\leq \|x\|_\infty,$ while $(b)$ follows from Proposition~\ref{lem: local.con.F}.

The claim follows as we use a similar argument to obtain a lower bound of $\delta(1-\gammaf)-1,$ implying that
\[
    \frac{|\p{\big(\bF(x)}{\xstr}\big)(s,a)-1|}{\|\p{x}{\xstr}-\ones\|_\infty} \leq 1 - \delta(1-\gammaf).
\]
This completes the proof of Corollary~\ref{corol: local.con.bar.F}.
\end{proof}

Having established the local contraction property of $\bF$ in Corollary~\ref{corol: local.con.bar.F}, we next rewrite the one-timescale update~\eqref{e: 1TS} as a contractive stochastic recursion perturbed by a rapidly decaying Markovian noise term. To this end, observe that the drift term in~\eqref{e: 1TS} can be written as
\begin{align*}
    \Big[& \hF(x_n,s_n,a_n,s_{n+1}) - x(s_n,a_n) \Big] e_{s_n,a_n}
    \\
    & = \hF(x_n,s_n,a_n,s_{n+1})e_{s_n,a_n} - x_n
    \\
    & + (I - e_{s_n,a_n}e_{s_n,a_n}^\top)x_n 
    \\
    & = D_\mu F(x_n) + (I-D_\mu)x_n -x_n
    \\
    & + (D_\mu - e_{s_n,a_n}e^\top_{s_n,a_n})x_n
    \\
    & + \hF(x_n,s_n,a_n,s_{n+1})e_{s_n,a_n} - D_\mu F(x_n)
    \\
    & = \bF(x_n) - x_n + \diag(\xstr)\tF(x,s_n,a_n,s_{n+1})
\end{align*}
where, 
\begin{multline*}
    \tF(x,s,a,s') :=
    \left(D_\mu-e_{s,a}e_{s,a}^{\top}\right) \left(\p{x}{\xstr}\right)
    \\
    +
    \widehat{F}(x,s,a,s')
    \left(\p{e_{s,a}}{\xstr}\right)
    -
    D_\mu\left(\p{F(x)}{\xstr}\right).
\end{multline*}
Thus, multiplying~\eqref{e: 1TS} by $\diag(\xstr)^{-1}$ gives us
\begin{multline}\label{e: ratio}
    \left(r_{n+1}-\ones\right) = \left( r_n-\ones \right) +\alpha_n\big[\p{\bF(x_n)}{\xstr}-r_n\big]
    \\
     +\alpha_n \big[\tF(x_n, s_n,a_n,s_{n+1})\big],
\end{multline}
where $\tF(x_n, s_n,a_n,s_{n+1})$ denotes the Markovian noise associated with $\p{\bF(x_n}{\xstr}).$ 

Next, we employ the smooth Lyapunov function $\VL$ defined in~\eqref{e: def.lyapunov}. Using the properties of $\VL$ established in Lemma~\ref{lem: ceecee}, together with the local contraction property of $\bF$ from Corollary~\ref{corol: local.con.bar.F}, we prove Lemma~\ref{lem: recurision.lyapunov}, which shows that, for all $n\geq\nstr,$
\[
    \bE\VL(r_{n+1} - \ones) \leq (1-c_\lambda\alpha_n)\bE\VL(r_n-\ones) + \cO(\taun^2\alpha^2_n),
\]
where $c_\lambda$ and $\taun$ are defined in~\eqref{e: def.c.lambda} and immediately after~\eqref{e: def.nstar}, respectively.

\begin{proof}[\textbf{Proof of Lemma~\ref{lem: recurision.lyapunov}}]

Applying $\VL$ to $(r_{n+1}-\ones),$ and using~\eqref{e: ratio}, we have for all $n\geq \nstr,$
\begin{align}\label{e: error.rn.decomp}
    & \bE\VL(r_{n+1}-\ones) 
    \nonumber\\
    & \overset{(a)}{\leq} \bE\VL(r_n-\ones)  + \frac{L_\lambda}{2}\bE\|r_{n+1}-r_n\|^2_\infty 
    \nonumber\\
    & \qquad + \bE\left\langle \nabla \VL(r_n-\ones) , r_{n+1}-r_n\right\rangle
    \nonumber\\
    & \overset{(b)}{=} \bE\VL(r_n-\ones) + \frac{L_\lambda}{2}\bE\|r_{n+1}-r_n\|^2_\infty
    \nonumber\\
    & \qquad+ \alpha_n\bE\left\langle \nabla \VL(r_n-\ones) , \p{\bar{F}(x_n)}{\xstr} - r_n\right\rangle 
    \nonumber\\
    & \qquad + \alpha_n\bE\left\langle \nabla \VL(r_n-\ones) , \tF(x_n,s_n,a_n,s_{n+1})\right\rangle,
\end{align}
where $(a)$ follows from~\eqref{e: lyapunov.smooth}, $(b)$ uses the update rule~\eqref{e: ratio}.

We now bound each term in~\eqref{e: error.rn.decomp} one-by-one. Note that
\begin{align}\label{e: bound.one.step.r}
    & \| r_{n+1} -r_n\|^2_\infty 
    \nonumber\\
    & \overset{(a)}{=} \alpha^2_n \left( \hF(x_n,s_n,a_n,s_{n+1}) - x_n(s_n,a_n) \right)^2
    \nonumber\\
    & \quad \times \|\p{e_{s_n,a_n}}{\xstr}\|^2_\infty
    \nonumber\\
    & \overset{(b)}{\leq} \left(\hF(x_n,s_n,a_n,s_{n+1}) - x_n(s_n,a_n)\right)^2 \left(\frac{\alpha^2_n}{C^2_\ell}\right)
    \nonumber\\
    & \overset{(c)}{\leq} \left(\frac{4C^2_u}{C^2_\ell}\right) \alpha^2_n,
\end{align}
where $(a)$ follows from the update rule~\eqref{e: 1TS} and the definition of $r_n,$ $(b)$ follows since $\xstr\in \cK,$ and $(c)$ follows since $x_n\in \cK,$ and by~\cite[Lemma C.1]{thoppe2026reinforcement}, $\hF(x_n,s_n,a_n,s_{n+1})\in [C_\ell,C_u].$ Thus, we conclude
\begin{equation}
    \frac{L_\lambda}{2}\ \bE\|r_{n+1}-r_n\|^2_\infty \leq \left(\frac{2L_\lambda C^2_u}{C^2_\ell}\right) \alpha^2_n.
\end{equation}
Next, using the convexity of $\VL,$ we have
\begin{align}\label{e: rn.decomp.convex}
    \big\langle \nabla & \VL(r_n-\ones), \left(\p{\bar{F}(x_n)}{\xstr}\right) - r_n\big\rangle
    \nonumber\\
    & = \langle \nabla \VL(r_n-\ones) ,\left(\left(\p{\bar{F}(x_n)}{\xstr}\right) - \ones\right)  - \big(r_n-1)\big\rangle
    \nonumber\\
    & \leq   \VL\left(\left(\p{\bar{F}(x_n)}{\xstr}\right) - \ones\right) - \VL\big(r_n-1\big).
\end{align}
Moreover,
\begin{align}\label{e: rn.decomp.contract}
    \VL\left( \left(\p{\bF(x_n)}{\xstr}\right) - \ones \right) 
    & \overset{(a)}{\leq} \frac{1}{2\ell^2_V}\left\| \left(\p{\bF(x_n)}{\xstr}\right) -\ones \right\|^2_\infty 
    \nonumber\\
    &  \overset{(b)}{\leq} \frac{\nu^2}{2\ell^2_V}\left\| r_n-\ones \right\|^2_\infty 
    \nonumber\\
    &  \overset{(c)}{\leq} \nu^2\ \frac{u^2_V}{\ell^2_V}\ \VL( r_n-\ones ),
\end{align}
where $(a)$ and $(c)$ follow from~\eqref{e: lyapunov.equivalent}, while $(b)$ follows from Corollary~\ref{corol: local.con.bar.F}. Thus, combining~\eqref{e: rn.decomp.convex} and~\eqref{e: rn.decomp.contract}, we conclude that
\begin{multline}\label{e: r.contract}
    \left\langle \nabla \VL(r_n-\ones), \left(\p{\bar{F}(x_n)}{\xstr}\right) - r_n\right\rangle 
    \\
    \leq -\left( 1 -\nu^2\ \frac{u^2_V}{\ell^2_V} \right) \VL( r_n-\ones).
\end{multline}
Plugging the bounds in~\eqref{e: bound.one.step.r} and~\eqref{e: r.contract} into the recursive relation~\eqref{e: error.rn.decomp}, we have for $n\geq \nstr,$
\begin{align}\label{e: error.rn.recurse}
    & \bE\VL(r_{n+1}-\ones) 
    \nonumber\\
    & \leq \big( 1 -c_\lambda\alpha_n \big)\bE\VL(r_n-\ones) + \left(\frac{2L_\lambda C^2_u}{C^2_\ell}\right) \alpha^2_n
    \nonumber\\
    & \qquad + \alpha_n\bE\left\langle \nabla \VL(r_n-\ones) , \tF(x_n,s_n,a_n,s_{n+1})\right\rangle,
\end{align}
where $c_\lambda$ is as defined in~\eqref{e: def.c.lambda}. To bound the inner-product term
\[
    T_n:= \bE\left\langle \nabla \VL(r_n-\ones) , \tF(x_n,s_n,a_n,s_{n+1})\right\rangle, 
\]
we use a conditioning argument based on Assumption~\ref{a: ergodicity}. We construct the following decomposition:
\begin{equation}\label{e: T.decomp}
    T_n = T_{n,1} + T_{n,2} + T_{n,3},
\end{equation}
where, letting $v_n$ and $z_n$ denote $\nabla\VL(r_n-1)$ and  $(s_n,a_{n+1}, s_{n+1}),$ respectively, we define
\begin{equation}
    \begin{aligned}
    T_{n,1} & := \bE\left\langle v_{n-\tau_n}, \tF(x_{n-\tau_n}, z_n)  \right\rangle 
    \\
    T_{n,2} & := \bE\left\langle v_{n-\tau_n}, \tF(x_n, z_n) - \tF(x_{n-\tau_n}, z_n)\right\rangle,
    \\
    T_{n,3} & := \bE\left\langle v_n - v_{n-\tau_n}, \tF(x_n,z_n) \right\rangle.
    \end{aligned}
\end{equation}
Note that the above decomposition is valid since, for $n\geq \nstr\geq \nstr^{(1)},$
we have $n>2\taun.$ Note that, $T_{n,1}$ satisfies
\begin{align}
    & \bE_{n-\taun} T_{n,1}
    \nonumber\\
    & \overset{(a)}{=} \bE\left\langle v_{n-\tau_n}, \bE_{n-\taun}\tF(x_{n-\tau_n}, z_n)  \right\rangle
    \nonumber\\
    & \overset{(b)}{\leq} \frac{\alpha_n}{2} \left\| v_{n-\taun}\right\|^2_2 + \frac{1}{2\alpha_n}\big\|\bE_{n-\taun}\tF(x_{n-\tau_n}, z_n)\big\|^2_2
    \nonumber\\
    & \leq \frac{\alpha_n}{2} (SA)\|v_{n-\taun}\|^2_\infty + \frac{(SA)}{2\alpha_n}\big\|\bE_{n-\taun}\tF(x_{n-\tau_n}, z_n) \big\|^2_\infty
    \nonumber\\
    & \overset{(c)}{\leq} \frac{\alpha_n}{2} (SA)L_\lambda\|r_{n-\taun}-\ones\|^2_\infty + \frac{(2SA)}{\alpha_n}\left(\frac{36C^2_u}{C^2_\ell}\right)C^2\rho^{2\taun}
    \nonumber\\
    & \overset{(d)}{\leq} \frac{\alpha_n}{2} (SA)L_\lambda\|r_{n-\taun}-\ones\|^2_\infty + (2SA)\left(\frac{36C^2_u}{C^2_\ell}\right)\alpha^3_n 
    \nonumber\\
    & \overset{(e)}{\leq} \alpha_n (SA)L_\lambda\|r_n-\ones\|^2_\infty 
    + (2SA)\left(\frac{36C^2_u}{C^2_\ell}\right)\alpha^3_n 
    \nonumber\\
    & \quad + \alpha_n (SA)L_\lambda\|r_n - r_{n-\taun}\|^2_\infty,
\end{align}
where, $(a)$ follows since $r_{n-\taun}$ is $\cF_{n-\taun}$-measurable, $(b)$ uses Young's inequality, $(c)$ follows from Assumption~\ref{a: ergodicity} and the fact that $x\in \cK$ implies $\hF(x,s,a,s')\in [C_\ell, C_u],$ for all $s,a,s',$ $(c)$ follows since $\nabla \VL(0)=0$ and $\nabla\VL$ is $L_\lambda$-Lipschitz ($L_\lambda$-smoothness), and $(d)$ folows from the definition of $\taun.$ Finally, $(e)$ uses the inequality $(p+q)^2\leq 2(p^2+q^2).$ 

Hence, by the tower property of expectation, we have
\begin{align}\label{e: bound.T11}
    \alpha_n\bE T_{n,1} & = \alpha_n\bE\left[\bE_{n-\taun}T_{n,1}\right]
    \nonumber\\
    & \leq \alpha^2_n (SA)L_\lambda\|r_n-\ones\|^2_\infty 
    + (SA)\left(\frac{36C^2_u}{C^2_\ell}\right)\alpha^4_n 
    \nonumber\\
    & \qquad + \alpha^2_n (SA)L_\lambda\|r_n - r_{n-\taun}\|^2_\infty.
\end{align}
We apply Young's inequality to $T_{n,2}$ to obtain
\begin{align}\label{e: bound.T21}
    \alpha_n& \bE T_{n,2} 
    \nonumber\\
    & \leq \frac{\alpha^2_n}{2}\bE\left\| v_{n-\tau_n} \right\|^2_2 
    \nonumber\\
    & \quad + \frac{1}{2}\bE\big\|\tF(x_n, z_n) - \tF(x_{n-\tau_n}, z_n)\big\|^2_2
    \nonumber\\
    & \leq \frac{\alpha^2_n}{2}(SA)\bE\left\| v_{n-\tau_n} \right\|^2_\infty 
    \nonumber\\
    & \quad + \frac{(SA)}{2}\bE\big\|\tF(x_n, z_n) - \tF(x_{n-\tau_n}, z_n)\big\|^2_\infty
    \nonumber\\
    & \overset{(a)}{\leq} \frac{\alpha^2_n}{2}(SA)L_\lambda\bE\left\| r_{n-\tau_n} -\ones \right\|^2_\infty 
    \nonumber\\
    & \quad + (SA)(4+\nu^2)\bE\big\|r_n - r_{n-\tau_n}\big\|^2_\infty
    \nonumber\\
    & \overset{(b)}{\leq} \alpha^2_n(SA)L_\lambda\bE\left\| r_n -\ones \right\|^2_\infty 
    \nonumber\\
    &  \quad + (SA)\left(2L_\lambda + 5\right) \bE\big\|r_n - r_{n-\tau_n}\big\|^2_\infty
\end{align}
where $(a)$ combines the facts that $\nabla \VL(0)=0$ and $\VL$ is $L_\lambda$-smooth, and $\tF(x,z)$ is $(2+\nu)$-Lipschitz in $x,$ $(b)$ uses the inequality $(p+q)^2\leq 2(p^2+q^2)$ and the fact that $\nu^2<1.$

Similar arguments show that
\begin{align}\label{e: bound.T31}
    &\alpha_n \bE T_{n,3} 
    \nonumber\\
    & \overset{(a)}{\leq} (SA)\|v_n - v_{n-\taun}\|^2_\infty
    + \alpha^2_n(SA) \big\|\tF(x_n,z_n)\big\|^2_\infty
    \nonumber\\
    & \overset{(b)}{\leq} (SA)L_\lambda \|r_n - r_{n-\taun}\|^2_\infty + \alpha^2_n(36SA)\left(\frac{C^2_u}{C^2_\ell} \right),
\end{align}
where $(a)$ uses $\|\cdot\|_2\leq \sqrt{SA}\|\cdot\|_\infty$ and Young's inequality, whereas $(b)$ uses the fact that $\hF(x,s,a,s')e_{s,a}\in \cK$ for $x\in \cK$~\cite[see][Lemma C.1]{thoppe2026reinforcement}, implying that $\|(\hF(x,s,a,s') - x)e_{s,a}+x\|_\infty \leq  3C_u$ and $\|\bF(x)\|_\infty\leq 3C_u;$ consequently $\|\tF(x,s,a,s')\|_\infty\leq 6C_u/C_\ell$ since $\xstr\in \cK.$

Now, to bound $\| r_n - r_{n-\tau_n}\|^2_\infty,$ note that
\begin{align}\label{e: bound.mix.diff}
    & \|r_n - r_{n-\tau_n}\|^2_\infty \leq \left\|\sum_{k=n-\tau_n}^{n-1} (r_{k+1} - r_k)\right\|^2_\infty
    \nonumber\\
    & \overset{(a)}{\leq} \tau_n\sum_{k=n-\tau_n}^{n-1}\left\| r_{k+1} - r_k\right\|^2_\infty
    \nonumber\\
    & \overset{(b)}{\leq} \tau_n\sum_{k=n-\tau_n}^{n-1}\alpha^2_k\left(\frac{4C^2_u}{C^2_\ell}\right)
    \nonumber\\
    & \overset{(c)}{\leq} \tau^2_n\alpha^2_{n-\taun}\left(\frac{4C^2_u}{C^2_\ell}\right)
    \overset{(d)}{\leq} \tau^2_n\alpha^2_n\left(\frac{16C^2_u}{C^2_\ell}\right),
\end{align}
where $(a)$ uses the inequality $\big(\sum_{k=1}^m r_k\big)^2\leq m\sum_{k=1}^k r^2_k,$ $(b)$ follows from~\eqref{e: bound.one.step.r}, $(c)$ follows since $(a_n)$ is non-increasing, and $(d)$ follows since $2\alpha_{n-\taun}<\alpha_n,$ for $n\geq \nstr\geq \nstr^{(3)}.$

Now, we add the bounds in~\eqref{e: bound.T11},~\eqref{e: bound.T21}, and~\eqref{e: bound.T31}, and plug the bound~\eqref{e: bound.mix.diff}, giving us
\begin{multline}
    \alpha_n\bE T_n
    \leq \alpha^2_n (2SA)L_\lambda\|r_n-\ones\|^2_\infty 
    \\
    + (SA)\left(4L_\lambda+ 5\right)\tau^2_n\alpha^2_n\left(\frac{16C^2_u}{C^2_\ell}\right),
\end{multline}
where we additionally use $1\leq \taun,$ and $72\alpha^4_n<\alpha^2_n$ since $\alpha_n\leq 1/9,$ for $n\geq \nstr\geq \nstr^{(2)}.$

Finally, we substitute $\alpha_n\bE T_n$ with the above bound in the recursive relation~\eqref{e: error.rn.recurse}, giving us the desired contractive recursion
\begin{align}\label{e: onestep.recursion.r}
    & \bE\VL(r_{n+1}-\ones) 
    \leq \big( 1 -c_\lambda\alpha_n \big)\bE\VL(r_n-\ones) 
    \nonumber\\
    & \qquad + \left(\frac{2L_\lambda C^2_u}{C^2_\ell}\right) \alpha^2_n
    + \alpha^2_n (2SA)L_\lambda\|r_n-\ones\|^2_\infty 
    \nonumber\\
    & \qquad + (SA)\left(4L_\lambda+ 5\right)\tau^2_n\alpha^2_n\left(\frac{16C^2_u}{C^2_\ell}\right)
    \nonumber\\
    & \overset{(a)}{\leq} \big( 1 -c_\lambda\alpha_n \big)\bE\VL(r_n-\ones) 
    \nonumber\\
    & \qquad + (SA)\left(66L_\lambda+ 80\right)\left(\frac{C^2_u}{C^2_\ell}\right) \taun^2\alpha^2_n
    \nonumber\\
    & \qquad\qquad + \alpha^2_n (2SA)L_\lambda\|r_n-\ones\|^2_\infty 
    \nonumber\\
    & \overset{(b)}{\leq} \big( 1 -c_\lambda\alpha_n \big)\bE\VL(r_n-\ones) 
    \nonumber\\
    & \qquad + (SA)\left(68 L_\lambda+ 80\right)\left(\frac{L_\lambda C^2_u}{C^2_\ell}\right) \taun^2\alpha^2_n
    \nonumber\\
    & = \big( 1 -c_\lambda\alpha_n \big)\bE\VL(r_n-\ones) 
    + C_1 \taun^2\alpha^2_n,
\end{align}
where $(a)$ uses $1\leq SA\taun^2,$ and $(b)$ uses $\ones, r_n\in \cK.$

This completes the proof of Lemma~\ref{lem: recurision.lyapunov}. 
\end{proof}

Lemma~\ref{lem: recurision.lyapunov} establishes a one-step contractive recursion for $\bE\VL(r_n-\ones)$ in the desired form of~\cite[Equation (7)]{chen2021lyapunov}. 

Now, we define for all $\nstr\leq n_1<n_2:$
\begin{equation}\label{def: Gnn}
    \Gamma_{n_1:n_2} := \prod_{n=n_1}^{n_2-1}\big( 1-c_\lambda \alpha_n \big).
\end{equation}

recursively applying~\eqref{e: onestep.recursion.r} yields for $N> \nstr,$
\begin{align}
    & \bE\VL(r_N-\ones) 
    \nonumber\\
    & \leq \Gamma_{\nstr:N}\ \bE\VL(r_{\nstr}-\ones) 
    + C_1\sum_{n=\nstr}^{N-1} \taun^2\alpha^2_n\ \Gamma_{n+1:N}
    \nonumber\\
    & \overset{(a)}{\leq} \left(\frac{4C^2_u}{\ell^2_VC^2_\ell}\right)\Gamma_{\nstr:N} + \tau^2_N \ C_1\sum_{n=\nstr}^{N-1}\alpha^2_n\ \Gamma_{n+1:N},
\end{align}
where $(a)$ uses $\taun\leq \tau_N,$~\eqref{e: lyapunov.equivalent}, and the fact that $r_n,\ones\in \cK.$

We first handle the case when $\alpha_n = \frac{1}{(n+1)^\alpha},$ with $\alpha\in (1/2,1).$ Using Lemma~\ref{lem: Gnn.bound} to bound the iterated products, we get for $N>\nstr,$
\begin{align}
    & \bE\VL(r_N-\ones)
    \nonumber\\
    & \leq \left(\frac{4C^2_u}{\ell^2_VC^2_\ell}\right)e^{\frac{c_\lambda}{1-\alpha}(\nstr+1)^{1-\alpha} }  e^{-\frac{c_\lambda}{1-\alpha}N^{1-\alpha} }
    \nonumber\\
    & \qquad +  C_1C_{\Gamma,1}(c_\lambda,\alpha)\tau^2_N\alpha_N
    \nonumber\\
    &=  \left(\frac{4 C^2_u}{\ell^2_VC^2_\ell}\right) C_{\Gamma,n_*}(c_\lambda,\alpha)e^{-\frac{c_\lambda}{1-\alpha}N^{1-\alpha} }
    \nonumber\\
    & \qquad +  C_1C_{\Gamma,1}(c_\lambda,\alpha)\frac{\tau^2_N}{N^\alpha}
    \nonumber\\
    &\overset{(a)}{\leq} C'_r\frac{\tau^2_N}{N^\alpha}
\end{align}
where to obtain $(a),$ we use the definition of $C'_r$ in Table~\ref{tab:constants-1TS} and use the fact that $y\mapsto y^{2\alpha}\exp\left(-cy^{(1-\alpha)} \right)$ attains maximum at $y=\big[\frac{2\alpha}{c(1-\alpha)}\big]^{1/(1-\alpha)},$ to conclude
\begin{align}\label{e: exp.bound}
    e^{-\frac{c_\lambda}{1-\alpha}N^{1-\alpha} } 
    & = \frac{1}{N^{2\alpha}}\ \Big(N^{2\alpha} e^{-\frac{c_\lambda}{1-\alpha}N^{1-\alpha} }\Big) 
    \nonumber\\
    & \leq \frac{1}{N^{2\alpha}}\left[\frac{2\alpha}{e c_\lambda}\right]^{\frac{\alpha}{1-\alpha}}.
\end{align}
Using~\eqref{e: lyapunov.equivalent} gives us
\begin{align}
    \bE\|r_N-\ones\|_\infty^2 & \leq 2u^2_VC'_r\ \frac{\tau^2_N}{N^\alpha}
    \nonumber\\
    & \overset{(a)}{\leq} \big(C^{(\alpha)}_x\big)^2 \frac{\ln^2(N)}{N^{\alpha}},
\end{align}
where $(a)$ follows from the definition of $C^{(\alpha)}_x$ in Table~\ref{tab:constants-1TS} as we take $\tau_N \leq \frac{\ln(N)\ln(n_* + 1)[\ln (C/\rho) + \alpha]}{\ln^2(n_*)\ln(1/\rho)}.$ 

The claim then follows by applying Jensen's inequality.
%
% \begin{align*}
%     %
%     \bE\VL(r_n-\ones) \leq C'_r\frac{\tau^2_N}{N^\alpha}
%     %
% \end{align*}
% %
% where
% \begin{align*}
%     %
%     C'_r := C_1C_{\Gamma,1} + \left(\frac{2C^2_u}{\ell^2_VC^2_\ell}\right)  \left[\frac{2e\alpha e^{ac_\lambda(k+1)^{1-\alpha}}}{c_\lambda(1-\alpha)}\right]^{\frac{\alpha}{1-\alpha}}
%     %
% \end{align*}

We finish the proof by handling the case when $\alpha_n = \frac{a}{n+1},$ with $ac_\lambda>1.$  In this case, Lemma~\ref{lem: Gnn.bound} gives us
\begin{align}
    & \bE\VL(r_N-\ones)
    \nonumber\\
    & \leq \left(\frac{4C^2_u}{\ell^2_VC^2_\ell}\right)\left(\frac{\nstr+1}{N+1}\right)^{ac_\lambda} +  \frac{C_1C_{\Gamma,1}(c_\lambda,a)\tau^2_N }{N+1}
    \nonumber\\
    & \overset{(a)}{\leq} \left[\left(\frac{4C^2_u}{\ell^2_VC^2_\ell}\right)(2\nstr)^{ac_\lambda} + C_1C_{\Gamma,1}(c_\lambda,a)\right]\frac{\tau^2_N }{N+1},
\end{align}
where $(a)$ uses $1\leq \taun^2,$ $(\nstr+1)\leq 2\nstr,$ and the fact that $ac_\lambda>1.$ Using~\eqref{e: lyapunov.equivalent} gives us
\begin{multline}
    \bE\|r_N-\ones\|^2_\infty 
    \\
    \leq \left[\left(\frac{8\nstr u^2_VC^2_u}{\ell^2_VC^2_\ell}\right) + u^2_VC_1C_{\Gamma,1}(c_\lambda,a)\right]\frac{\tau^2_N }{N+1}.
\end{multline}
Finally, using $\tau_N \leq \frac{\ln(N)\ln(n_* + 1)\ln(\frac{C}{\rho a^2 n_*^2})}{\ln^2(n_*)\ln(1/\rho)}$ and the definition of $C_x$ from Table~\ref{tab:constants-1TS} gives us
\begin{align}
    \bE\|r_N-\ones\|^2_\infty 
    \leq C^2_x\frac{\ln^2(N)}{N+1}.
\end{align}
The claim follows by applying Jensen's inequality. This completes the proof of Theorem~\ref{thm: 1TS}. \hfil\qed

\section{Proof of Theorem~\ref{thm: 1TS.PR}}
\label{appendix: 1TS.PR}

In this section, we provide the detailed proof of Theorem~\ref{thm: 1TS.PR} using the recipe outlined in Section~\ref{s: proof.1TS.PR}.
We begin by introducing some notation.

For each $x\in\bR^{SA},$ let $\pi_x$ denote the greedy policy with respect to $x,$ i.e., for every state $s,$ $\pi_x(s) := \argmin_a x(s,a),$ and defines the row-stochastic matrices $P_x, A^x\in \bR^{SA\times SA}$ as
\begin{equation}
    P_x(s,a,s',a') : = P(s'|s,a)\ones_{\{\pi_x(s')=a'\}},
\end{equation}
and
\begin{equation}
    A^{x}(s,a,s',a') 
    := e^{-\frac{\theta}{\gamma} r(s,a)} P_x(s,a,s',a'),
\end{equation}
for all state-action pairs $(s,a)$ and $(s',a').$ Then, since $y\mapsto y^\gamma$ is increasing on $\bRp,$ we have for all $x\in\bRp^{SA},$
\begin{multline}\label{e: new.operator}
    F(x) = A^xx^\gamma
    \\
    \text{and} \quad \bF(x)=(I-D_\mu)x + D_\mu A^x x^\gamma,
\end{multline}
where the power-law is applied element-wise. 
Borrowing from the recipe used in~\cite{naskar2026parameter}, where the authors analyze Polyak-Ruppert averaging for general nonlinear fixed-point iterations, we 
use the above representation to decompose $\bF$ as a perturbed affine transformation.

This is formalized in the following result.
\begin{lemma}\label{lem: linearize}
    We can decompose the operator $\bar{F}$ as the perturbed affine operator:
    \begin{align*}
         \bF(x) & = \bFlin(x) + D_\mu R(x)+ D_\mu(A^{x} - A^{\xstr})x^\gamma,
    \end{align*}
    such that 
    \begin{align*}
        \left( \big(\p{\bFlin(x)}{\xstr}\big) - \ones \right) = B^*\left(\big(\p{x}{\xstr}\big) -\ones \right),
    \end{align*}
    where $B^*$ is defined as 
    \begin{align*}
        B^* := \big(\bI - D_\mu \big)
        +  \gamma D_\mu \big[ \diag(\xstr)^{-1} A^{\xstr} \diag(\xstr)^{\gamma}\big].
    \end{align*}
    Moreover, $B^*$ satisfies $\|B^*\|_\infty \leq \big( 1- \delta( 1 -\gamma) \big),$
    where $\delta = \min_+ D.$ Further,
    \begin{multline*}
        \big\| A^x-A^{\xstr} \big\|_\infty \leq \Cq\ \left\|\big(\p{x}{\xstr}\big) -\ones \right\|^2_\infty
        \\
        \text{and} \quad \big\| R(x)\big\|_\infty \leq \Cq\ \left\|\big(\p{x}{\xstr}\big) -\ones \right\|^2_\infty,
    \end{multline*}
    where $\Cq>0$ is defined in Table
\end{lemma}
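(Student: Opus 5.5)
The plan is to build the decomposition explicitly from the representation $\bF(x)=(I-D_\mu)x+D_\mu A^x x^\gamma$ in \eqref{e: new.operator}. We add and subtract $D_\mu A^{\xstr}x^\gamma$, which gives $\bF(x)=(I-D_\mu)x+D_\mu A^{\xstr}x^\gamma+D_\mu(A^x-A^{\xstr})x^\gamma$. We then Taylor-expand $x^\gamma$ componentwise around $\xstr$:
\[
x^\gamma=(\xstr)^\gamma+\gamma\,\diag(\xstr)^{\gamma-1}(x-\xstr)+\rho(x),
\]
and define the two pieces
\[
\bFlin(x):=(I-D_\mu)x+D_\mu A^{\xstr}\big[(\xstr)^\gamma+\gamma\,\diag(\xstr)^{\gamma-1}(x-\xstr)\big],\qquad R(x):=A^{\xstr}\rho(x).
\]
Using $A^{\xstr}(\xstr)^\gamma=F(\xstr)=\xstr$, we divide componentwise by $\xstr$ and write $x-\xstr=\diag(\xstr)(r_x-\ones)$. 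This yields $\p{\bFlin(x)}{\xstr}-\ones=B^*(r_x-\ones)$, with $B^*=(I-D_\mu)+\gamma D_\mu\diag(\xstr)^{-1}A^{\xstr}\diag(\xstr)^{\gamma}$, up to how the factor $\diag(\xstr)^{\gamma-1}\diag(\xstr)$ is written. This step is routine bookkeeping.

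For the norm bound on $B^*$, note that all entries of $B^*$ are nonnegative. Its row sums are therefore $1-\eta_\mu(s,a)+\gamma\,\eta_\mu(s,a)\,[\diag(\xstr)^{-1}A^{\xstr}(\xstr)^\gamma](s,a)$. The bracket equals $\xstr(s,a)^{-1}F(\xstr)(s,a)=1$. Hence each row sum is $1-\eta_\mu(s,a)(1-\gamma)\le 1-\delta(1-\gamma)$, which is the claimed bound.

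For $R$, the Lagrange remainder gives $|\rho(x)(s,a)|\le\frac{\gamma(1-\gamma)}{2}\xi^{\gamma-2}(x-\xstr)^2(s,a)$ with $\xi\in[C_\ell,C_u]$. We bound $\xi^{\gamma-2}\le C_\ell^{\gamma-2}$ and $|x-\xstr|\le C_u|r_x-1|$. The rows of $A^{\xstr}$ sum to $e^{-\theta r/\gamma}\le e^{\theta\Rm/\gamma}$. Together these give $\|R(x)\|_\infty\le e^{\frac{\theta}{\gamma}\Rm}\frac{\gamma(1-\gamma)}{2}\frac{C_u^2}{C_\ell^{2-\gamma}}\|r_x-\ones\|_\infty^2$, which matches the second entry of $\Cq$.

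The main obstacle is the policy-switch term. Here the uniqueness of the greedy action of $\xstr$ (assumed in Theorem~\ref{thm: 1TS.PR}) is essential. Define the action gap
\[
c_*:=\min_s\Big[\min_{a\ne\pi_{\xstr}(s)}\xstr(s,a)-\xstr(s,\pi_{\xstr}(s))\Big]/2>0.
\]
If $\|x-\xstr\|_\infty<c_*$, then $\pi_x=\pi_{\xstr}$ and so $A^x=A^{\xstr}$. Otherwise we use the crude bound $\|A^x-A^{\xstr}\|_\infty\le 2$, which holds since both matrices are substochastic after absorbing the reward factor; the bound is up to $e^{\theta\Rm/\gamma}$, and we fold that into the constant, or else interpret the bound for $P_x$ as the paper's displayed constant suggests. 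In that case $1\le\|x-\xstr\|_\infty^2/c_*^2\le C_u^2\|r_x-\ones\|_\infty^2/c_*^2$. This gives $\|A^x-A^{\xstr}\|_\infty\le(2C_u^2/c_*^2)\|r_x-\ones\|_\infty^2$, the first entry of $\Cq$.

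Taking $\Cq$ as the maximum of the two constants covers both bounds. The point requiring care is making the tie-breaking convention in $\argmin$ consistent, so that $\pi_x=\pi_{\xstr}$ is guaranteed inside the $c_*$-ball.
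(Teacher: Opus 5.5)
Your proposal is correct and follows essentially the same route as the paper. You add and subtract $D_\mu A^{\xstr}x^\gamma$, Taylor-expand $x^\gamma$ about $\xstr$ using $A^{\xstr}(\xstr)^\gamma=\xstr$, get $\|B^*\|_\infty\le 1-\delta(1-\gamma)$ from row-stochasticity of $\diag(\xstr)^{-1}A^{\xstr}\diag(\xstr)^{\gamma}$, and bound the policy-switch term by an indicator of leaving a ball around $\xstr$ on which the greedy action is unchanged. Your explicit action-gap $c_*$ makes the paper's cone argument concrete, and since the argmin is unique inside that ball, no tie-breaking issue actually arises.

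The remaining differences are minor. For $R$ you use $\|A^{\xstr}\|_\infty\le e^{\frac{\theta}{\gamma}\Rm}$ directly, which reproduces the table's second entry of $\Cq$, whereas the paper normalizes by $\diag(\xstr)$ and gets $C_u^3/C_\ell^2$. Your hedge on the policy-switch constant is justified: the honest crude bound is $2e^{\frac{\theta}{\gamma}\Rm}C_u^2/c_*^2$, whereas the paper's table drops the exponential factor and its appendix proof drops the factor $2$.
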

\begin{proof}[\textbf{Proof of Lemma~\ref{lem: linearize}}]
    Since the exponentiations on all vectors are applied component-wise, we can use Taylor expansion for $x^\gamma$ to write
    \begin{align*}
        x^\gamma  & = (\xstr)^\gamma + \gamma\ \diag\big((\xstr)^{\gamma-1}\big)(x-\xstr) 
        \\
        & \quad  - \frac{\gamma(1-\gamma)}{2}\ \diag\big((x')^{\gamma-2}\big)(x-\xstr)^2
        \\
        & = (\xstr)^\gamma + \gamma\ \diag\big((\xstr)^{\gamma}\big)\left(\p{x}{\xstr} - \ones \right)
        \\
        & - \frac{\gamma(1-\gamma)}{2}\ \diag\big((\xstr)^{\gamma})\cdot \p{\xstr}{x'}^{2-\gamma}\big)\left(\p{x}{\xstr} - \ones \right)^2,
    \end{align*}
    where $x' = \xstr + \diag(a_x)(x - \xstr),$ for some $a_x\in [0,1]^{SA}.$ Thus, we have
    \begin{align}
        \bF&(x)
        \nonumber\\
        & = D_\mu A^x x^\gamma + (\bI - D_\mu)x
        \nonumber\\
        & = D_\mu A^{\xstr} x^\gamma + (\bI - D_\mu)x + D_\mu\big(A^x - A^{\xstr}\big)x^\gamma
        \nonumber\\
        & = D_\mu \big[ A^{\xstr} (\xstr)^\gamma\big] + \gamma D_\mu \big[ A^{\xstr} \diag\big((\xstr)^{\gamma}\big) \big]\left(\p{x}{\xstr} - \ones \right)
        \nonumber\\
        & + (\bI - D_\mu)x + D_\mu\big(A^x - A^{\xstr}\big)x^\gamma - \frac{\gamma(1-\gamma)}{2}
        \nonumber\\
        & \quad \times D_\mu \big[A^{\xstr}\diag(\xstr)^{\gamma}\diag(\p{\xstr}{x'})^{2-\gamma}\big]\left(\p{x}{\xstr}-\ones\right)^2 
        \nonumber\\
        & \overset{(a)}{=} D_\mu\xstr + \gamma D_\mu \big[ A^{\xstr} \diag\big((\xstr)^{\gamma}\big) \big]\left( \p{x}{\xstr} - \ones \right)
        \nonumber\\
        & + (\bI - D_\mu)x + D_\mu\big(A^x - A^{\xstr}\big)x^\gamma - \frac{\gamma(1-\gamma)}{2}
        \nonumber\\
        & \quad \times D_\mu \big[A^{\xstr}\diag(\xstr)^{\gamma}\diag(\p{\xstr}{x'})^{2-\gamma}\big]\left(\p{x}{\xstr}-\ones\right)^2
        \nonumber\\
        & \overset{(b)}{=} \bFlin(x) + R(x) +  D_\mu\big(A^x - A^{\xstr}\big)x^\gamma,
    \end{align}
    where $(a)$ follows since $A^{\xstr}(\xstr)^\gamma = F(\xstr)=\xstr,$ and $(b)$ follows as we define
    \begin{multline}
        \bFlin(x) 
        := \gamma D_\mu \big[ A^{\xstr} \diag(\xstr)^{\gamma} \big]\left( \p{x}{\xstr} - \ones \right)
        \\
        + (\bI - D_\mu)(x- \xstr) +  \xstr, 
    \end{multline}
    and
    \begin{align}\label{e: F.nonlin}
        R(x) 
        & := \frac{\gamma(\gamma-1)}{2} D_\mu\big[A^{\xstr}\diag(\xstr)^{\gamma}\diag(\p{\xstr}{x'})^{2-\gamma}\big] \nonumber\\
        & \quad \times 
        \left(\p{x}{\xstr}-\ones\right)^2.
    \end{align}
    Now, we have
    \begin{align}
        & \left( \p{\bFlin(x)}{\xstr} - \ones \right) 
        \nonumber\\
        & = \diag (\xstr)^{-1} \left[ \bFlin(x) -\xstr \right]
        \nonumber\\
        & = \gamma\ \diag(\xstr)^{-1} D_\mu \big[ A^{\xstr} \diag(\xstr)^{\gamma} \big]\left( \p{x}{\xstr} - \ones \right)
        \nonumber\\
        & \quad + \diag (\xstr)^{-1} (\bI - D_\mu)(x-\xstr)
        \nonumber\\
        & \overset{(a)}{=} \gamma D_\mu \big[ \diag (\xstr)^{-1} A^{\xstr} \diag(\xstr)^{\gamma} \big]\left( \p{x}{\xstr} - \ones \right)
        \nonumber\\
        & \quad + (\bI - D_\mu)\ \diag (\xstr)^{-1}(x-\xstr)
        \nonumber\\
        & = \gamma D_\mu \big[ \diag (\xstr)^{-1}  A^{\xstr} \diag(\xstr)^{\gamma}\big]\left( \p{x}{\xstr} - \ones \right) 
        \nonumber\\
        & \quad + (\bI - D_\mu)\left(\p{x}{\xstr} -\ones \right)
        \nonumber\\
        & \overset{(b)}{=} B^*\left(\p{x}{\xstr} -\ones \right),
    \end{align}
    where $(a)$ follows since diaginal matrices commute, and $(b)$ follows as define the matrix
    \begin{multline*}
        B^* 
        := \big(\bI - D_\mu \big)
        +  \gamma D_\mu \big[ \diag (\xstr)^{-1}  A^{\xstr} \diag\big((\xstr)^{\gamma}\big) \big].
    \end{multline*}
    Let $\delta= \min_+ D_\mu.$ Then, we claim that $\|B^*\|_\infty \leq \left( 1 - \delta(1-\gamma) \right).$ To see this, note that 
    \begin{align}
        & \big[ \diag (\xstr)^{-1}  A^{\xstr} \diag\big((\xstr)^{\gamma}\big) \big]\ \ones 
        \nonumber\\
        & \qquad\qquad = \big[\diag (\xstr)^{-1} \big] A^{\xstr} \big( \xstr\big)^\gamma 
        \nonumber\\
        & \qquad\qquad = \big[\diag (\xstr)^{-1} \big] \xstr = \ones.
    \end{align}
    Therefore, the above matrix is row-stochastic. Hence, 
    \begin{equation}\label{e: row.stoc}
        \left\| \big[ \diag(\xstr)^{-1}  A^{\xstr} \diag\big((\xstr)^{\gamma}\big) \big] \right\|_\infty \leq 1.
    \end{equation}
    We can then show that
    \begin{align}
        \|B^*\|_\infty \leq \big( 1 - \delta(1-\gamma) \big).
    \end{align}

    Next, we show that $\|A^x-A^{\xstr}\|_\infty = O\left(\|\p{x}{\xstr} - \ones\|^2_\infty \right).$
    Since the map $x\mapsto\pi_x$ is piece-wise constant,  the map $x\mapsto A^{x}$ is also piece-wise constant. For any assignment $\bar{a}\in \cA^{\cS}$ of actions to states, we define the cone
    \begin{equation}
        \cC_{\bar{a}} := \left\{ x\in \bRp^{\SA} : \pi_x(s) = \bar{a}(s), \text{ for all } s\in \cS\right\}.
    \end{equation}
    Then, the cones $\{\cC_{\bar{a}}: \bar{a}\in \cA^{\cS}\}$ cover $\bRp^{\SA}$ and have disjoint interiors. Since $\pi_{\xstr}$ assigns unique actions to each state, we know that $\xstr$ does not lie on the boundary of any cone. Let the cone containing $\xstr$ be denoted by $\cC_*.$ Then, we define 
    \begin{align}   
        c^* := \frac{1}{2}\max\big\{ r>0 : B(\xstr, r)\subset \cC_* \big\},
    \end{align}
    where $B(\xstr,r)$ denotes the open ball of radius $r$ centered at $\xstr.$ It is clear that if $\|x-\xstr\|_\infty\leq c_*,$ then $x\in \cC_*$ and thus, $A^x = A^{\xstr}.$ Thus, we have
    \begin{align}\label{kk}
        \big\| A^x - A^{\xstr} \big\|_\infty  
        & \leq e^{\frac{\theta}{\gamma}\Rm}\cdot\ones_{\{\|x-\xstr\|_\infty > c_* \}}
        \nonumber\\
        & \leq \frac{1}{c^2_*}\ e^{\frac{\theta}{\gamma}\Rm}\ \| x- \xstr \|^2_\infty 
        \nonumber\\
        & \overset{(a)}{\leq} \frac{C_u^2}{ c^2_*}e^{\frac{\theta}{\gamma}\Rm}\  \left\| \p{x}{\xstr} - \ones\right\|^2_\infty,
    \end{align}
    where $(a)$ uses the fact that $\xstr \in [C_u,C_\ell]^{\SA}.$

    Finally, we show that $\|R(x)\|_\infty = O\left(\|\p{x}{\xstr} - \ones \|^2_\infty \right).$ We get from~\eqref{e: F.nonlin}, that
    \begin{align}\label{y}
        \big\|R(x)\big\|_\infty 
        & \overset{(a)}{\leq} \left \|\big[A^{\xstr}\diag\big((\xstr)^{\gamma}\big)\diag\big((\p{\xstr}{x'})^{2-\gamma}\big)\big]\right\|_\infty 
        \nonumber\\
        & \qquad\times\left\|\left(\p{x}{\xstr}-\ones\right)^2 \right\|_\infty
        \nonumber\\
        & \overset{(b)}{\leq}   \frac{C^3_u}{C^2_\ell}\left \|\big[ \diag\big( (\xstr)^{-1}\big)A^{\xstr}\diag\big((\xstr)^{\gamma}\big)\big]\right\|_\infty 
       \nonumber \\
        & \qquad \times\left\|\left(\p{x}{\xstr}-\ones \right)^2\right\|_\infty
        \nonumber\\
        & \overset{(c)}{\leq}  \frac{C^3_u}{C^2_\ell} \left\|\left(\p{x}{\xstr}-\ones \right)^2\right\|_\infty
        \nonumber\\
        & \overset{(d)}{\leq}  \frac{C^3_u}{C^2_\ell} \left\|\p{x}{\xstr}-\ones \right\|^2_\infty,
    \end{align}
    where $(a)$ follows since $\gamma\in (0,1)$ and $\|D\|_\infty \leq 1,$ $(b)$ is obtained by multiplying by $\bI = \big[\diag\big(\xstr\big)\big]  \diag(\xstr)^{-1},$ and then using $\|\diag( \xstr)\|_\infty$ $=\|\xstr\|_\infty$ $\leq C_u$ and $\|\diag(\p{\xstr}{x'})^{2-\gamma})\|_\infty $ $= \|\p{\xstr}{x'}\|^2_\infty$ $\leq C^2_u/C^2_\ell,$ $(c)$ follows from~\eqref{e: row.stoc}, and $(d)$ follows since exponentiation on vectors is applied component-wise. 
     
    This completes the proof of Lemma~\ref{lem: linearize}.
\end{proof}
Lemma~\ref{lem: linearize} allows us to rewrite the update~\eqref{e: 1TS.ratio} as a perturbed stochastic linear recursion:
\begin{align}\label{e: perturbed.linear}
    \left(r_{n+1}- \ones\right) 
    & = \big[ I - \alpha_n (I - B^* )\big] \left( r_n - \ones \right)
    \nonumber\\
    & + \alpha_n\tF(x_n,s_n,a_n,s_{n+1}) + \alpha_n \xi_n,
\end{align}
where $\xi_n$ denotes the nonlinear perturbation defined as 
\begin{equation}\label{e: perturbation.term}
    \xi_n := \diag\big(\xstr \big)^{-1} D_\mu\left( R(x_n) + \big[ A^{x_n} - A^{\xstr}\big] x^\gamma_n \right),
\end{equation}
and $\tF(x_n,s_n,a_n,s_{n+1})$ is Markovian noise as defined in~\eqref{e: noise}. 

Unrolling the above recursion~\eqref{e: perturbed.linear}, we get for $n>\nstr,$
\begin{align}\label{e: Delta.decomp}
    \big(r_n-\ones\big) & = G_{n_*:n}\big(r_{n_*}-\ones\big) 
    \nonumber\\
    & \quad + \sum_{k=n_*}^{n-1}\alpha_n G_{k+1:n}\tF(x_k,s_k,a_k,s_{k+1}) 
    \nonumber\\
    & \quad + \sum_{k=n_*}^{n-1} \alpha_n G_{k+1:n} \xi_k,
\end{align}
where, for $\nstr\leq n_1<n_2,$ we define the following iterated matrix product:
\begin{equation}
    G_{n_1:n_2} := \prod_{n=n_1}^{n_2-1}\big[ I - \alpha_n (I - B^* )\big].
\end{equation}

Now, if we define
\[
    \Delta_n = r_n-\ones, \quad \text{and} \quad \bDelta_n:= \left( \p{\bar{x}_n}{\xstr} -\ones\right),
\]
Then, $\bDelta_N= \frac{1}{N}\sum_{n=0}^{N-1}\Delta_n,$ for all $N>0.$ Combining this with the decomposition in~\eqref{e: Delta.decomp} gives for all $N>\nstr,$

\begin{align}\label{e: PR.decomp} 
    \bDelta_N = \bDeltainit_N + \bDeltatrans_N + \bDeltalin_N + \bDeltanlin_N,
\end{align}
where
\begin{align*}
    & \bDeltainit_N := \frac{1}{N}\sum_{n=0}^{n_*-1} \Delta_n,\quad  \bDeltatrans_N := \frac{1}{N}\sum_{n=n_*}^{N-1} G_{n_*:n} \Delta_{n_*}
    \\
    & \quad\qquad \bDeltalin_N := \frac{1}{N}\sum_{n=n_*}^{N-1}\sum_{k=n_*}^{n-1}\alpha_k G_{k+1:n}\tF(x_{k}, z_k),
    \\
    & \quad\qquad \bDeltanlin_N := \frac{1}{N}\sum_{n=n_*}^{N-1}\sum_{k=n_*}^{n-1}\alpha_k G_{k+1:n}\xi_k,
\end{align*}
with $z_k$ denoting the tuple $(s_k,a_k,s_{k+1}),$ for all $k.$

To prove Theorem~\ref{thm: 1TS.PR}, we bound each term in~\eqref{e: PR.decomp}.

\paragraph{Bounding $\bDeltainit_N$:} The first term in~\eqref{e: PR.decomp} is bounded in Lemma~\ref{lem: PR.init}. Since $r_n,\ones\in\cK,$ we use triangle inequality to get
\begin{multline}
    \|\bDeltainit_N\|_\infty \leq \frac{1}{N}\sum_{n=0}^{\nstr-1}\|\Delta_n\|_\infty
    \\
    \leq \frac{1}{N}\sum_{n=0}^{\nstr-1}\frac{2C_u}{C_\ell}
    = \frac{2 C_u \nstr}{C_\ell N} =: \frac{C^\textnormal{init}}{N}.
\end{multline}
Lemma~\ref{lem: PR.init} follows as we take expectation. \hfill\qed

\paragraph{Bounding $\bDeltatrans_N$:} Lemma~\ref{lem: PR.trans} bounds the second term in~\eqref{e: PR.decomp}. Note that, since $r_n,\ones\in\cK,$ applying triangle inequality on $\bDeltatrans_N$ gives 
\begin{align}
    \|\bDeltatrans_N\|_\infty
    & \leq \frac{1}{N}\sum_{n=n_*}^{N-1} \|G_{n_*:n}\|_\infty \|\Delta_{n_*}\|_\infty
    \nonumber\\
    & \leq \left(\frac{2C_u}{C_\ell N}\right) \sum_{n=n_*}^{N-1} \|G_{n_*:n}\|_\infty
    \nonumber\\
    & \overset{(a)}{\leq} \left(\frac{2C_u}{C_\ell N}\right)\sum_{n=n_*}^{N-1} e^{\frac{\delta(1-\gamma)}{1-\alpha}(\nstr+1)^{1-\alpha} } e^{-\frac{\delta(1-\gamma)}{1-\alpha}n^{1-\alpha} }
    \nonumber\\
    & \overset{(b)}{\leq} \left(\frac{2C_u}{C_\ell N}\right)e^{\frac{2\nstr\delta(1-\gamma)}{1-\alpha}}\sum_{n=\nstr}^{N-1}\frac{1}{n^2}\left[\frac{2}{e\delta(1-\gamma)}\right]^{\frac{\alpha}{1-\alpha}}
    \nonumber\\
    & \overset{(c)}{\leq}\left[\frac{2}{e\delta(1-\gamma)}\right]^{\frac{\alpha}{1-\alpha}} \left(\frac{\pi^2 C_u}{3C_\ell N}\right)e^{\frac{2\nstr\delta(1-\gamma)}{1-\alpha}}
    \nonumber\\
    & \overset{(d)}{\leq} \frac{C^\textnormal{trans}}{N},
\end{align}
where $(a)$ follows from Lemma~\ref{lem: Gnn.bound}, $(b)$ uses $(\nstr+1)^{(1-\alpha)} < 2\nstr $ and the fact that $y\mapsto y^2 e^{-\frac{c}{1-\alpha} y^{1-\alpha}}$ attains maximum at $y=(2/c)^{1/(1-\alpha)},$ and $(c)$ uses $\sum_n 1/n^2 \leq \pi^2/6.$ Finally, $(d)$ uses the definition of $C^\textnormal{trans}$ from Table~\ref{tab:constants-1TS}. Lemma~\ref{lem: PR.trans} follows as we take expectation. \hfill\qed

\paragraph{Bounding $\bDeltalin_N$:} We now bound the averaged error $\bDeltalin_N$ arising from the Markovian noise $\tF(x_k,z_k).$ Note that
\begin{align}
    \bDeltalin_N & = \frac{1}{N}\sum_{n=n_*}^{N-1}\sum_{k=n_*}^{n-1}\alpha_k G_{k+1:n}\tF(x_{k}, z_k)
    \nonumber\\
    & = \frac{1}{N}\sum_{k=n_*}^{N-2}\left(\alpha_k\sum_{n=k+1}^{N-1} G_{k+1:n}\right)\tF(x_{k}, z_k)
    \nonumber\\
    & = \frac{1}{N}\sum_{k=n_*}^{N-2}S_{k:N}\tF(x_{k}, z_k),
\end{align}
where $S_{k:N}$ is as defined in Lemma~\ref{lem: Gnn.bound}. Squaring both sides and taking expectation, we have
\begin{align}\label{e: decomp.inner.prod}
    & \bE\|\bDeltalin_N\|^2_2 
    \nonumber\\
    &= \frac{1}{N^2}\sum_{k=\nstr}^{N-2}\bE\|S_{k:N}\tF(x_k,z_k)\|^2_2
    \nonumber\\
    & + \frac{2}{N^2}\sum_{t=\nstr+1}^{N-2}\sum_{s=\nstr}^{t-1}\bE\left\langle S_{s:N}\tF(x_s,z_s), S_{t:N}\tF(x_t,z_t)\right\rangle
    \nonumber\\
    & = \frac{1}{N^2}\sum_{k=\nstr}^{N-2}\bE\|S_{k:N}\tF(x_k,z_k)\|^2_2
    \nonumber\\
    & + \frac{2}{N^2}\sum_{t=\nstr+1}^{N-2}\sum_{s=t-2\tau_t}^{t-1}\bE\left\langle S_{s:N}\tF(x_s,z_s), S_{t:N}\tF(x_t,z_t)\right\rangle
    \nonumber\\
    & + \frac{2}{N^2}\sum_{t=\nstr+1}^{N-2}\sum_{s=\nstr}^{t-2\tau_t-1}\bE\left\langle S_{s:N}\tF(x_s,z_s), S_{t:N}\tF(x_t,z_t)\right\rangle.
\end{align}
The first term on the r.h.s can be bounded as follows:
\begin{align}\label{e: noise.1}
    \frac{1}{N^2}& \sum_{k=\nstr}^{N-2}\bE\|S_{k:N}\tF(x_k,z_k)\|^2_2
    \nonumber\\
    & \overset{(a)}{\leq} \frac{SA}{N^2}\sum_{k=\nstr}^{N-2}\|S_{k:N}\|^2_\infty\bE\|\tF(x_k,z_k)\|^2_\infty
    \nonumber\\
    & \overset{(b)}{\leq} \frac{SAK_G^2}{N^2}\sum_{k=\nstr}^{N-2}\left(\frac{6C_u}{C_\ell}\right)^2
    \nonumber\\
    & \leq \left(\frac{6C_u}{C_\ell}\right)^2\frac{SAK_G^2}{N}
\end{align}
where $(a)$ uses $\|\cdot\|_2\leq SA\|\cdot\|_\infty,$ $(b)$ follows from Lemma~\ref{lem: Gnn.bound} and since $\|\tF(x_k,z_k)\|_\infty\leq 6C_u/C_\ell.$

Next, we bound the second term on the r.h.s of~\eqref{e: decomp.inner.prod} as follows
\begin{align}\label{e: noise.2}
    \frac{2}{N^2}& \sum_{t=\nstr+1}^{N-2}\sum_{s=t-2\tau_t}^{t-1}\bE\left\langle S_{s:N}\tF(x_s,z_s), S_{t:N}\tF(x_t,z_t)\right\rangle
    \nonumber\\
    & \overset{(a)}{\leq } \frac{2}{N^2}\sum_{t=\nstr+1}^{N-2}\sum_{s=t-2\tau_t}^{t-1}36K^2_G\left( \frac{C^2_u}{C^2_\ell}\right)
    \nonumber\\
    & \leq 144\left( \frac{K^2_G C^2_u}{C^2_\ell}\right)\frac{\tau_t}{N},
\end{align}
where $(a)$ combines the Cauchy-Schwarz lemma with Lemma~\ref{lem: Gnn.bound} and the inequality $\|\tF(x_k,z_k)\|_\infty\leq 6C_u/C_\ell.$

Finally, we bound the last term as follows
\begin{align*}
    & 2\bE\left\langle S_{s:N}\tF(x_s,z_s), S_{t:N}\tF(x_t,z_t)\right\rangle
    \nonumber\\
    & = 2\bE\left\langle S_{s:N}\tF(x_s,z_s), S_{t:N}\ \bE_{t-2\tau_t}\tF(x_{t-2\tau_t},z_t)\right\rangle
    \nonumber\\
    & + 2\bE\Big\langle S_{s:N}\ \bE_{s-\tau_s}\tF(x_{s-2\tau_s},z_s), \nonumber\\
    & \qquad S_{t:N}\left( \tF(x_{t},z_t)- \tF(x_{t-2\tau_t},z_t) \right)\Big\rangle
    \nonumber\\
    & + 2\bE\Big\langle S_{s:N}\left( \tF(x_s,z_s)-\tF(x_{s-2\tau_s},z_s)\right), 
    \nonumber\\
    & \qquad S_{t:N}\left( \tF(x_{t},z_t)- \tF(x_{t-2\tau_t},z_t) \right)\Big\rangle
    \nonumber\\
    & \overset{(a)}{\leq }  (SAK_G)^2\alpha^2_s\alpha^2_t\ \bE\|\tF(x_s,z_s)\|^2_\infty \nonumber\\
    & + \frac{(SA K_G)^2}{\alpha^2_s\alpha^2_t}\bE\|\bE_{t-2\tau_t} \tF(x_{t-2\tau_t},z_t)\|^2_\infty
     \nonumber\\
    & +  (SAK_G)^2\ \bE \|\bE_{s-2\tau_s}\tF(x_{s-2\tau_s}, z_s)\|^2_\infty
    \nonumber\\
    & + 2(SAK_G)^2\ \bE\|\tF(x_{t},z_t)- \tF(x_{t-2\tau_t},z_t)\|^2_\infty
     \nonumber\\
     & + (SAK_G)^2\ \bE\|\tF(x_{s},z_s)- \tF(x_{s-2\tau_s},z_s)\|^2_\infty
     \nonumber\\
     & \overset{(b)}{\leq} \left(\frac{6SAK_GC_u}{C_\ell}\right)^2 \bigg(\alpha^2_s\alpha^2_t + \frac{\alpha^8_t}{\alpha^2_s\alpha^2_t} + \alpha_s^8 
     \nonumber\\
     & + 2\tau^2_{t-2\tau_t}\alpha^2_{t-2\tau_t} + \tau^2_{s-2\tau_s}\alpha^2_{s-2\tau_s}\bigg)
\end{align*}
where $(a)$ uses Young's inequality and Lemma~\ref{lem: Gnn.bound}, and $(b)$ follows from Assumption~\ref{a: ergodicity}, and uses $C\rho^{2\tau_t}<\alpha^4_t$ and $C\rho^{2\tau_s}<\alpha^4_s,$ and $\tau_{t-2\tau_t},\tau_{t-2\tau_s}\leq \tau_N.$ 

Thus,
\begin{align}\label{e: noise.3}
    & \frac{2}{N^2}\sum_{t=\nstr+1}^{N-2}\sum_{s=\nstr}^{t-2\tau_t-1}\bE\left\langle S_{s:N}\tF(x_s,z_s), S_{t:N}\tF(x_t,z_t)\right\rangle
    \nonumber\\
    & \overset{(a)}{\leq} \left(\frac{6SAK_GC_u}{N C_\ell}\right)^2
    \bigg[\sum_{t=\nstr+1}^{N-2}\sum_{s=\nstr}^{t-2\tau_t-1}2\alpha^2_s\alpha^2_t 
    \nonumber\\
    & + \sum_{t=\nstr+1}^{N-2}\sum_{s=\nstr}^{N-1}\left( \alpha^8_s + 2\tau^2_N \alpha^2_{t-2\tau_t} + \tau^2_N \alpha^2_{s-2\tau_s}\right) \bigg]
    \nonumber\\
    & \overset{(b)}{\leq} \left(\frac{6SAK_GC_u}{N C_\ell}\right)^2\bigg[2C_\alpha^2 + C_\alpha + 3C_\alpha\tau^2_N \bigg]N
    \nonumber\\
    & \overset{(c)}{\leq} \left(\frac{6SAK_GC_u}{ C_\ell}\right)^2 \frac{2(C_\alpha+1)^2\tau^2_N}{N}
\end{align}
where $(a)$ follows since $\alpha^8_t \leq \alpha^4_s\alpha^4_t,$ and to obtain $(b)$ we use the fact that $C_\alpha:=\sum_n\alpha^2_n<\infty$ since $\alpha>1/2.$ Finally, $(c)$ uses $1\leq N,\tau_N.$

Lemma~\ref{e: PR.noise} follows as we combine~\eqref{e: noise.1},~\eqref{e: noise.2}, and~\eqref{e: noise.3} with~\eqref{e: decomp.inner.prod}. \hfill\qed

\paragraph{Bounding $\bDeltanlin_N$:} The bound for $\bDeltanlin_N$ is given in Lemma~\ref{lem: PR.nlin}. 
Note that the nonlinear perturbation $\xi_k$ can be bounded as
\begin{align}\label{e: xi.bound}
    \bE\|\xi_k\|_\infty & \overset{(a)}{\leq} \frac{1}{C_\ell} \left(\bE\|R(x_k)\|_\infty + C_u\bE\| A^{x_n} - A^{\xstr}\|_\infty\right)
    \nonumber\\
    & \overset{(b)}{\leq} \frac{2\Cq}{C_\ell}(1+C_u)\bE\|r_n-\ones\|^2_\infty
    \nonumber\\
    & \overset{(c)}{\leq} \frac{2\Cq}{C_\ell}(1+C_u)C^{(\alpha)}_x \frac{\tau^2_k}{k^\alpha}
\end{align}
where $(a)$ follows from~\eqref{e: perturbation.term} and since $\xstr\in\cK$ and $\|D_\mu\|_\infty\leq 1,$ $(b)$ follows from Lemma~\ref{lem: linearize}, and $(c)$ follows from Theorem~\ref{thm: 1TS}.

Now, applying the triangle inequality, we get
\begin{align*}
    \bE\|\bDeltanlin_N\|_\infty & \leq \frac{1}{N}\sum_{k=\nstr}^{N-2} \|S_{k:N}\|_\infty \bE\|\xi_k\|_\infty
    \\
    & \overset{(a)}{\leq} \frac{K_G}{N}\sum_{k=\nstr}^{N-2} \bE\|\xi_k\|_\infty
    \\
    & \overset{(b)}{\leq} \frac{2\Cq}{C_\ell}(1+C_u)C^{(\alpha)}_x \frac{K_G}{N}\sum_{k=\nstr}^{N-2} \frac{\tau_k^2}{k^\alpha}
    \\
    & \overset{(c)}{\leq}\frac{2\Cq}{C_\ell}\frac{(1+C_u)}{1-\alpha}K_G C^{(\alpha)}_x \frac{\tau^2_N}{N^\alpha}
    \nonumber\\
    & = C^\textnormal{nonlin}\frac{\tau^2_N}{N^\alpha},
\end{align*}
where $(a)$ from Lemma~\ref{lem: Gnn.bound}, $(b)$ uses~\eqref{e: xi.bound}, and $(c)$ follows since $\tau_k\leq \tau_N$ and $\sum_{k=\nstr}^{N-2}1/k^\alpha < N^{1-\alpha}/(1-\alpha).$ This finishes the proof of Lemma~\ref{lem: PR.nlin}.\hfill\qed

Now, we combine the bounds established in Lemma~\ref{lem: PR.init}~\ref{lem: PR.trans},~\ref{lem: PR.noise}, and~\ref{lem: PR.nlin} with the decomposition~\eqref{e: PR.decomp} to obtain for $N>\nstr,$
\begin{align*}
    & \bE\|\bDelta_N\|_\infty 
    \\
    & \leq  \left[C^{\textnormal{init}} + C^\textnormal{trans}\right]\frac{1}{N} + \frac{\tau_NC^{\textnormal{noise}}}{\sqrt N} + \frac{\tau^2_NC^{\textnormal{nonlin}}}{N^\alpha}
    \\
    & \leq \left[C^{\textnormal{init}} + C^\textnormal{trans} + C^{\textnormal{noise}} + C^{\textnormal{nonlin}}\right]\frac{\tau_N}{\sqrt N},
\end{align*}
where the last inequality follows since $1<\tau^2_N$ and since $\alpha>1/2$ and $\tau_N= \cO(\ln N)$ implies $\tau_N/N^{(\alpha-1/2)} < 1.$ 

Theorem~\ref{thm: 1TS.PR} follows as we use $\tau_N\leq \frac{\ln(N)\ln(n_* + 1)\ln(\frac{C}{\rho a^2 n_*^2})}{\ln^2(n_*)\ln(1/\rho)}.$ \hfill\qed

\section{Proof of Theorem~\ref{thm: 2TS}}
\label{appendix: 2.TS}

In this section, we provide the detailed proof of Theorem~\ref{thm: 2TS}. The $(g_n)$ iterates are updated on a faster timescale compared to $(Q_n).$ Hence, we first analyze $g_n$'s convergence considering a fixed value for $Q_n,$ and then plug its corresponding limit into the slower $(Q_n)$ update. 
%
% Since the faster iterate $g_n$ tracks a moving target, this results in an additional tracking error as follows:
% %
% \begin{equation}\label{e: Q.update}
%     %
%     Q_{n + 1} = Q_n + \beta_n[ TQ_n - Q_n ] + \beta_n\epsp_n,
%     %
% \end{equation}
% %
% where $\epsp_n :=  \left[- \frac{\gamma}{\theta} \ln g_n - TQ_n \right]$ denotes the faster iterate's tracking error. Since $T$ is a $\gamma$-contraction under $\|\cdot\|_\infty,$ the update rule~\eqref{e: Q.update} consists of a contractive drift perturbed by the tracking error $\epsp_n.$ Using simple inductive arguments,~\cite[Theorem B.1.]{thoppe2026reinforcement} showed that the iterates $(Q_n, \frac{\gamma}{\theta}\ln g_n)$ remain confined to a compact set. In particular, $(g_n)$ remains bounded away from zero. Using the Lipschitz continuity of $\ln(\cdot)$ on this compact set, we have $\|\epsp_n\|_\infty=\cO(\|g_n -e^{-\frac{\theta}{\gamma}TQ_n}\|_\infty).$ Consequently, we control the perturbation term by analyzing the convergence of $y_n:=\big(g_n -e^{-\frac{\theta}{\gamma}TQ_n}\big)$ to zero.

\paragraph{Analyzing $(g_n).$:} To track $g_n$'s the moving target, define $y_n:=g_n - e^{-\frac{\theta}{\gamma} TQ_n}.$ Then, letting $z_n$ denote the tuple $(s_n,a_n,s_{n+1}),$ we have for all $n\geq \nstr,$
\begin{align}\label{e: new.gn.update}
    y_{n+1} 
    &= g_{n+1} - e^{-\frac{\theta}{\gamma} TQ_{n+1}}
    \nonumber\\
    & \overset{(a)}{=} y_n +  \big[e^{-\frac{\theta}{\gamma} TQ_n} - e^{-\frac{\theta}{\gamma} TQ_{n + 1}}\big] 
    \nonumber\\
    & \quad + \alpha_n e_{s_n, a_n}\left[\hG(Q_n,z_n) - g_n(s_n, a_n)\right] 
    \nonumber\\
    & \overset{(b)}{=} (I-\alpha_n D_\mu)y_n +  \big[e^{-\frac{\theta}{\gamma} TQ_n} - e^{-\frac{\theta}{\gamma} TQ_{n + 1}}\big] 
    \nonumber\\
    & \quad + \alpha_n \left[\hG(Q_n,z_n) e_{s_n, a_n} - g_n(s_n, a_n)\right] 
    \nonumber\\
    & \quad + \alpha_nD_\mu \big[g_n - e^{-\frac{\theta}{\gamma}TQ_n}\big]
    \nonumber\\
    & \overset{(c)}{=} (I-\alpha_n D_\mu)y_n + \alpha_n \tG(Q_n,g_n, z_n) + \epsilon_n,
\end{align}
where $(a)$ follows from the update~\eqref{e: 2TS} and by adding and subtracting $e^{-\frac{\theta}{\gamma} TQ_n},$ $(b)$ follows by adding and subtracting $D_\mu\ e^{-\frac{\theta}{\gamma} TQ_n},$ and $(c)$ follows as we define
\begin{multline}\label{e: noise.2TS}
    \tG(Q,g,s,a,s'):= \left(D_\mu- e_{s,a}\ e^\top_{s,a}\right)g \\
    + \left[ \hG(Q,s,a,s')\ e_{s,a} - D_\mu\ e^{-\frac{\theta}{\gamma}TQ} \right].
\end{multline}

Squaring both sides of~\eqref{e: new.gn.update} and taking expectation gives us 
\begin{align}\label{e: squared.g}
    \bE\|y_{n+1}\|^2_2 
    & = \bE\|(I-\alpha_nD_\mu)y_n\|^2_2 
    \nonumber\\
    & + \alpha^2_n \bE\|\tG(Q_n,g_n, z_n)\|^2_2 + \bE\|\epsilon_n\|^2_2
    \nonumber\\
    &  + 2\alpha_n\bE\big\langle(I-\alpha_nD_\mu)y_n, \tG(Q_n,g_n, z_n) \big\rangle 
    \nonumber\\
    &  + 2\alpha_n\bE\big\langle(I-\alpha_nD_\mu)y_n, \epsilon_n \big\rangle 
    \nonumber\\
    & + 2\alpha_n\bE\big\langle \tG(Q_n,g_n, z_n), \epsilon_n\big\rangle 
    \nonumber\\
    & \overset{(a)}{\leq} \big(1-2\delta\alpha_n + \alpha^2_n\big)\|y_n\|^2_2  
    \nonumber\\
    & + 2\alpha^2_n \bE\|\tG(Q_n,g_n, z_n)\|^2_2 + 2\bE\|\epsilon_n\|^2_2
    \nonumber\\
    & + 4e^{\frac{\theta}{\gamma}\Cm}\alpha_n\|\epsp_n\|_2 
    \nonumber\\
    & + 2\alpha_n\bE\big\langle(I-\alpha_nD_\mu)y_n, \tG(Q_n,g_n, z_n) \big\rangle 
\end{align}
where $(a)$ follows since from Cauchy-Schwarz inequality and using $\|I-\alpha_n D\|^2_2 \leq \max_+(I-\alpha_n D)^2.$

It follows from~\cite[Lemma B.1]{thoppe2026reinforcement} that %
\begin{equation}\label{e: Q.bound}
    Q_n,\frac{\gamma}{\theta}\ln g_n\in \cK':= [-\Cm,\Cm]^{SA},
\end{equation}
where $\Cm:= (1+b+\ln\nstr)\frac{\Rm}{1-\gamma}.$ Using the Lipschitzness of $u\mapsto e^{-\theta u},$ can bound the tracking error $\epsilon_n$ as follows:
\begin{align}\label{e: eps.bound}
    \|\epsilon_n\|_\infty & \overset{(a)}{\leq} \frac{\theta}{\gamma} e^{\frac{\theta}{\gamma}\Cm}\ \|TQ_n - TQ_{n+1}\|_\infty
    \nonumber\\
    & \overset{(b)}{\leq} \theta  e^{\frac{\theta}{\gamma}\Cm} \| Q_{n+1} - Q_n\|_\infty
    \nonumber\\
    & \overset{(c)}{\leq} \theta e^{\frac{\theta}{\gamma}\Cm} \beta_n\| \frac{\gamma}{\theta}\ln g_n + Q_n\|_\infty
    \nonumber\\
    & \overset{(d)}{\leq} 2\Cm e^{\frac{\theta}{\gamma}\Cm} \beta_n,
\end{align}
where $(a)$ follows since $u\mapsto e^{-\frac{\theta}{\gamma}u}$ is $\left(\frac{\theta}{\gamma} e^{\frac{\theta}{\gamma}\Cm}\right)$-Lipschitz on $[-\Cm,\Cm],$ $(b)$ follows since $T$ is a $\gamma$-contraction in $\|\cdot\|_\infty,$ $(c)$ follows from the update rule~\eqref{e: 2TS}, and $(d)$ follows from~\eqref{e: Q.bound}.

Moreover, it follows from~\cite[Equation 26]{thoppe2026reinforcement}, that $\|\tG(Q_n, g_n, z_n)\|_\infty\leq 4 e^{\frac{\theta}{\gamma}\Cm}.$ Plugging this bound and~\eqref{e: eps.bound} into~\eqref{e: squared.g} and taking $\beta_n\leq \alpha_n,$ we obtain
\begin{align}\label{e: yn.intermediate}
    & \bE\|y_{n+1}\|^2_2 
    \nonumber\\
    & \leq \big(1-\delta\alpha_n\big)\|y_n\|^2_2
    + (4SA)\ e^{\frac{2\theta}{\gamma}\Cm}(8 + 3\Cm^2 )\ \alpha^2_n
    \nonumber\\
    & + \alpha_n\bE\underbrace{2\big\langle(I-\alpha_nD_\mu)y_n, \bE_n\big[\tG(Q_n,g_n, z_n) \big]\big\rangle}_{T_n},
\end{align}
where we additionally use $\|\cdot\|_2\leq \sqrt{SA}\|\cdot\|_\infty.$ It remains to bound the inner-product
\begin{multline}
    T_n := 2\big\langle(I-\alpha_nD_\mu)y_n, \tG(Q_n,g_n, z_n) \big\rangle 
    \\
    = T_{n,1} + T_{n,2} + T_{n,3},
\end{multline}
where we define
\begin{align*}
    T_{n,1} & := 2\big\langle(I-\alpha_nD_\mu)y_{n-\taun}, \tG(Q_{n-\taun},g_{n-\taun}, z_n) \big\rangle 
    \nonumber\\
    T_{n,2} & := 2\big\langle(I-\alpha_nD_\mu)(y_n-y_{n-\taun}), \tG(Q_n,g_n, z_n) \big\rangle
    \nonumber\\
    T_{n,3} & := 2\big\langle(I-\alpha_nD_\mu)(y_{n-\taun}),
    \nonumber\\
    & \quad \times \big(\tG(Q_n,g_n, z_n) - \tG(Q_{n-\taun},g_{n-\taun}, z_n) \big)\big\rangle.
\end{align*}
Then, we have
\begin{align*}
    & \bE T_{n,1}
    \nonumber\\
    & \overset{(a)}{=} \bE\big\langle(I-\alpha_nD_\mu)y_{n-\taun}, \bE_{n-\taun}\tG(Q_{n-\taun},g_{n-\taun}, z_n) \big\rangle
    \nonumber\\
    & \overset{(b)}{\leq} \alpha_n\bE\|y_{n-\taun}\|^2_2 
    \nonumber \\
    & \qquad+ \frac{1}{\alpha_n}\bE\|\bE_{n-\taun}\tG(Q_{n-\taun},g_{n-\taun}, z_n)\|^2_2
    \nonumber\\
    & \overset{(c)}{\leq} \alpha_n\bE\|y_{n-\taun}\|^2_2 + \frac{16SA}{\alpha_n}\ e^{\frac{2\theta}{\gamma}\Cm} \big(C\rho^{\taun}\big)^2
    \nonumber\\
    & \overset{(d)}{\leq} 2\alpha_n\bE\|y_n\|^2_2 + 2\alpha_n\bE\|y_n - y_{n-\taun}\|^2_2 
    + 16SA e^{\frac{2\theta}{\gamma}\Cm} \alpha^3_n,
\end{align*}
where $(a)$ follows by Tower property of expectation and since $y_{n-\taun}$ is $\cF_{n-\taun}$-measurable, $(b)$ uses Young's inequality and the fact that $\|I-\alpha_nD_\mu\|_2\leq 1,$ $(c)$ follows from Assumption~\ref{a: ergodicity} and since $\|\tG(Q_n,g_n, z_n)\|_\infty\leq 4 e^{\frac{\theta}{\gamma}\Cm},$ and $(d)$ follows from definition of $\taun$ and the inequality $(p+q)^2\leq 2(p^2+q^2).$

Hence, 
\begin{multline}\label{e: bound.T1}
    \alpha_n\bE T_{n,1}
    \leq 2\alpha^2_n\bE\|y_n\|^2_2 
    \\
    + 2\alpha^2_n\bE\|y_n - y_{n-\taun}\|^2_2 + 16SA\ e^{\frac{2\theta}{\gamma}\Cm} \alpha^4_n.
\end{multline}
Next, we bound $T_{n,2}$ as
\begin{align}\label{e: bound.T2}
    & \alpha_n\bE T_{n,2}
    \nonumber\\
    & = 2\bE \big\langle(I-\alpha_nD_\mu)(y_n-y_{n-\taun}), \tG(Q_n,g_n, z_n) \big\rangle
    \nonumber\\
    & \overset{(a)}{\leq} \bE \|y_n-y_{n-\taun}\|_2^2 + \alpha^2_n\bE \|\tG(Q_n,g_n, z_n)\|^2_2
    \nonumber\\
    & \overset{(b)}{\leq} \bE \|y_n-y_{n-\taun}\|_2^2 + 16SA\alpha^2_n e^{\frac{2\theta}{\gamma}\Cm}
\end{align}
where $(a)$ uses Young's inequality and $\|I-\alpha_nD_\mu\|_2\leq 1,$ $(b)$ follows since $\|\tG(Q_n,g_n,z_n)\|_\infty\leq 4e^{\frac{2\theta}{\gamma}\Cm}.$

Lastly, we bound $\alpha_n\bE T_{n,3}$ as
\begin{align}\label{e: bound.T3}
    & \alpha_n\bE T_{n,3}
    \nonumber\\
    & = 2\alpha_n\bE\big\langle(I-\alpha_nD_\mu)(y_{n-\taun}),
    \nonumber\\
    & \qquad \times \big(\tG(Q_n,g_n, z_n) - \tG(Q_{n-\taun},g_{n-\taun}, z_n) \big)\big\rangle
    \nonumber\\
    & \overset{(a)}{\leq} \alpha^2_n\ \bE\|y_{n-\taun}\|^2_2 
    \nonumber\\
    & \qquad + \bE\|\tG(Q_n,g_n, z_n) - \tG(Q_{n-\taun},g_{n-\taun}, z_n)\|^2_2
    \nonumber\\
    & \overset{(b)}{\leq} \alpha^2_n\ \bE\|y_{n-\taun}\|^2_2 + \bE\|g_n - g_{n-\taun}\|^2_2
    \nonumber\\
    & \qquad  + SA\theta^2\left( e^{2\theta}+e^{\frac{2\theta}{\gamma}\Cm}\right)\bE\|Q_n-Q_{n-\taun}\|^2_2
    \nonumber\\
    & \overset{(c)}{\leq} \alpha^2_n\ \bE\|y_n\|^2_2 + \alpha^2_n\ \bE\|y_n - y_{n-\taun}\|^2_2 
    \nonumber\\
    & \qquad + \bE\|g_n - g_{n-\taun}\|^2_2
    \nonumber\\
    & \qquad  + SA\theta^2\left( e^{2\theta}+e^{\frac{2\theta}{\gamma}\Cm}\right)\bE\|Q_n-Q_{n-\taun}\|^2_2
\end{align}
where $(a)$ uses Young's inequality, and $(c)$ uses the inequality $(p+q)^2\leq 2(p^2+q^2).$

Now, since $Q_k, \frac{\gamma}{\theta}\ln g_k \in \cK'$ for all $k,$ we have
\begin{align}\label{e: diff.Q}
    \|Q_n-Q_{n-\taun}\|_\infty 
    & \overset{(a)}{\leq} \sum_{k=n-\taun}^{n-1}\| Q_{k+1} - Q_k\|_\infty
    \nonumber\\
    & \overset{(b)}{\leq} \sum_{k=n-\taun}^{n-1} \beta_k\left\| -\frac{\gamma}{\theta}\ln g_k - Q_k \right\|_\infty
    \nonumber\\
    &  \overset{(c)}{\leq} 2\Cm\sum_{k=n-\taun}^{n-1} \beta_k
    \nonumber\\
    &  \overset{(d)}{\leq} 2\Cm\taun \beta_{n-\taun} \nonumber\\
    & \overset{(e)}{\leq} 4\Cm\taun \beta_n,
\end{align}
where $(a)$ follows from triangle inequality, $(b)$ follows from~\eqref{e: 2TS}, $(c)$ follows from~\eqref{e: Q.bound}, and $(d)$ follows since $(\alpha_n)$ is non-increasing. Lastly, $(e)$ follows since $\beta_{n-\taun}<2\beta_n$ for $n>\nstr\geq \nstr^{(3)}.$

Likewise,
\begin{align}\label{e: diff.g}
    \|g_n-g_{n-\taun}\|_\infty 
    & \overset{(a)}{\leq} \sum_{k=n-\taun}^{n-1}\| g_{k+1} - g_k\|_\infty
    \nonumber\\
    &  \overset{(b)}{\leq} \sum_{k=n-\taun}^{n-1} \alpha_k\left| \hG(Q_k, z_k) - g_k(s_k,a_k) \right|_\infty
    \nonumber\\
    &  \overset{(c)}{\leq} 2e^{\frac{\theta}{\gamma}\Cm}\sum_{k=n-\taun}^{n-1} \alpha_k
    \nonumber\\
    &  \overset{(d)}{\leq} 2e^{\frac{\theta}{\gamma}\Cm}\taun \alpha_{n-\taun} 
    \nonumber\\
    & \overset{(e)}{\leq} 4e^{\frac{\theta}{\gamma}\Cm}\taun \alpha_n,
\end{align}
where $(a)$ follows from triangle inequality, $(b)$ follows from~\eqref{e: 2TS}, $(c)$ follows from~\cite[Equation 26]{thoppe2026reinforcement}, and $(d)$ follows since $(\alpha_n)$ is non-increasing. Lastly, $(e)$ follows since $2\alpha_{n-\taun}<\alpha_n$ for $n>\nstr\geq \nstr^{(3)}.$

Moreover, since $u\mapsto e^{-\frac{\theta}{\gamma}u}$ is $\left(\frac{\theta}{\gamma} e^{\frac{\theta}{\gamma}\Cm}\right)$-Lipschitz on $[-\Cm,\Cm],$ we have
\begin{align}\label{e: diff.y}
    & \|y_n-y_{n-\taun}\|_\infty 
    \nonumber\\
    & \leq  \|g_n -g_{n-\taun}\|_\infty + \frac{\theta}{\gamma} e^{\frac{\theta}{\gamma}\Cm}\|TQ_n-TQ_{n-\taun}\|_\infty
    \nonumber\\
    & \overset{(a)}{\leq}  \|g_n -g_{n-\taun}\|_\infty + \theta e^{\frac{\theta}{\gamma}\Cm}\|Q_n-Q_{n-\taun}\|_\infty
    \nonumber\\
    & \overset{(b)}{\leq} 4e^{\frac{\theta}{\gamma}\Cm}\ \taun \alpha_n + 4\theta e^{\frac{\theta}{\gamma}\Cm}\Cm\ \taun \beta_n,
\end{align}
where $(a)$ follows since $T$ is a $\gamma$-contraction w.r.t $\|\cdot\|_\infty,$ and $(b)$ follows from~\eqref{e: diff.Q} and~\eqref{e: diff.g}.

To bound $\alpha_n\bE T_n,$ we combine~\eqref{e: bound.T1},~\eqref{e: bound.T2},~\eqref{e: bound.T3},~\eqref{e: diff.Q},~\eqref{e: diff.g}, and~\eqref{e: diff.y}, and use $\|\cdot\|_2 \leq \sqrt{SA}\|\cdot\|_\infty$ to get
\begin{align}\label{e: bound.Tn}
    \alpha_n\bE T_n & \leq 3\alpha^2_n\bE\|y_n\|^2_2 
    \\
    & + 8SA e^{\frac{2\theta}{\gamma}\Cm}\ \taun^2 \alpha^2_n + 8SA\theta^2 e^{\frac{2\theta}{\gamma}\Cm}\Cm^2\ \taun^2 \beta^2_n 
    \nonumber\\
    & + 32SA\ e^{\frac{2\theta}{\gamma}\Cm} \alpha^2_n 
    \nonumber\\
    & + SA e^{\frac{2\theta}{\gamma}\Cm}\taun^2 \alpha^2_n
    \nonumber\\
    & + (SA)^2\theta^2\left( e^{2\theta}+e^{\frac{2\theta}{\gamma}\Cm}\right)\Cm^2\taun^2 \beta^2_n
    \nonumber\\
    & \leq  3\alpha^2_n\bE\|y_n\|^2_2 + C_{2,1}\taun^2 \alpha^2_n,
\end{align}
where we use $(SA)\leq (SA)^2,$ $1\leq \taun^2$ and $\beta^2_n\leq \alpha^2_n$ for the last inequality, and define
\begin{multline*}
    C_{2,1} 
    := (SA)^2
    \\
    \times\left[41e^{\frac{2\theta}{\gamma}\Cm}\taun^2 \alpha^2_n
    +  \theta^2\Cm^2\left(e^{2\theta}+2e^{\frac{2\theta}{\gamma}\Cm}  \right)\right].
\end{multline*}

Finally, plugging in~\eqref{e: bound.Tn} into~\eqref{e: yn.intermediate} gives us
\begin{align}\label{e: yn.one.step}
    \bE\|y_{n+1}\|^2_2 
    & \overset{(a)}{\leq} \big(1-2\delta\alpha_n + 5\alpha^2_n\big)\|y_n\|^2_2
    + C_2\taun^2 \alpha^2_n
    \nonumber\\
    & \overset{(a)}{\leq}  \big(1-\delta\alpha_n \big)\|y_n\|^2_2
    + C_2\taun^2 \alpha^2_n,
\end{align}
where $(a)$ follows as we take $1\leq \taun^2$ and use the definition of $C_2$ from Table~\ref{tab:constants-2TS}.
%
% \begin{align*}
%     %
%     C_2: = C_{2,1} + (4SA)\ e^{\frac{2\theta}{\gamma}\Cm}(8 + 3\Cm^2 ),
%     %
% \end{align*}
%
while $(b)$ follows since $5\alpha^2_n\leq \delta\alpha_n,$ for $n\geq \nstr\geq \nstr^{(2)}.$ 

We are now ready to bound the convergence rate for $(y_n).$ Applying~\eqref{e: yn.one.step} recursively, we have for $N>\nstr,$
\begin{align}
    \bE\|y_N\|^2_2 & \leq \Galpha_{\nstr:N}\ y_{\nstr} + C_2\sum_{n=\nstr}^{N-1}\taun^2\alpha^2_n\ \Galpha_{n+1:N}
    \nonumber\\
    & \leq \Galpha_{\nstr:N}\ y_{\nstr} + \tau_N^2C_2\sum_{n=\nstr}^{N-1}\alpha^2_n\ \Galpha_{n+1:N},
\end{align}
where for $\nstr\leq n_1<n_2,$ we define
\begin{equation}
    \Galpha_{n_1:n_2}:= \prod_{n=n_1}^{n_2-1}(1-\delta\alpha_n).
\end{equation}
Then, using Lemma~\ref{lem: Gnn.bound}, we have for $N>\nstr,$
\begin{align}
    & \bE\|y_N\|^2_2 
    \nonumber\\
    & \leq \bE\|y_{\nstr} \|^2_2\left(\frac{\nstr+1}{N+1} \right)^{c\ln N}
    +  8C_2\frac{\tau_N^2\ln N}{\delta N}
    \nonumber\\
    & \leq 2e^{\frac{2\theta}{\gamma}\Cm}\left( 1+ \theta^2\Cm^2\right)\left(\frac{\nstr+1}{N+1} \right)^{c\ln N}
    + 8C_2\frac{\tau_N^2\ln N}{\delta N}
    \nonumber\\
    & \overset{(a)}{\leq } C'_y \frac{\tau^2_N\ln N}{N},
\end{align}
where $C'_y$ is as defined in Table~\ref{tab:constants-2TS}.

We use $\tau_N \leq \ln N[2\ln C + 4]/\ln(1/\rho)$ and the definition of $C_g$ from Table~\ref{tab:constants-2TS} to get
\begin{align}\label{e: rate.gn}
    \bE\|y_N\|^2_\infty \leq \bE\|y_N\|^2_2 \leq C^2_g\frac{\ln^3 N}{N},
\end{align}
The claim follows for $(y_N)$ by Jensen's inequality.

Finally, we use the convergence rates derived in~\eqref{e: rate.gn} to bound the slower updates, i.e., $(Q_n).$ Since the faster iterate $g_n$ tracks a moving target, this results in an additional tracking error.

Recall from~\eqref{e: 2TS}, we can write the $(Q_n)$ update as
\begin{align}
    (Q_{n+1} - \Qstr) 
    & = (Q_n - \Qstr) + \beta_n\left[ TQ_n - Q_n\right] + \beta_n\epsp_n
    \nonumber\\
    & = (1-\beta_n)(Q_n - \Qstr) 
    \nonumber\\
    & \qquad+ \beta_n\left[ TQ_n - T\Qstr\right] + \beta_n\epsp_n,
\end{align}
where we define the tracking error $\epsp_n:=-\frac{\gamma}{\theta}\ln g_n - TQ_n.$ Since, $T$ is a $\gamma$-contraction w.r.t $\|\cdot\|_\infty,$ we get
\begin{align}\label{e: desired.Q1}
    & \| Q_{n+1} - \Qstr\|_\infty 
    \nonumber\\
    &  \leq  (1-\alpha_n)\|Q_n - \Qstr\|_\infty 
    \nonumber\\
    & \quad+ \beta_n\left \|TQ_n - T\Qstr\right\|_\infty + \beta_n\|\epsp_n\|_\infty
    \nonumber\\
    &\quad\leq (1-(1-\gamma)\alpha_n)\|Q_n - \Qstr\|_\infty + \beta_n\|\epsp_n\|_\infty.
\end{align}
Using the fact that on any interval $[c,d]$ with $c>0,$ the function $u\mapsto \ln(u)$ is $(1/c)$-Lipschitz and $g_n, e^{-\frac{\theta}{\gamma}TQ_n} \in [\exp(-\theta\Cm/\gamma), \exp(\theta\Cm/\gamma)]^{SA},$ we bound the tracking error term in~\eqref{e: desired.Q1} as follows:
\begin{align}
    \|\epsp_n\|_\infty & = \left\|\frac{\gamma}{\theta}\ln g_n + TQ_n\right\|_\infty 
    = \frac{\gamma}{\theta}\left\|\ln g_n - \ln e^{ -\frac{\theta}{\gamma}TQ_n} \right\|_2 
    \nonumber\\
    & \leq \frac{\gamma}{\theta} e^{\frac{\theta}{\gamma}\Cm}\|g_n - e^{-\frac{\theta}{\gamma}TQ_n}\|_2.
\end{align}
Plugging the above bound into~\eqref{e: desired.Q} and taking expectation gives for $n\geq \nstr.$
\begin{align}\label{e: desired.Q}
    \bE\| Q_{n+1}& - \Qstr\|_\infty 
    \nonumber\\
    & \leq (1-(1-\gamma)\beta_n)\bE\|Q_n - \Qstr\|_\infty
    \nonumber\\
    & + \beta_n \frac{\gamma}{\theta} e^{\frac{\theta}{\gamma}\Cm}\bE\|y_n\|_2
    \nonumber\\
    &\leq (1-(1-\gamma)\beta_n)\bE\|Q_n - \Qstr\|_\infty 
    \nonumber\\
    & +  \frac{\gamma \sqrt{SA}}{\theta} e^{\frac{\theta}{\gamma}\Cm}{C_g}\beta_n \frac{\ln^{3/2} n}{\sqrt n}
    \nonumber\\
    & = (1-(1-\gamma)\beta_n)\bE\|Q_n - \Qstr\|_\infty 
    \nonumber\\
    & +  C_3 \beta_n \frac{\ln^{3/2} n}{\sqrt n},
\end{align}
where $C_3$ is as defined in Table~\ref{tab:constants-2TS}. Applying the above one-step bound recursively gives us for all $N>\nstr,$
\begin{align}
    \bE\|Q_N & -\Qstr\|_\infty 
    \nonumber\\
    &\leq 
    \Gbeta_{\nstr:N}\ \bE\|Q_{\nstr}-\Qstr\|_\infty 
    \nonumber\\
    & \qquad + C_3\ln^{3/2}\ln N\sum_{n=\nstr}^{N-1}\beta^2_n\  \Gbeta_{n+1:N}
    \nonumber\\
    & \leq 2\Cm\ \Gbeta_{\nstr:N}
    \nonumber\\
    & \qquad + C_3\ln^{3/2}\ln N\sum_{n=\nstr}^{N-1}\beta^2_n\  \Gbeta_{n+1:N},
\end{align}
where 
\begin{align}
    \Gbeta_{n_1:n_2}:= \prod_{n=n_1}^{n_2-1}\big(1-(1-\gamma)\beta_n \big).
\end{align}
Then, using Lemma~\ref{lem: Gnn.bound}, we have 
\begin{align*}
    &\bE\|Q_N-\Qstr\|_\infty
    \nonumber\\
    & \leq 2\Cm\left(\frac{\nstr+1}{N+1}\right)^{b(1-\gamma)}
    + C_3C_{\Gamma,1}(1-\gamma,\beta)\frac{\ln^{3/2}}{\sqrt{N+1}}
    \nonumber\\
    & \leq C_Q^2\frac{\ln^{3/2}}{\sqrt{N+1}},
\end{align*}
where the last inequality uses $1\leq \ln^{3/2}N$ and the definition of $C_Q$ from Table~\ref{tab:constants-2TS}. 

This completes the proof of Theorem~\ref{thm: 2TS}. \hfill\qed

\section{Technical Lemmas}
\label{appendix: tech}

\begin{lemma}\label{lem: Gnn.bound}
    Given step size sequence $(\alpha_n),$ define for all $\nstr\leq n_1<n_2:$
    \[
        \Galpha_{n_1:n_2}:=\prod_{n=n_1}^{n_2-1}\left( 1- c\alpha_n\right),
    \]
    where $c>0,$ is some constant such that $c\alpha_n<1,$  for $n\geq \nstr.$ Then, the following holds for all $\nstr\leq k<N:$
    \begin{enumerate}
        \item When $\alpha_n=\frac{1}{(n+1)^\alpha},$ with $\alpha\in (1/2,1),$ we have
        \begin{multline*}
            \Galpha_{k:N} \leq e^{\frac{c}{1-\alpha}(k+1)^{1-\alpha}} e^{-\frac{c}{1-\alpha}N^{1-\alpha} }
            \\
            \quad \text{and} \quad \sum_{n=k}^{N-1}\alpha^2_n \ \Galpha_{n+1:N} \leq C_{\Gamma}(c,\alpha)\ \alpha_N.
        \end{multline*}
        \item When $\alpha_n=\frac{a}{n+1},$ with $ac>1,$ we have
        \begin{multline*}
            \Galpha_{k:N} \leq \left(\frac{k+1}{N+1}\right)^{ac}
            \\
            \quad \text{and} \quad \sum_{n=k}^{N-1}\alpha^2_n \ \Galpha_{n+1:N} \leq \frac{C_{\Gamma,1}(c,a)}{N+1}.
        \end{multline*}
        \item When $\alpha_n=\frac{\ln n}{n+1},$ we have for $k>e^{1/c},$
        \begin{multline*}
            \Galpha_{k:N} \leq \left(\frac{k+1}{N+1} \right)^{\frac{c\ln N}{2}}
            \\
            \quad \text{and}\quad  \sum_{n=k}^{N-1}\alpha^2_n \ \Galpha_{n+1:N} \leq  \frac{16\e^{c/e}ln N}{cN}.
        \end{multline*}
    \end{enumerate}
    Moreover, let $\alpha_n = \frac{1}{(n+1)^\alpha},$ with $\alpha\in (1/2,1),$ define 
    \begin{multline}
        G_{n_1:n_2} := \prod_{n=n_1}^{n_2} \big[ I - \alpha_n (I-B) \big],
        \\
        \text{and} \quad S_{n_1:n_2} := \alpha_{n_1}\sum_{n=n_1+1}^{n_2-1} G_{n_1+1:n}.
    \end{multline}
\end{lemma}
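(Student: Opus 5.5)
All parts rest on the elementary inequality $1-u\le e^{-u}$. It gives $\Galpha_{k:N}\le\exp\big(-c\sum_{n=k}^{N-1}\alpha_n\big)$, after which I compare the partial sum of stepsizes with an integral. For $\alpha_n=(n+1)^{-\alpha}$ I will use $\sum_{n=k}^{N-1}(n+1)^{-\alpha}\ge\int_{k+1}^{N+1}t^{-\alpha}\,dt\ge\frac{1}{1-\alpha}\big(N^{1-\alpha}-(k+1)^{1-\alpha}\big)$. For $\alpha_n=a/(n+1)$ I will use $\sum_{n=k}^{N-1}\frac{1}{n+1}\ge\ln\frac{N+1}{k+1}$, which yields $\big(\frac{k+1}{N+1}\big)^{ac}$. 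For $\alpha_n=\frac{\ln n}{n+1}$ I will lower bound $\ln n$ on the range of summation and compare with $\int\frac{\ln t}{t}\,dt=\frac12\ln^2 t$. The factor $\frac{1}{2}\ln^2(N+1)-\frac12\ln^2(k+1)\ge\frac12\ln N\cdot\ln\frac{N+1}{k+1}$ produces the exponent $\frac{c\ln N}{2}$. The condition $k>e^{1/c}$ will be used to ensure $c\ln k>1$, so that the subsequent weighted sum is summable in the right way.

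For the weighted sums $\sum_{n=k}^{N-1}\alpha_n^2\,\Galpha_{n+1:N}$, I will follow the standard argument of \cite[Theorem~A.3]{chen2021lyapunov}. In the $a/(n+1)$ case, plug in the product bound to get $\frac{a^2}{(N+1)^{ac}}\sum_n (n+2)^{ac}(n+1)^{-2}$. Since $ac>1$, the summand grows like $n^{ac-2}$, so the sum is at most a constant (involving $2^{ac}$ and $1/(ac-1)$) times $(N+1)^{ac-1}$; this gives $C_{\Gamma,1}(c,a)/(N+1)$. In the polynomial case, $ac>1$ is no longer available, so I will split the sum at $n\approx N/2$:
\begin{itemize}
\item On the tail $n\ge N/2$, I will use $\alpha_n^2\le 2^{2\alpha}\alpha_N\alpha_n$ together with $\sum_n c\alpha_n\Galpha_{n+1:N}\le\sum_n(\Galpha_{n+1:N}-\Galpha_{n:N})/(1-c\alpha_n)\lesssim 1$, which telescopes and gives $O(\alpha_N)$.
\item On the head $n<N/2$, I will bound $\Galpha_{n+1:N}\le\exp\big(-\frac{c}{1-\alpha}(N^{1-\alpha}-(N/2+1)^{1-\alpha})\big)$, which is smaller than any polynomial in $N$. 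Multiplying by $\sum\alpha_n^2<\infty$ and using that $y^{\alpha}e^{-c'y^{1-\alpha}}$ is bounded then gives another $O(\alpha_N)$.
\end{itemize}
The $\ln n/(n+1)$ case is handled like the $a/(n+1)$ case, with an extra $\ln N$ carried through: the effective exponent $\frac{c\ln N}{2}$ exceeds $1$ once $k>e^{1/c}$, and $\alpha_n^2\le\ln^2 N/(n+1)^2$.

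For the matrix part (the uniform bound on $S_{n_1:n_2}$ used in Lemmas~\ref{lem: PR.noise} and~\ref{lem: PR.nlin}), I will take $B=B^*$ from Lemma~\ref{lem: linearize}. Since $B^*$ has nonnegative entries and $\|B^*\|_\infty\le 1-\delta(1-\gamma)$, we get $\|I-\alpha_n(I-B^*)\|_\infty\le(1-\alpha_n)+\alpha_n\|B^*\|_\infty=1-\alpha_n\delta(1-\gamma)$. Submultiplicativity then reduces $\|G_{n_1:n_2}\|_\infty$ to the scalar product with $c=\delta(1-\gamma)$, and part~1 applies. Consequently
\[
\|S_{n_1:n_2}\|_\infty\le\alpha_{n_1}\sum_{n>n_1}\exp\Big(-\tfrac{c}{1-\alpha}\big((n+1)^{1-\alpha}-(n_1+2)^{1-\alpha}\big)\Big).
\]
Comparing this with $\int_{n_1}^\infty e^{-\frac{c}{1-\alpha}(t^{1-\alpha}-n_1^{1-\alpha})}\,dt$ and substituting $u=t^{1-\alpha}$ shows the integral is $O(n_1^{\alpha}/c)=O(1/(c\alpha_{n_1}))$. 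Hence $\|S_{n_1:n_2}\|_\infty\le K_G$ with $K_G$ depending only on $c$ and $\alpha$.

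I expect the main obstacle to be the polynomial-stepsize weighted sum, and the analogous estimate for $S$. The difficulty is getting an $\alpha_N$ (respectively $O(1)$) bound with explicit constants, because the exponential decay is only of stretched type. The head/tail split with the telescoping identity is the step I would try first; I would fall back on the explicit maximization of $y^{p}e^{-c'y^{1-\alpha}}$ already used in \eqref{e: exp.bound} to control the head.
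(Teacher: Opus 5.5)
Your cases 1 and 2 and the matrix part are sound. For the polynomial weighted sum you take a different route from the paper:
\begin{itemize}
\item The paper factors out $\max_n \alpha_n e^{-\frac{c}{2}\sum_{m=n+1}^{N-1}\alpha_m}=O(\alpha_N)$, using the split at $N/2$. It then bounds the leftover $\sum_n \alpha_n e^{-\frac{c}{2}\sum_{m=n+1}^{N-1}\alpha_m}$ by a Riemann-sum comparison.
\item You split the sum itself. On the tail you can use the exact identity $\sum_{n=k}^{N-1} c\alpha_n\Galpha_{n+1:N}=1-\Galpha_{k:N}$, so your division by $1-c\alpha_n$ is unnecessary. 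On the head you use $\sum_n\alpha_n^2<\infty$ against a stretched-exponential factor.
\end{itemize}
Your version is cleaner and avoids the paper's Riemann-sum step. That step evaluates an increasing integrand at right endpoints, so the sum dominates the integral rather than the reverse, and repairing it costs a factor $e^{c/2}$. The price is a different explicit constant and the use of $\alpha>1/2$, which is assumed anyway. Your integral estimate for $\|S_{k:N}\|_\infty$ also supplies an argument that the paper only asserts.

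The gap is case 3, which you dispose of as ``like the $a/(n+1)$ case''. There the exponent $p_N:=\frac{c\ln N}{2}$ in $\Galpha_{n+1:N}\le\big(\frac{n+2}{N+1}\big)^{p_N}$ grows with $N$. So the step that gave your constant $2^{ac}$, namely $(n+2)^{ac}(n+1)^{-2}\le 2^{ac}(n+1)^{ac-2}$, now gives $2^{p_N}=N^{(c\ln 2)/2}$. The resulting bound is of order $N^{(c\ln 2)/2}\ln N/N$, not $\ln N/N$, and it does not even decay once $c\ln 2\ge 2$.

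The missing idea is to shift the quadratic factor instead of the growing power, as the paper does. Use $(n+1)^{-2}\le 4(n+2)^{-2}$, so the summand is at most $4\ln^2 N\,(n+2)^{p_N-2}(N+1)^{-p_N}$. Comparing with $\int^{N+2}s^{p_N-2}\,ds$ then gives $\frac{4\ln^2 N}{(p_N-1)(N+2)}\big(\frac{N+2}{N+1}\big)^{p_N}$, where $\big(\frac{N+2}{N+1}\big)^{p_N}\le e^{c/(2e)}$.

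Separately, your claim that $k>e^{1/c}$ makes $p_N>1$ is false: it only gives $c\ln N>1$, i.e.\ $p_N>\frac12$. Even $p_N>1$ would not suffice for the stated form. Turning $\frac{\ln^2 N}{p_N-1}$ into $\frac{4\ln N}{c}$ needs $p_N-1\ge p_N/2$, i.e.\ $\ln N\ge 4/c$. You need an extra largeness condition on $N$ of this kind; the paper's final inequality also uses it tacitly.
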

where $B$ is a matrix satisfying $\|B\|_\infty\leq c_B<1.$  Then, for we have for all $\nstr\leq k<N$
\begin{multline*}
        \|G_{k:N}\|_\infty \leq e^{\frac{1-c_B}{1-\alpha}(k+1)^{1-\alpha}} e^{-\frac{1-c_B}{1-\alpha}N^{1-\alpha} }
        \\
        \text{and} \quad \sum_{n=k}^{N-1}\alpha^2_n \|G_{n+1:N}\|_\infty \leq C_{\Gamma}(1-c_B,\alpha)\ \alpha_N,
\end{multline*}
Furthermore, $\|S_{k:N}\| \leq K_G,$ for some constant $K_G>0.$
\begin{proof}[\textbf{Proof of Lemma~\ref{lem: Gnn.bound}}]
    Using the fact that $1-x<e^{-x},$ we can write
    \begin{align}\label{e: uppe.Gnn}
        \Galpha_{k:N} \leq e^{-c \sum_{n=k}^{N-1}\alpha_n}.
    \end{align}
    \paragraph{Case 1:} Taking $\alpha_n=\frac{1}{(n+1)^\alpha},$ with $\alpha\in (1/2,1)$ gives us
    \begin{align}
        \Galpha_{k:N} & \leq e^{{-c \sum_{n=k}^{N-1}\frac{1}{(n+1)^\alpha}}}
        \nonumber\\
        & \leq e^{{-c \int_{k+1}^{N+1}\frac{1}{s^\alpha}ds }}
        \nonumber\\
        & = e^{{-\frac{c}{1-\alpha}[(N+1)^{1-\alpha} - (k+1)^{1-\alpha}] }}
        \nonumber\\
        & \leq e^{\frac{c}{1-\alpha}(k+1)^{1-\alpha}}\ e^{-\frac{c}{1-\alpha}N^{1-\alpha} }.
    \end{align}
    For the next claim, we use~\eqref{e: uppe.Gnn} to get
    \begin{multline}\label{e: Galpha.sum.split}
        \sum_{n=k}^{N-1}\alpha^2_n\ \Galpha_{n+1:N} 
        \leq \sum_{n=k}^{N-1}\alpha^2_n\ e^{-c\sum_{m=n+1}^{N-1}\alpha_m }
        \\
        \leq \left[ \max_{k\leq n<N} f_n\ \right]  \sum_{n=k}^{N-1}\alpha_n\ e^{-\frac{c}{2}\sum_{m=n+1}^{N-1}\alpha_m},
    \end{multline}
    where we define
    \begin{equation}
        f(n):= \alpha_n\ e^{-\frac{c}{2}\sum_{m=n+1}^{N-1}\alpha_m}.
    \end{equation}
    To bound the factor $\max_{k\leq n<N} f_n,$ we consider two cases. First, let $n\geq \ceil{N/2}.$ Then, we have
    \begin{align}
        \frac{f_n}{\alpha_N} \overset{(a)}{\leq} \frac{\alpha_n}{\alpha_N} =
    \left(\frac{N+1}{n+1}\right)^\alpha  \overset{(b)}{\leq} 2^\alpha,
    \end{align}   
    where $(a)$ follows since $e^{-\frac{c}{2}\sum_{m=n+1}^{N-1}\alpha_m}\leq 1,$ and $(b)$ follows since $(n+1)>\frac{N}{2}+1>\frac{N+1}{2}.$
    
    Next, consider the case when $n\leq \floor{N/2}.$ Then, we have
    \begin{align*}
        \sum_{m=n+1}^{N-1}\alpha_m \overset{(a)}{\geq} \sum_{m=\floor{N/2}}^{N-1}\alpha_m \overset{(b)}{\geq} \sum_{m=\floor{N/2}}^{N-1}\alpha_{N-1} > \frac{N}{2}\alpha_{N-1},
    \end{align*}
    where $(a)$ follows since for all $m,$ $\alpha_m>0,$ and $(b)$ follows since $(\alpha_m)$ is decreasing. Consequently, we get
    \begin{align}
        \frac{f_n}{\alpha_N} & \overset{(a)}{\leq} \frac{1}{\alpha_N}\ e^{-\frac{c}{2}\sum_{m=n+1}^{N-1}\alpha_m}
        \leq (N+1)^\alpha\ e^{-\frac{c}{4}N\alpha_{N-1}}
        \nonumber\\
        & \overset{(b)}{<} 2^\alpha \left( N^\alpha\ e^{-\frac{c}{4} N^{1-\alpha}} \right)
        \overset{(c)}{\leq} \left[\frac{4e\alpha}{c(1-\alpha)}\right]^{\frac{\alpha}{1-\alpha}},
    \end{align}
    where $(a)$ uses $\alpha_n<1,$ and $(b)$ uses $N+1<2N.$ Finally, $(c)$ follows as we can show using calculus that $y\mapsto y^\alpha\exp\left(-cy^{(1-\alpha)} \right)$ attains maximum at $y=\big[\frac{\alpha}{c(1-\alpha)}\big]^{1/(1-\alpha)}.$ Hence, we conclude that
    \begin{align}\label{e: bound.factor}
        \left[\max_{k\leq n\leq N} f_n \right]\leq \alpha_N\cdot \max\left\{2^\alpha, \left[\frac{4 e\alpha}{c(1-\alpha)}\right]^{\frac{\alpha}{1-\alpha}} \right\}.
    \end{align}
    To bound the sum in~\eqref{e: Galpha.sum.split}, define $s_n = \sum_{m=0}^{n}\alpha_m$ and $\Delta_n:=[s_n - s_{n-1}].$ Then, for all $0\leq k<N,$
    \begin{align}\label{e: alpha.sum.B}
        \sum_{n=k}^{N-1} \alpha_ne^{-\frac{c}{2}\sum_{m=n+1}^{N-1}\alpha_m }\leq \sum_{n=0}^{N-1} e^{-\frac{c}{2}[s_{N-1} - s_n]}\Delta_n
    \end{align}
    Treating the r.h.s as a Riemann sum, we have
    \begin{align}\label{e: alpha.sum.Riemann}
        \sum_{n=0}^{N-1}&e^{- \frac{c}{2}[s_{N-1} - s_n]}\cdot \Delta_n
        \leq \int_{0}^{s_{N-1}}e^{\frac{c}{2}(s-s_{N-1})}\ ds 
        \nonumber\\
        &= \frac{2}{c}\left(1 - e^{(s_0 - s_{N-1})}\right) \leq \frac{2}{c}.
    \end{align}
    The claim follows for \textbf{Case 1} as we combine~\eqref{e: Galpha.sum.split},~\eqref{e: bound.factor}, and~\eqref{e: alpha.sum.Riemann}.
    
    \paragraph{Case 2:} Taking $\alpha_n=\frac{a}{n+1}$ in~\eqref{e: uppe.Gnn} gives us
    \begin{align}\label{e: G.lin.bound}
        \Galpha_{k:N} & \leq e^{-ac \sum_{n=k}^{N-1}\frac{1}{n+1}}
        \nonumber\\
        & \leq e^{{-ac \int_{k+1}^{N+1}\frac{1}{s}ds }}
        \nonumber\\
        & = e^{{-ac\log(\frac{N+1}{k+1})}} = \left(\frac{k+1}{N+1}\right)^{ac}.
    \end{align}
    Next, using~\eqref{e: G.lin.bound}, we have
    \begin{align}
        \sum_{n=k}^{N-1}\alpha^2_n \Galpha_{n+1:N} 
        & \leq \sum_{n=k}^{N-1}\frac{a^2}{(n+1)^2}\left(\frac{n+2}{N+1}\right)^{ac}
        \nonumber\\
        & \overset{(a)}{\leq} \frac{a^2 2^{ac}}{(N+1)^{ac}} \sum_{n=k}^{N-1} (n+1)^{ac-2}
        \nonumber\\
        & \overset{(b)}{\leq} \frac{a^2 2^{ac}}{(N+1)^{ac}} \int_{k+1}^{N+1} s^{ac-2}ds
        \nonumber\\
        & \overset{(c)}{\leq} \frac{a^2 2^{ac}}{(N+1)^{ac}} \int_{0}^{N+1} s^{ac-2}ds
        \nonumber\\
        & = \frac{a^2 2^{ac}}{(N+1)^{ac}}\frac{(N+1)^{ac-1}}{(ac-1)}
        \nonumber\\
        & = \frac{a^2 2^{ac}}{(ac-1)}\frac{1}{(N+1)}.
    \end{align}
    The claim thus follows for \textbf{Case 2}.

    \paragraph{Case 3:} When $\alpha_n=\frac{\ln (n+1)}{n+1},$~\eqref{e: uppe.Gnn} gives us
    \begin{align}\label{e: G.log.lin.bound}
        \Galpha_{k:N} & \leq e^{-c \sum_{n=k}^{N-1}\frac{\ln(n+1)}{n+1}}
        \nonumber\\
        & \leq e^{{-c \int_{k+1}^{N+1}\frac{\ln s}{s}ds }}
        \nonumber\\
        & = e^{\frac{c}{2}[\ln(k+1))^2 - (\ln(N+1))^2] } 
        \nonumber\\
        & \overset{(a)}{=} \frac{(k+1)^\frac{c\ln (k + 1)}{2}}{(N+1)^\frac{c\ln (N + 1)}{2}} \leq \left(\frac{k+1}{N+1} \right)^\frac{c\ln (N)}{2}.
    \end{align}
    Then, using~\eqref{e: G.log.lin.bound}, we get
    \begin{align}
    \sum_{n=k}^{N-1}\alpha_n^2\Galpha_{n+1:N}
    &\leq
    \sum_{n=k}^{N-1}
    \frac{(\ln n)^2}{(n+1)^2}
    \left(\frac{n+2}{N+1}\right)^{\frac{c\ln N}{2}}
    \nonumber\\
    &\leq
    \frac{4\ln^2 N}{(N+1)^{\frac{c\ln N}{2}}}
    \sum_{n=k}^{N-1}
    (n+2)^{\frac{c\ln N}{2}-2}
    \nonumber\\
    &\leq
    \frac{4\ln^2 N}{(N+1)^{\frac{c\ln N}{2}}}
    \int_{k+2}^{N+2}
    s^{\frac{c\ln N}{2}-2}\,ds
    \nonumber\\
    &\leq
    \frac{4\ln^2 N}{(N+1)^{\frac{c\ln N}{2}}}
    \frac{(N+2)^{\frac{c\ln N}{2}-1}}
    {\frac{c\ln N}{2}-1}
    \nonumber\\
    &\leq
    \frac{4e^{c/2}\ln^2 N}
    {(N+2)\left(\frac{c\ln N}{2}-1\right)}
    \nonumber\\
    &\leq
    \frac{16e^{c/e}\ln N}{cN},
\end{align}

    At last, since $\|B\|_\infty \leq c_B,$ the matrix product $G_{k:N}$ satisfies
    \begin{equation}
        G_{k:N} \leq \prod_{n=k}^{N-1}(1-(1-c_B)\alpha_n).
    \end{equation}
    Hence, the claim follows using previous arguments.
    
    This completes the proof of Lemma~\ref{lem: Gnn.bound}.
\end{proof}

\end{document}